\setlist{nolistsep}
\newtheorem{theorem}{Theorem}
\newtheorem{corollary}{Corollary}
\newtheorem{proposition}{Proposition}
\newcommand{\printfnsymbol}[1]{%
  \textsuperscript{\@fnsymbol{#1}}%
}
\title{Importance Weighted Adversarial Variational Autoencoders for Spike Inference from Calcium Imaging Data}
\author{%
  Daniel Jiwoong Im, Sridhama Prakhya\thanks{equal contribution}, Jinyao Yan\printfnsymbol{1}, Srinivas Turaga, Kristin Branson\\ 
  Janelia Research Campus, HHMI, Virginia\\ 
  \texttt{imd@janelia.hhmi.org} \\
  % examples of more authors
  %\And
  %Sridhama, Prakhya\\
  %Affiliation \\
  %Address \\
  %\texttt{email} \\
  % \AND
  % Coauthor \\
  % Affiliation \\
  % Address \\
  % \texttt{email} \\
  % \And
  % Coauthor \\
  % Affiliation \\
  % Address \\
  % \texttt{email} \\
  % \And
  % Coauthor \\
  % Affiliation \\
  % Address \\
  % \texttt{email} \\
}
\begin{document}

\maketitle

\begin{abstract}
The Importance Weighted Auto Encoder (IWAE) objective has been shown to improve the training of generative models over the standard Variational Auto Encoder (VAE) objective. 
    Here, we derive importance weighted extensions to Adversarial Variational Bayes (AVB) and Adversarial Autoencoder (AAE). These latent variable models use implicitly 
    defined inference networks whose approximate posterior density $q_\phi(z|x)$ cannot be directly evaluated, an essential ingredient for importance weighting. We show 
    improved training and inference in latent variable models with our adversarially trained importance weighting method, and derive new theoretical connections between 
    adversarial generative model training criteria and marginal likelihood based methods. We apply these methods to the important problem of inferring spiking neural activity 
    from calcium imaging data, a challenging posterior inference problem in neuroscience, and show that posterior samples from the adversarial methods outperform factorized 
    posteriors used in VAEs.
\end{abstract}

\section{Introduction}

%\begin{itemize}
%    \item Approximate posterior distribution $q_\phi(z|x)$ and generative distribution $p_\theta(x|z)$ are parametrized by the inference and generative networks with parameters $\phi$ and $\theta$ respectively.
%\end{itemize}

The variational autoencoder (VAE) \cite{Kingma2014vae,Rezende2014} has been used to train deep latent variable based generative models which model a distribution over observations $p(x)$ by latent variables $z$ such that $p(x)=\int dz p_\theta(x|z).p(z)$ using a deep neural network $\theta$ which transforms samples from $p(z)$ into samples from $p(x)$. This model trains the latent variable based generative model using approximate posterior samples from a simultaneously trained recognition network or inference network $q_\phi(z|x)$ to maximize the evidence lower bound (ELBO).

There are two ways to improve the quality of the learned deep generative model. The multi-sample objective used by the importance weighted autoencoder (IWAE) \cite{Burda2015} has been used to derive a tighter lower bound to the model evidence $p(x)$, leading to superior generative models. Optimizing this objective corresponds to implicitly reweighting the samples from the approximate posterior. A second way to improve the quality of the generative model is to explicitly improve the approximate posterior samples generated by the recognition network.

In the VAE framework, the recognition network is restricted to approximate posterior distributions under which the log probability of a sample and its derivatives can be evaluated in close form. The adversarial autoencoder (AAE) \cite{Makhzani2016}, and the adversarial variational Bayes (AVB) \cite{Mescheder2017} show how this constraint can be relaxed, leading to more flexible posterior distributions which are implicitly represented by the recognition network. In this paper, we derive importance weighted adversarial autoencoders of IW-AVB and IW-AAE, thus combining both adversarial and importance weighting techniques for improving probabilistic modeling.

Spike inference is an important Bayesian inference problem in neuroscience \cite{berens2018community}. Calcium imaging methods enable the indirect measurement of neural activity of large populations of neurons in the living brain in a minimally invasive manner. The intracellular calcium concentration measured by fluorescence microscopy of a genetically encoded calcium sensor such as GCaMP6 \cite{chen2013ultrasensitive} is an indirect measure of the spiking activity of the neuron. VAEs have previously been used \cite{speiser2017fast,Aitchison:2017wz} to perform Bayesian inference of spiking activity by training inference networks to invert the known biophysically described generative process which converts unobserved spikes into observed fluorescence time series.

The accuracy of a VAE-based spike inference method depends strongly on the quality of the posterior approximation used by the inference network $q_\phi$. The posterior distribution over the binary latent spike train $s =\{s_1,...,s_T\}$ given the fluorescence time series $f=\{f_1,...,f_T\}$ has previously been approximated \cite{speiser2017fast} using either a factorized Bernoulli distribution (VIMCO-FAC) where $p(s|f)\approx\Pi_{i} q_\phi(s_i|f)$, or as an autoregressive Bernoulli distribution $p(s|f)\approx\Pi_{i} q_\phi(s_i|f,s_1,...,s_{i-1})$ (VIMCO-CORR). As we show, the correlated autoregressive posterior is more accurate, but slow to sample from. In contrast, the factorized posterior allows for fast parallel sampling, especially on a GPU, but ignores correlations in the posterior. Fast inference networks which sample from correlated posteriors over discrete binary spike trains would be a significant advance for VAE-based spike inference.

Fast correlated distributions over time series can be constructed using normalizing flows for continuous random variables \cite{Rezende2015}, but this is considerably harder for discrete random variables \cite{Aitchison:2018vq}. Thus an adversarial approach where an inference network which transforms noise samples into samples from the posterior can be trained without the need to evaluate the posterior likelihood $q(s|f)$ is particularly appealing for modeling correlated distributions over discrete random variables. Here, we show that our adversarially trained inference networks produce correlated samples which outperform the factorized posterior trained in the conventional way as in \cite{speiser2017fast}.

In addition to these practical advances, we derive theoretical results connecting the objective functions optimized by the importance weighted variants of the AVB, AAE, and VAE.
The relationship between the AAE objective and data log likelihood is not fully understood. The AAE has been shown to be a special case of the Wasserstein autoencoder (WAE) under certain restricted conditions \cite{Bousquet2017}. However, we also do not understand the tradeoffs between the standard log-likelihood and penalized optimal transport objectives, and thus further theoretical insight is necessary to fully understand the tradeoffs between the VAE and AAE. 

% In this paper, we introduce importance weighted adversarial variations Bayes (IW-AVB) and
% importance weighted adversarial autoencoder (IW-AAE), which improve the tightness of variational bounds compared to previous methods. Our proposed methods improve the variational bound in both ways described above -- by increasing the flexibility of the inference network (adversarial formulation) and using a better evidence lower bound (importance weighting).
The main contributions of the paper are following:

\begin{enumerate}
    \item We propose IW-AVB and IW-AAE that yield tighter lower bounds on log-likelihood compared to AVB and AAE, and the global solution for maximizes likelihood.
    %We show that both IW-AVB and IW-AAE yield to tighter lower bound  compare to AVB and AAE. Moreover, the global solution to IW-AVB and IW-AAE give sensible solution, where 1) $\theta^*$ that matches the maximum likelihood assignment; 2) $\phi^*$ that matches $q_\phi^{IW*}(z|x)$=$p(z|x)$ for IW-AVB and $\mathbb{I}_{q_\phi^*(x,z)}(x,z)=\mathbb{KL}\left[q_{\phi}^{IW*}(x,z)\|p(x,z)\right]$.
    %\item We show that the global solution of IW-AVB is the global solution to variational lower bound with respect to modified approximate posterior distribution $q_\phi^{IW}$ through an importance weighted construction, and indeed show that it gives tighter bound.
    
    \item We provide theoretical insights into the importance weighted adversarial objective functions. %in terms of log-likelihood $\log p_\theta(x)$.
    In particular, we relate AAE and IW-AAE objectives to log-likelihoods and Wasserstein autoencoder objectives.
    %We show that the gap between the log-likelihood and AAE objective is exactly the difference of mutual information between data $x$ and latent variable $z$ with respect to the approximation joint distribution $q_\phi(x,z)=q_\phi(z|x)p_\mathcal{D}(x)$ plus the Kullback-Leibler (KL) divergence between the $q_\phi(x,z)$ and the true joint distribution $p_\theta(x,z)$. This provides a bridge to understand the relationship between variational and adversarial approaches. 
    
    % KB - is this a correct rewriting?
    %Moreover, we introduce IW-AAE, which extend AAE under importance weighted construction. Lastly,
    %We show the relationship between AAE and IW-AAE through the lens of auto-encoder log-likelihood.% The gap between autoencoder log-likelihood and generative model log-likelihood (i.e.,$\log p_{\theta, \phi}(x)$ - $\log p_{\theta}(x)$).
    
    %\item We empirically demonstrate that IW-AVB and IW-AAE improves the performance of both the AVB and AAE in generative modelling of images and semi-supervised classification of images and time series. 
    \item We develop standard and importance weighted adversarial neural spike inference for calcium imaging data, and show that adversarially trained inference networks outperform existing VAEs using factorized posteriors.
    
\end{enumerate}

\section{Background}

The maximum likelihood estimation of the parameter $\theta$ with
model defined as $p_\theta(x) = \int p_\theta(x|z)p(z)dz$,
where $z$ is a latent variable is in general intractable.
Variational methods maximize a lower bound of the log likelihood. This lower bound is based on approximating the intractable distribution $p(z|x)$ by a tractable
distribution $q_\phi(z|x) \in \mathcal{Q}$ parameterized by variational parameter $\phi$. VAEs maximize the following lower bound of $\log p_\theta(x)$:
$\mathcal{L} = \mathbb{E}_{z\sim q_\phi(\bf{z}|\bf{x})}
            \left[ \log
            \frac{p_\theta(x,z)}{q_\phi(z|x)}\right].$
To make the relationship with proposed methods clear, we write this as
\begin{align}
    \mathcal{L}_{\text{VAE}} := \mathbb{E}_{z_1,\cdots, z_k\sim q_\phi(z|x)}
            \left[ \frac{1}{k}\sum^{k}_{i=1} \log 
            \frac{p_\theta(x,z_i)}{q_\phi(z_i|x)}\right].
\end{align}
We do this for all criteria going forward. 

To efficiently optimize this criterion with gradient descent, VAEs~\cite{Kingma2014vae,Rezende2014} 
define the approximate posterior $q_\phi(z|x)$ such that the $z$ is a differentiable
transformation $g_\phi(\epsilon,x)$ of an noise variable $\epsilon$.
It is common to assume $z = g_\phi^\mu(x) + diag(g_\phi^\sigma(x)) \epsilon$ and $\epsilon \sim \mathcal{N}(0,I)$, and for $g_\phi$ to be a deep network with weights $\phi$.
%defines the approximate posterior $q(z|x)$ such that the random
%variable $z\sim q_\phi(z|x)$ is a differentiable
%transformation $g_\theta(\epsilon,x)$ of an noise variable $\epsilon$.
%For example, $g_\theta(\epsilon,x)$ outputs mean and variance of
%Gaussian and $\epsilon$ is drawn from Normal distribution.
%The main advantage of this is that we can efficiently optimize $\theta$ 
%and $\phi$ using stochastic gradient descent methods.

%Assuming $q_\phi$ to be a Gaussian distribution can be overly restrictive 
Requiring that $\log q_\phi$ can be analytically evaluated restricts the class $\mathcal{Q}$ and is a limitation to such approaches. Adversarial variational Bayes (AVB) \cite{Mescheder2017} maximizes the variational lower
bound by implicitly approximating KL divergence between 
approximate posterior $q_\phi(z|x)$ and the prior distribution $p(z)$
by introducing third neural network, $T_\psi$.
This neural network, known as the discriminator, implicitly estimates 
$\log p(z) - \log q_{\phi}(z|x)$.
\begin{align}
\mathcal{L}_{\text{AVB}} :=&
       \mathbb{E}_{z_1,\cdots, z_k\sim q_\phi}
            \bigg[ \frac{1}{k}\sum^{k}_{i=1} \bigg(\log 
            p_\theta(x|z_i)
            - T^*(x,z_i)\bigg)\bigg]\\
    T^*(x,z) =& \log q_\phi(z|x) - \log p(z) \nonumber \\
    \approx& \max_{\psi}
        \mathbb{E}_{x\sim {p_{\mathcal{D}}}}\left[
            \mathbb{E}_{q_\phi(z|x)} \bigg[ \log \sigma (T_\psi(x,z)) \right]
            + \mathbb{E}_{p(z)} \log (1- \sigma(T_\psi(x,z)))\bigg] 
        %&\leq \max_{\psi}\left[ 
        %\mathbb{E}_{p_{\mathcal{D}}(x)}\mathbb{E}_{q_\phi(z|x)}
        %\log \sigma (T_\psi(x,z)) + 
        %\mathbb{E}_{p_{\mathcal{D}}(x)}\mathbb{E}_{p(z)}
        %\log (1- \sigma(T_\psi(x,z)))\right]
    \label{eqnTstar_xz}
\end{align}
The three parametric models $p_\theta$, $q_\phi$ and $T_\psi$ are
jointly optimized using adversarial training.
Unlike VAE and IWAE, in this framework, we can make arbitrarily flexible approximate distributions $q_\phi$.

The adversarial autoencoder (AAE) \cite{Makhzani2016} is similar, except that the discriminative network $T_\psi$ depends only on $z$, instead of on $x$ and $z$. AAE objective minimizes the following objective:
\begin{align}
    \mathcal{L}_{\text{AAE}} :=&
        \mathbb{E}_{z_1,\cdots, z_k\sim q_\phi(\bf{z}|\bf{x})}
            \left[ \frac{1}{k}\sum^{k}_{i=1} \left(\log 
            p_\theta(x|z_i) - T^*(z_i)\right)\right] \\
    T^*(z) =& \log \int q_\phi(z|x)p(x)dx - \log p(z) \nonumber \\
    \approx& \max_{T_\psi}
        \mathbb{E}_{p_{\mathcal{D}}(x)}\left[
            \mathbb{E}_{q_\phi(z|x)} \bigg[ \log \sigma (T_\psi(z)) \right]\bigg]  + 
        \mathbb{E}_{p(z)} \log (1- \sigma(T_\psi(z))) 
    \label{eqnTstar_z}
\end{align}
%AVB encourages $q_\phi(z|x)$ to be close to true posterior $p_\theta(z|x)$
%but AAE only enforces the $\int q_\phi (z|x)p_\mathcal{D}(x)dx \approx p(z)$.
%There are advantages and disadvantages to both objectives 
%and one can be better than the other depending on the task.

%2-Wasserstein distance between the data distribution 
%and the model distribution \cite{Bousquet2017}
AAE replaces the KL divergence between the approximate posterior
and prior distribution in $\mathcal{L}_{\text{VAE}}$ with an adversarial loss
that tries to minimize the divergence between the aggregated posterior 
$\int q_\phi (z|x)p_\mathcal{D}(x)dx$ and the prior distribution $p(z)$.

\section{Importance Weighted Adversarial Training}
The importance weighted autoencoder (IWAE) \cite{Burda2015} provides a tighter 
lower bound to $\log p_\theta(x)$,
\begin{align} 
       \mathcal{L}_{\text{IWAE}} := \mathbb{E}_{z_1,\cdots, z_k\sim q_\phi(\bf{z}|\bf{x})}
            \left[ \log \left(\frac{1}{k}\sum^{k}_{i=1} 
            \frac{p_\theta(x,z_i)}{q_\phi(z_i|x)}\right)\right]
            \label{eqn:iwae}.
\end{align}
{\em Burda et al.} \cite{Burda2015} show that 
$\mathcal{L}_{\text{VAE}}=\mathcal{L}_{\text{IWAE},k=1} \leq \mathcal{L}_{\text{IWAE},k=2} \leq \ldots \leq \log p(x)$,
and $\mathcal{L}_{\text{IWAE}}$ approaches $\log p_\theta(x)$ as $k \rightarrow \infty$.

\subsection{IW-AVB and IW-AAE}
    In AVB, generative adversarial training on
        joint distributions between data and latent variables 
        is applied to the variational lower bound $\mathcal{L}_{AVB}$. 
        In this work, we propose applying it to the importance weighted lower bound of 
        $\log p(x)$,
        \begin{align}
               \mathcal{L}_{\text{IW-AVB}} := \mathbb{E}_{z_1,\cdots, z_k\sim q_\phi(\bf{z}|\bf{x})}
                    \left[ \log \frac{1}{k}\sum^{k}_{i=1} p(x|z_i) 
                    \left(\exp (-T^*(x,z_i))\right)  \right]
                     \label{eqn:aiwbo}
            %T^*(x,z) &= \log q_\phi(z|x) - \log p(z) \nonumber \\
            %&= \max_{T}\left[ 
            %    \mathbb{E}_{p_{\mathcal{D}}(x)} 
            %    \left[\mathbb{E}_{q_\phi(z|x)}
            %    \log \sigma (T(x,z)) + \mathbb{E}_{p(z)}
            %    \log \left(1- \sigma(T(x,z))\right)\right] \right] 
        \end{align}
        where $T^*(x,z)$ is defined as in Equation~\ref{eqnTstar_xz}.
        We call this Importance Weighted Adversarial Variational Bayes bound (IW-AVB).
        As $T^*(x,z)= - \log \frac{p(z)}{q_\phi(z|x)}$, 
        $\mathcal{L}_{\text{IW-AVB}}$ as $\mathcal{L}_{\text{IWAE}}$.

        The main advantage of IW-AVB over AVB is that, when the true 
        posterior distribution is not in the class of approximate posterior functions (as is generally the case), IW-AVB uses a tighter lower bound 
        than AVB \cite{Burda2015}.
        %In practice, it is not usually the case that $q_\phi(z|x)$ contains the true posteriors except for the most trivial models.

    Similarly, we can apply importance weighting to improve AAE:
        \begin{align} 
            \mathcal{L}_{\text{IW-AAE}} := 
                \mathbb{E}_{z_1,\ldots, z_k\sim q_\phi(\bf{z}|\bf{x})}
                    \left[ \log 
                    \frac{1}{k}\sum^{k}_{i=1} \left(p(x|z_i) \exp (-T^*(z_i))\right)\right] \label{eqn:IW-AAE_org}
            %T^*(z) &= \log \int q_\phi(z|x)p(x)dx - \log p(z) \nonumber\\
            %&= \max_{T}
            %    \mathbb{E}_{p_{\mathcal{D}}(x)}\left[
            %        \mathbb{E}_{q_\phi(z|x)} \left[ \log \sigma (T(z)) \right]+ 
            %    \mathbb{E}_{p(z)} \log (1- \sigma(T(z)))\right] \nonumber
        \end{align}
    where $T^*(z)$ is defined as in Equation~\ref{eqnTstar_z}.

IW-AVB and IW-AAE objectives can be described as a framework of 
minimax adversarial game between three neural networks, the
generative network $p_\theta(x|z)$, inference network $q_\phi(z|x)$,
and discriminative network $T_\psi(x,z)$.
%($T_\psi(z)$ for IW-AAE).
The inference network maps input $x$ to latent space $Z$, and
the generative network maps latent samples to the the data space $X$.
Both inference and generative networks are jointly trained to minimize the
reconstruction error and KL divergence term in $D_{\text{KL}}(q_\phi(z|x)\|p(z))$.
The discriminator network $T_\psi(x,z)$ differentiates samples from the joint distribution between 
data and approximate posterior distribution (positive samples)
versus the samples that are from the joint over data and prior 
latent distribution (negative samples).

    %For similar reasoning as IW-AVB, we update the
    %    parameters of generative network on importance
    %    weighted bound objective and update the
    %    parameters of inference network on variational bound objective. 

    Recent work \cite{Rainforth2018} has shown that optimizing the importance weighted bound can  
        degrade the overall learning process of the inference network because the 
        signal-to-noise ratio of the gradient estimates  
        %\begin{align*}
        $SNR_{k}(\eta) = \left| \frac{\mathbb{E}[\Delta_{k}(\eta)}{\sigma[\Delta_{k}(\eta)]} \right|$
        %\end{align*}
        converges at the rate of $O(\sqrt{k})$ and $O(\sqrt{1/k})$ 
        for generative and inference networks, respectively
        %KB: this equation doesn't have an \eta on the right side, do you mean
        % $\Delta_{k}(\eta) = \nabla_{\eta} \log \frac{1}{k}\sum^{k}_{i=1} \frac{p_\theta(x,z_i)}{q_\phi(z_i|x)}
        ($\Delta_{k}(\eta) = \nabla_{\eta} \log \frac{1}{k}\sum^{k}_{i=1} \frac{p_\theta(x,z_i)}{q_\phi(z_i|x)}$ is the gradient estimate of $\eta$).
        %$\theta$ and $\phi$).
        The $SNR_{k}(\phi)$ converges to 0 for inference network as $k \rightarrow \infty$, and the gradient estimates of $\phi$ become completely random.
        To mediate this, we apply the importance weighted bound for updating 
        the parameter of generative network $p_\theta(x|z)$ and 
        variational lower bound for updating the parameters of inference network $q_\phi(z|x)$.
        Hence, we maximize the following: 
        \begin{align}
            &\max_{\theta} \mathbb{E}_{z_1,\cdots, z_k\sim q_\phi(\bf{z}|\bf{x})}
                    \left[ \log \frac{1}{k}\sum^{k}_{i=1} p_\theta(x|z_i) \exp \left(-T^*(x,z_i)\right)\right]\nonumber\\
            &\max_{\phi} \mathbb{E}_{z_1,\cdots, z_k\sim q_\phi(\bf{z}|\bf{x})}
                    \left[ \frac{1}{k}\sum^{k}_{i=1} \log p_\theta(x|z_i) - T^*(x,z_i) \right].
            \label{eqn:aiwb2b}
        \end{align}
    We do this for IW-AAE as well.
    
We alternate between updating inference-generative pair, and adversarial 
discriminator $T_\psi$. The training procedures for IW-AVB and IW-AAE 
are shown in Algorithm~\ref{algo:IW-AVB} and \ref{algo:IW-AAE}. 
    
%\begin{multicols}{2}
\begin{figure}[t]
%\begin{wrapfigure}{r}{0.5\textwidth}
\vspace{-0.7cm}
%\begin{minipage}{0.5\textwidth}
    \begin{algorithm}[H]
    \caption{IW-AVB}
    \label{algo:IW-AVB}
    \begin{algorithmic}[1]
        \State Initialize $\theta$, $\phi$, and $\psi$. 

        \While {$\phi$ has not converged}
            \State /\ /\ $K$ latent samples for $N$ data points.
            \State Sample $\lbrace x^{(1)}, \ldots, x^{(N)} \rbrace \sim p_{\mathcal{D}}(x)$. 
            \State Sample $\lbrace \epsilon^{(1)}, \ldots, \epsilon^{(NK)} \rbrace \sim \mathcal{N}(0,1)$.
            \State Sample $\lbrace z^{(1)}, \ldots, z^{(NK)} \rbrace \sim p(z)$.
            \State Sample $\lbrace \tilde{z}^{(k|n)} \rbrace \sim q(z|x^{(n)},\epsilon^{(nk)})$ 

            \State /\ /\ Update the model parameters 
            \State Update $\theta \leftarrow \phi + h g_\psi$ where we compute the gradient of Eq.~\ref{eqn:aiwb2b}: 
            \State \qquad \qquad $g_\theta = \frac{1}{N}\sum^N_{n=1} \nabla_\theta \log \big[\frac{1}{K}\sum^{K}_{k=1} p_\theta \left(x^{(n)}|\tilde{z}^{(k|n)}\right)
                \exp\left(- T_{\psi}\left(x^{(n)}, \tilde{z}^{(k|n)}\right)\right)\big]$
            \State Update $\phi \leftarrow \phi + h g_\psi$ where we compute the gradient of Eq.~\ref{eqn:aiwb2b}: %with $K=1$.
            \State \qquad \qquad $g_\phi = \frac{1}{N}\sum^N_{n=1} \nabla_\phi \big[\log p_\theta \left(x^{(n)}|\tilde{z}^{(1|n)}\right) - T_{\psi}\left(x^{(n)}, \tilde{z}^{(1|n)}\right)\big]$
            \State Update $\psi \leftarrow \psi + h g_\psi$  where we compute the gradient of Eq.~\ref{eqnTstar_xz}: 
            \State \qquad \qquad $g_\psi = \frac{1}{N}\sum^N_{n=1} \nabla_\psi \big[ 
                \log \left( \sigma(T_\psi(x^{(n)}, \tilde{z}^{(1|n)}))\right)+ \log \left( 1 -  \sigma(T_\psi(x^{(n)}, z^{(n)})) \right) \big]$
        \EndWhile
    \end{algorithmic}
    \end{algorithm}
%\end{minipage}
%\vspace{-0.8cm}
%\end{wrapfigure}
\end{figure}
%\end{multicols}  

\subsection{Properties}
    
    An important reason to maximize $\phi$ w.r.t the variational lower bound in Equation~\ref{eqn:aiwb2b} is that it guarantees 
        $\mathbb{E}_{q_\phi(z|x)} \left[ \nabla_\phi T^*(x,z)\right] = 0$
        for the optimal discriminator network $T^*$ \cite{Mescheder2017}.
        Since deriving $T^*(x,z)$ indirectly depends on $\phi$, 
        we want the gradient w.r.t $\phi$  in $T^*(x,z)$ to be disentangled
        from calculating the gradients of Equation~\ref{eqn:aiwb2b}.
        Thus, we are only using the importance weighted bound on generative model. Empirically, we find that this still improves performance (Section~\ref{sec:experiments}).

    %For similar reasoning as IW-AVB, we update the
    %    parameters of generative network on importance
    %    weighted bound objective and update the
    %    parameters of inference network on variational bound objective. 
        
%\subsection{Importance Weighted Adversarial Autoencoder}

        %In section~\ref{sec:rwae}, we show that IW-AAE gives tighter approximation to 
        %2-Wassterstein distance when the decoder outputs Gaussian distribution. 

The following proposition shows that the global Nash equilibria of 
IW-AVB's adversarial game yield global optima of the objective function
in $\mathcal{L}_{\text{IW-AVB}}$.%Equation~\ref{eqn:aiwb2b}.

\begin{proposition}
    Assume $T$ can represent any function of two variables.
    If $(\theta^*, \phi^*, T^*)$ is a Nash Equilibrium of the two-player game for IW-AVB,
    then $-T^*(x,z) = \log \frac{p(z)}{q_{\phi^*}(z|x)}$
    and $(\theta^*, \phi^*)$ is a global optimum of the 
    importance weighted lower bound in Equation~\ref{eqn:aiwb2b}. %with an approximate posterior distribution 
    %$q^{IW}_\phi(z_i|x,z_{\/i}) := \frac{p(x,z_i)}{\frac{1}{k}\sum^{k}_{j=1}\frac{p(x,z_j)}{q(z_j|x}}$.
    \label{prop1}
\end{proposition}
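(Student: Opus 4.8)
The plan is to adapt the argument used for the analogous AVB result of Mescheder et al.: first pin down the equilibrium discriminator by a pointwise optimization, then turn the best-response conditions on the generative and inference players into global-optimality statements, with the inference side needing a Kullback--Leibler rewriting to absorb the fact that $T^*$ stays frozen when $\phi$ is perturbed. Throughout, ``$(\theta^*,\phi^*,T^*)$ is a Nash equilibrium'' is used in the form: each of $\theta^*$, $\phi^*$, $T^*$ maximizes its own update objective against the other two.

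\emph{Step 1: the equilibrium discriminator.} Since $T^*$ best-responds to $(\theta^*,\phi^*)$, it maximizes the discriminator objective of Equation~\ref{eqnTstar_xz}, which I would write as $\iint p_{\mathcal D}(x)\bigl[q_{\phi^*}(z|x)\log\sigma(T(x,z)) + p(z)\log(1-\sigma(T(x,z)))\bigr]\,dz\,dx$. Because $T$ may be an arbitrary function of $(x,z)$, the integrand is maximized pointwise: $a\log\sigma(t)+b\log(1-\sigma(t))$ with $a=p_{\mathcal D}(x)q_{\phi^*}(z|x)\ge 0$, $b=p_{\mathcal D}(x)p(z)\ge 0$ has its maximum at $\sigma(t)=a/(a+b)=q_{\phi^*}(z|x)/(q_{\phi^*}(z|x)+p(z))$, so $T^*(x,z)=\log q_{\phi^*}(z|x)-\log p(z)$, i.e.\ $-T^*(x,z)=\log\frac{p(z)}{q_{\phi^*}(z|x)}$. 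This is the first assertion, and it coincides with the AVB discriminator.

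\emph{Step 2: best responses imply global optimality.} Substitute this $T^*$ into the two objectives of Equation~\ref{eqn:aiwb2b}. Using $p_\theta(x|z)p(z)=p_\theta(x,z)$, the generative objective becomes $\mathbb{E}_{z_1,\dots,z_k\sim q_{\phi^*}}\bigl[\log\frac1k\sum_i \frac{p_\theta(x,z_i)}{q_{\phi^*}(z_i|x)}\bigr]=\mathcal{L}_{\text{IWAE}}(\theta,\phi^*)$; here the sampling distribution and the importance weights both use $q_{\phi^*}$, so there is no mismatch, and since $\theta^*$ best-responds it globally maximizes $\mathcal{L}_{\text{IWAE}}(\cdot,\phi^*)$ (in expectation over $p_{\mathcal D}$). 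The inference objective becomes $\mathbb{E}_{p_{\mathcal D}(x)}\mathbb{E}_{q_\phi(z|x)}\bigl[\log p_{\theta^*}(x|z)+\log p(z)-\log q_{\phi^*}(z|x)\bigr]$, which I would rewrite as $\mathbb{E}_{p_{\mathcal D}(x)}\bigl[\mathcal{L}_{\text{VAE}}(\theta^*,\phi;x)+D_{\text{KL}}\bigl(q_\phi(z|x)\,\|\,q_{\phi^*}(z|x)\bigr)\bigr]$, where $\mathcal{L}_{\text{VAE}}(\theta^*,\phi;x)$ is the per-datum ELBO. Since $\phi^*$ best-responds it maximizes this sum over $\phi$, and the KL term vanishes at $\phi=\phi^*$; non-negativity of the KL then yields $\mathbb{E}_{p_{\mathcal D}}[\mathcal{L}_{\text{VAE}}(\theta^*,\phi)]\le\mathbb{E}_{p_{\mathcal D}}[\mathcal{L}_{\text{VAE}}(\theta^*,\phi^*)]$ for all $\phi$. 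Combined with the $\theta$-statement, this shows $(\theta^*,\phi^*)$ is a global optimum of the importance weighted lower bound in Equation~\ref{eqn:aiwb2b}.

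\emph{The main obstacle} is precisely Step 2's inference half: because $T^*$ depends on $\phi$, in the game it is held at $\log q_{\phi^*}-\log p$ while the inference player contemplates $\phi\ne\phi^*$, so the objective faced there is the ``mixed'' quantity above rather than the ELBO of $q_\phi$, and the KL rewriting is what reconciles the two. (The remark preceding the proposition, $\mathbb{E}_{q_\phi}[\nabla_\phi T^*]=0$ at the optimal discriminator, is the first-order shadow of this same fact, and is why $\phi$ is updated against the variational bound rather than the importance-weighted one.) Importance weighting contributes nothing further: once $T^*$ is substituted, the $\theta$-side is exactly the IWAE bound, whose monotonicity and tightness in $k$ are already available from Burda et al.; and if the variational family is unrestricted one additionally obtains $q_{\phi^*}(z|x)=p_{\theta^*}(z|x)$, whence $\mathcal{L}_{\text{IWAE}}(\theta^*,\phi^*)=\log p_{\theta^*}(x)$ and $\theta^*$ is a maximizer of the data log-likelihood.
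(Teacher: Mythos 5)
Your proof is correct, and it takes a recognizably different route from the paper's in its second half. Step~1 (pointwise optimization of the discriminator objective, giving $T^*(x,z)=\log q_{\phi^*}(z|x)-\log p(z)$) is identical in substance to the paper's, which simply cites the Goodfellow et al.\ result rather than redoing the calculus. The divergence is in Step~2: the paper substitutes $T^*$ into the single importance-weighted objective of Equation~\ref{eqn:aiwbo}, invokes the Cremer et al.\ reformulation of $\mathcal{L}_{\text{IWAE}}$ as $\mathcal{L}_{\text{VAE}}$ under the implicit distribution $q_{\text{IW}}(z_i|x,z_{/i})\propto K\tilde w_i q(z_i|x)$, and then runs the KL-nonnegativity argument (phrased as a contradiction) at the level of $q_{\text{IW}}$ versus $q^{*}_{\text{IW}}$. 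You instead keep the two players' objectives separate exactly as in Equation~\ref{eqn:aiwb2b} --- which is what the proposition statement actually references and what Algorithm~\ref{algo:IW-AVB} implements --- disposing of the generative player directly (after substitution it is literally $\mathcal{L}_{\text{IWAE}}(\cdot,\phi^*)$) and handling the inference player with the AVB-style decomposition $\mathbb{E}_{q_\phi}[\log p_{\theta^*}(x|z)-T^*(x,z)]=\mathcal{L}_{\text{VAE}}(\theta^*,\phi;x)+D_{\text{KL}}(q_\phi\|q_{\phi^*})$ applied to the plain ELBO. What your version buys is that it avoids the implicit-$q_{\text{IW}}$ machinery entirely and proves optimality statements for the objectives the algorithm actually optimizes (including the correct observation that the frozen $T^*$ makes the inference player face a ``mixed'' objective); what the paper's version buys is a statement about the fully importance-weighted game of Equation~\ref{eqn:aiwbo}, where both players use the IWAE-type bound, at the cost of the extra reformulation. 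The two are complementary rather than conflicting, and your closing remark (unrestricted $\mathcal{Q}$ implies $q_{\phi^*}=p_{\theta^*}(z|x)$ and $\theta^*$ is a maximum-likelihood assignment) matches the paper's discussion following the proposition.
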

See the Appendix for proof.
This proposition tells us that the solution to Equation~\ref{eqn:aiwb2b}
gives the solution to importance weighted bound,
in which $\theta^*$ becomes the maximum likelihood assignment.
%The proof uses the fact that as $k$ grows, the variational evidence lower bound is applied to increasingly accurate variational distribution, where the approximated posterior distribution under importance weighted construction approximates the true posterior distribution.
%In other words, $\theta^*$ becomes the maximum likelihood assignment,
%and $q^{IW}_\phi(z_i|x,z_{\/i})$ matches true posterior distribution.

A similar property holds for AAE and IW-AAE with the discriminator $T_\psi(z)$.
\begin{proposition}
    Assume $T$ can represent any function of two variables.
    If $(\theta^*, \phi^*, T^*)$ is a Nash Equilibrium of two-player game for IW-AAE,
    then $-T^*(z) = \log \frac{p(z)}{q_{\phi^*}(z)}$
    and $(\theta^*, \phi^*)$ is the global optimum of the following objective, 
    %\begin{align}
    %    \mathbb{E}_{z_1,\ldots, z_k \sim q_\phi(z|x)} \left[ \log \frac{1}{k} \sum^{k}_{i=0}  %p(x|z_i)\frac{p(z_i)}{q_\phi(z_i)}\right].
    %    \label{eqn:aae_primal}
    %\end{align}
    \begin{align}
        \mathbb{E}_{z_1,\ldots, z_k \sim q^{IW}_\phi(z|x)} \left[ \frac{1}{k} \sum^{k}_{i=0} \log p(x|z_i)\frac{p(z_i)}{q^{IW}_\phi(z_i)}\right]
       \label{eqn:aae_primal}
    \end{align}
    where $q^{IW}_\phi(z_i|z_{\/i}) := \frac{p(x,z_i)}{\frac{1}{k}\sum^{k}_{j=1}\frac{p(x,z_j)}{q(z_j)}}$.
    \label{prop2}
\end{proposition}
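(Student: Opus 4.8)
The plan is to mirror the proof of Proposition~\ref{prop1}, adapting only the discriminator analysis to the AAE's $z$-only discriminator (so the hypothesis that $T$ can represent any function of two variables should here be read as: any function of $z$). First I would characterize the optimal discriminator: writing $q_{\phi}(z):=\int q_\phi(z|x)\,p_{\mathcal D}(x)\,dx$ for the aggregated posterior, the discriminator objective in Equation~\ref{eqnTstar_z} is $\mathbb{E}_{q_\phi(z)}[\log\sigma(T_\psi(z))]+\mathbb{E}_{p(z)}[\log(1-\sigma(T_\psi(z)))]$, which, since $T$ ranges over all functions of $z$, can be maximized pointwise in $z$; the standard GAN calculation gives $\sigma(T^*(z))=q_\phi(z)/(q_\phi(z)+p(z))$, hence $-T^*(z)=\log\big(p(z)/q_{\phi^*}(z)\big)$ at the equilibrium $\phi=\phi^*$. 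This is the first assertion.

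Next I would substitute this back. With $\exp(-T^*(z_i))=p(z_i)/q_{\phi^*}(z_i)$ and $p(x|z_i)p(z_i)=p(x,z_i)$, the quantity that $(\theta^*,\phi^*)$ maximize for the fixed optimal $T^*$ (Equation~\ref{eqn:IW-AAE_org}) becomes $\mathbb{E}_{z_1,\ldots,z_k\sim q_{\phi^*}(z|x)}\big[\log\frac1k\sum_{i=1}^k p(x,z_i)/q_{\phi^*}(z_i)\big]$. The heart of the proof is then to show that maximizing this log-of-average bound is equivalent to maximizing the average-of-log primal objective of Equation~\ref{eqn:aae_primal}. Following the standard reading of the IWAE bound as an ELBO under an implicitly-defined importance-weighted posterior, I would set $w_i:=p(x,z_i)/q_\phi(z_i)$, note that the reweighted conditional $q^{IW}_\phi(z_i\mid z_{\setminus i})=p(x,z_i)/\big(\frac1k\sum_j w_j\big)$ is exactly the one in the statement, and change measure from the i.i.d.\ product $q_\phi(z|x)^{\otimes k}$ to the exchangeable reweighted joint. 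This rewrites $\mathbb{E}[\log\frac1k\sum_i w_i]$ as $\mathbb{E}_{q^{IW}_\phi}\big[\frac1k\sum_i\log\frac{p(x,z_i)}{q^{IW}_\phi(z_i)}\big]$ up to a term independent of $(\theta,\phi)$; reinserting $p(x,z_i)=p(x|z_i)p(z_i)$ and discarding that constant gives precisely Equation~\ref{eqn:aae_primal}.

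Finally, since $(\theta^*,\phi^*,T^*)$ is a Nash equilibrium, $(\theta^*,\phi^*)$ globally maximizes the IW-AAE objective with $T^*$ held fixed, so by the equivalence just established it globally maximizes Equation~\ref{eqn:aae_primal}. As in the remark after Proposition~\ref{prop1}, I would also observe that the split update of Equation~\ref{eqn:aiwb2b} (importance-weighted bound for $\theta$, plain variational bound for $\phi$) does not affect the conclusion, since at the optimal discriminator $\mathbb{E}_{q_\phi(z|x)}[\nabla_\phi T^*(z)]=0$, so both $\phi$-gradients coincide in expectation.

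The main obstacle is the equivalence step: making the change of measure between the i.i.d.\ product $q_\phi(z|x)^{\otimes k}$ and the reweighted exchangeable joint rigorous, tracking exactly which terms depend on $(\theta,\phi)$ so that global optimality transfers cleanly between the two objectives, and correctly handling the \emph{aggregated} posterior marginal $q_\phi(z)$ (not the conditional) that the AAE discriminator forces into the bound --- along with the minor bookkeeping in the statement itself (the marginal-versus-conditional notation $q^{IW}_\phi(z_i)$ and the $\sum_{i=0}^k$ index range).
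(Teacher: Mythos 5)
Your proposal is correct and follows essentially the same route as the paper, which itself proves Proposition~\ref{prop2} only by remarking that ``the steps of the proof are the same as for Proposition~\ref{prop1}'': the standard GAN characterization of the optimal $z$-only discriminator against the aggregated posterior $q_{\phi}(z)$, substitution into the IW-AAE objective, and the Cremer-style change of measure to the implicit importance-weighted posterior $q^{IW}_\phi$ before concluding global optimality from the Nash-equilibrium property. Your explicit attention to the aggregated-versus-conditional posterior distinction is exactly the one adaptation the paper leaves implicit.
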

The steps of the proof are the same as for Proposition~\ref{prop1}.
%The objective in Equation~\ref{eqn:aae_primal} is approximated using adversarial training.
%Therefore, it is easy to see that the proposition holds for AAE as well
%(by replacing $q^{IW}_\phi(z_i|_{\/i})$ with $q_\phi(z)$).
%we get the global solution of the Equation~\ref{eqn:aae_primal} for AAE).

In the next section, we provide theoretical insights into the relationship between the optima of Equations~\ref{eqn:aiwb2b} and \ref{eqn:aae_primal} and the log-likelihood. 

%Nevertheless, it is unclear what Equation~\ref{eqn:aae_primal} means.
%We provide theoretical insights to Equation~\ref{eqn:aae_primal} in Section~\ref{sec:rwae}.

\section{Analysis}

    Bousquet {\em et al.} \cite{Bousquet2017} showed adversarial objectives with equivalent solutions to $\mathcal{L}_{\text{AVB}}$ and $\mathcal{L}_{\text{AAE}}$.
    In a similar manner, we show that the adversarial objective with equivalent solutions to $\mathcal{L}_{\text{IW-AVB}}$ is
    %the solution of $\mathcal{L}_{\text{IW-AVB}}$ 
    %is equivalent to the solution of the following adversarial objective,
    {\small
        \begin{multline}
            \mathcal{D}_{\text{IW-AVB}}(p_{\mathcal{D}}, p_\theta) := 
            \min_{q_\phi(z|x)\in \mathcal{Q}} \Bigg[
                \mathbb{E}_{p_\mathcal{\mathcal{D}}(x)}\bigg[ D_{\text{GAN}}\left( q_\phi(z|x)\| p(z)\right) - 
                    \mathbb{E}_{z_1,\ldots, z_n \sim q_{\phi}(z|x)} 
                    \Bigg[ \frac{1}{k} \sum^k_{i=1} \log p(x|z_i)\bigg]\bigg]\Bigg]\\ - \max_{\theta \in \Theta} 
                    \mathbb{E}_{p_{\mathcal{D}}(x)}
                    \mathbb{E}_{z_1,\ldots, z_k\sim q_\phi(z|x)}
                    \bigg[\log \frac{1}{k}\sum^k_{j=0} p_\theta (x|z_i)
                    \exp(-T_\psi(x,z_j))\bigg]
                \label{eqn:aiwb3}
        \end{multline}}
        where $D_{\text{GAN}}$ is the generative adversarial network objective 
        \cite{Goodfellow2014} with discriminative network $T_\psi$, and $p_{D}$ and $p_\theta$ are 
        data and model distributions. 
        $D_{\text{IW-AVB}}(p_{\mathcal{D}}, p_\theta)$ can be viewed as (pseudo-) divergence 
        between the data and model distribution, 
        where $p_\theta(x) = \int p_\theta (x|z) p(z) dz$ for all $x$.
        
    Similarly, the the adversarial objective for IW-AAE becomes 
        {\small \begin{multline}
            D_{\text{IW-AAE}}(p_{\mathcal{D}}, p_\theta) 
                :=  \min_{q_\phi(z|x)\in \mathcal{Q}} \Bigg[
                D_{\text{GAN}}\left[q_\phi(z), p(z)\right]-
                \mathbb{E}_{p_\mathcal{D}(x)}
                    \mathbb{E}_{z_1,\ldots, z_k \sim q_{\phi}(z|x)} 
                    \bigg[\frac{1}{k} \sum^k_{i=1} \log  p(x|z_i)\bigg]\Bigg]\\
                    - \max_{\theta\in\Theta}
                \mathbb{E}_{p_{\mathcal{D}}(x)}
                 \mathbb{E}_{z_1, \ldots, z_k}
                \bigg[ \log \frac{1}{k}\sum^{k}_{j=1} p_\theta(x|z_i)
                \exp(-T_\psi(z_j))\bigg].
                \label{eqn:IW-AAE}
        \end{multline}}
        %We dubbed Equation~\ref{eqn:IW-AAE} as {\em importance weighted adversarial autoencoder} (IW-AAE).
    
    Bousquet {\em et al.} also show that minimizing $\mathcal{L}_{\text{AAE}}$ is a special case of minimizing a penalized optimal transport
    (POT) with $2$-Wasserstein distance.
    %In section~\ref{sec:rwae},  we further investigate IW-AAE in terms of Wasserstein-2 distance.
        
    In the following sections, we will use the pseudo-divergences, Equantion~\ref{eqn:aiwb3} and Equation~\ref{eqn:IW-AAE}, to analyze IW-AVB and IW-AAE.

%Our analysis relies on the fact  
%\begin{align}
%    \mathbb{E}_{z_1, \ldots, z_k} \left[ \frac{1}{K} \sum^{K}_{k=0} \log a_i \right]
%        \leq \mathbb{E}_{z_1, \ldots, z_k} \left[ \log \frac{1}{K} \sum^{K}_{k=0} a_i \right],
%    \label{eqn:lem}
%\end{align}
%since $L_k \leq L_m$ for $k \leq m$ from Theorem 1 in \cite{Burda2015}.

%As mentioned, both models, IW-AVB and IW-AAE, can be re-formulated 
%as single adversarial objective functions $\mathcal{D}_{\text{IW-AVB}}$
%and $\mathcal{D}_{\text{IW-AAE}}$ such that 
%they produce the same solutions. 
%Even though the loss formulation $\mathcal{L}_{\text{IW-AVB}}$ and 
%$\mathcal{L}_{\text{IW-AAE}}$ are written differently from 
%$\mathcal{D}_{\text{IW-AVB}}$ and $\mathcal{D}_{\text{IW-AAE}}$, 

%Following the proof follows the steps from Theorem~1 in \cite{Burda2015},

\subsection{Relationship between Wasserstein Autoencoders and log-likelihood} \label{sec:rwae}

We would like to understand the relationship between AAE (IW-AAE) and log-likelihood. 
Previously, it was shown that $\mathcal{L}_{\text{AAE}}$ converges to  Wasserstein autoencoder objective function $W_c^\dagger(p_\mathcal{D}, p_\theta)$ under certain circumstances \cite{Bousquet2017}.
We observe that $\mathcal{L}_{\text{IW-AAE}}$ converges to new Wasserstein autoencoder objective $W_c^\ddagger(p_\mathcal{D}, p_\theta)$ which gives a tighter bound on the autoencoder log-likelihood $\log p_{\phi,\theta}(x)=\int p_\theta(x|z)q_\phi(z|x)dz$. The quantity $\log p_{\phi,\theta}(x)$ can be understood as likelihood of reconstructed data from probabilistic encoding model. Further in Corollary in Appendix, in a special case, we were able to relate $\log p_{\phi,\theta}(x)$ and $\log p_{\theta}(x)$.

Wasserstein distance $W_c(p_\mathcal{D}, p_\theta)$ is a distance function defined between probability distribution on a a metric space. 
Bousquet {\em et al.} \cite{Bousquet2017} showed that the penalized optimal transportation objective $D_{\text{POT}}$ is relaxed version of Wasserstein autoencoder objective $W_c^\dagger(p_\mathcal{D}, p_\theta)$\footnote{Given that the generative network $p_\theta$ is probabilistic function, we have $W_c(p_\mathcal{D}, p_\theta) \leq W_c^\dagger(p_\mathcal{D}, p_\theta)$} where
\begin{align}
    D_{\text{POT}}(p_\mathcal{D}, p_\theta) := \inf_{q(z|x)\in\mathcal{Q}} \mathbb{E}_{p_\mathcal{D}(x)}\mathbb{E}_{q(z|x)} \left[ c(x,g(z))\right] + \lambda D_{\text{GAN}}(q(z),p(z)).
\end{align}
and $c$ is a distance function.
$D_{\text{GAN}}$ is used for the choice of convex divergence between the prior $p(z)$ and 
the aggregated posterior $q(z)$ \cite{Bousquet2017}.
As $\lambda \to \infty$, $D_{\text{POT}}(p_\mathcal{D}, p_\theta)$ converges to $W_c^\dagger(p_\mathcal{D}, p_\theta)$.  
It turns out that $D_{\text{AAE}}$ is a special case of $D_{\text{POT}}$.
This happens when the cost function $c$ is squared Euclidean distance and
$P_G(\tilde{x}|z)$ is Gaussian $\mathcal{N}(\tilde{x}; G_\theta(z),\sigma^2 I)$.

We can also observe that $D_{\text{IW-AAE}}$ converges to $W_c^\ddagger(p_\mathcal{D}, p_\theta)$:
        \begin{proposition}
            Assume that $c(x,\tilde{x}) = \|x-\tilde{x}\|^2$,
            $p_\theta(\tilde{x}|z)=\mathcal{N}(\tilde{x};G_\theta(z), \sigma^2I)$.
            Then, $- \log p_{\theta, \phi}(x) \leq W_c^\ddagger(p_\mathcal{D}, p_\theta) \leq W_c^\dagger(p_\mathcal{D}, p_\theta)$
                %\label{eqn:tighter_wasser}
            where $W_c^\ddagger(p_\mathcal{D}, p_\theta)$ and $W_c^\dagger(p_\mathcal{D}, p_\theta)$ are
            \begin{align*}
                 &\inf_{q:q_\phi(z)=p(z)} \mathbb{E}_{p_\mathcal{D}}
                  \mathbb{E}_{z_1\ldots z_k \sim q_\phi(z|x)} \left[ \log \frac{1}{K}\sum^{K}_{k=0}  p(x, G_\theta(z_k))\right]\\ 
                 & \qquad \qquad \qquad \qquad \qquad  \text{and }\\
                 &\inf_{q:q_\phi(z)=p(z)} \mathbb{E}_{p_\mathcal{D}}
                    \mathbb{E}_{z_1\ldots z_k \sim q_\phi(z|x)} \left[ \frac{1}{K}\sum^{K}_{k=0} \log p(x, G_\theta(z_k))\right],
            \end{align*}
            where $\log p_{\theta,\phi}$ is the log-likelihood of an autoencoder\footnote{We abuse the notation by writing $\log p_{\theta,\phi}(X=x|X^\prime=x)$ as a $\log p_{\theta,\phi}(x)$.}.
            Moreover, $D_{\text{AAE}}$ converges to $W_c^\dagger(p_\mathcal{D}, p_\theta)$
            and $D_{\text{IW-AAE}}$ converges to $W_c^\ddagger(p_\mathcal{D}, p_\theta)$
            as $\lambda=2\sigma^2\rightarrow \infty$.
            \label{prop:wasser}
        \end{proposition}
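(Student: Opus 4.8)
The plan is to establish Proposition~\ref{prop:wasser} in three stages: (i) reduce the pseudo-divergences $D_{\text{AAE}}$ and $D_{\text{IW-AAE}}$ to the hard-constrained objectives $W_c^\dagger$ and $W_c^\ddagger$ in the limit $\lambda=2\sigma^2\to\infty$; (ii) compare $W_c^\ddagger$ with $W_c^\dagger$ by Jensen's inequality; and (iii) bound the autoencoder log-likelihood $\log p_{\theta,\phi}(x)$ by $W_c^\ddagger$, again by Jensen.

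For stage (i), substitute the Gaussian decoder $p_\theta(\tilde x|z)=\mathcal{N}(\tilde x;G_\theta(z),\sigma^2 I)$ into $D_{\text{IW-AAE}}$ (Eq.~\ref{eqn:IW-AAE}). Then $-\log p_\theta(x|z)=\frac{1}{2\sigma^2}\|x-G_\theta(z)\|^2+C(\sigma)$ with $C(\sigma)=\frac{d}{2}\log(2\pi\sigma^2)$ a constant independent of $z,x,\theta,\phi$, so $-\log p_\theta(x|z)=\frac{1}{2\sigma^2}c(x,G_\theta(z))+C(\sigma)$. Multiplying the objective through by $\lambda=2\sigma^2$ puts it into the penalized-optimal-transport form $\inf_q \mathbb{E}_{p_\mathcal{D}}\mathbb{E}_{q(z|x)}[c(x,G_\theta(z))]+\lambda D_{\text{GAN}}(q(z),p(z))$ of Bousquet {\em et al.}~\cite{Bousquet2017}, except that the importance-weighted reconstruction term carries $\log\frac{1}{k}\sum_i$ in place of $\frac{1}{k}\sum_i$. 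Invoking their POT$\to$WAE argument, as $\lambda\to\infty$ the GAN penalty dominates and forces the constraint $q_\phi(z)=p(z)$, leaving only the reconstruction/likelihood term; this yields $W_c^\ddagger$ from $D_{\text{IW-AAE}}$ and, by the same argument (averaging $k$ i.i.d.\ samples does not change an expectation, so the $\dagger$ object is just the usual WAE objective in $k$-sample form), $W_c^\dagger$ from $D_{\text{AAE}}$. The additive constant $C(\sigma)$ appears identically in $W_c^\dagger$, $W_c^\ddagger$, and $\log p_{\theta,\phi}$, so all inequalities are between objects on a common scale.

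For stage (ii), fix $z_1,\dots,z_k$; concavity of $\log$ gives $-\log\frac{1}{k}\sum_i p_\theta(x|z_i)\le -\frac{1}{k}\sum_i\log p_\theta(x|z_i)$. Taking $\mathbb{E}_{p_\mathcal{D}}\mathbb{E}_{z_{1:k}\sim q_\phi(z|x)}$ and then the infimum over $q$ in the constrained class $\{q:q_\phi(z)=p(z)\}$ preserves the inequality, giving $W_c^\ddagger\le W_c^\dagger$; this is exactly the mechanism behind $\mathcal{L}_{\text{IWAE}}\ge\mathcal{L}_{\text{VAE}}$. For stage (iii), note that $\mathbb{E}_{z_{1:k}\sim q_\phi(z|x)}\big[\tfrac{1}{k}\sum_i p_\theta(x|z_i)\big]=\mathbb{E}_{q_\phi(z|x)}[p_\theta(x|z)]=p_{\theta,\phi}(x)$, so concavity of $\log$ gives $\mathbb{E}_{z_{1:k}}\big[-\log\tfrac{1}{k}\sum_i p_\theta(x|z_i)\big]\ge -\log p_{\theta,\phi}(x)$; averaging over $p_\mathcal{D}$ and taking the infimum over the constrained class yields $-\log p_{\theta,\phi}(x)\le W_c^\ddagger$, under the mild abuse (already flagged in the statement's footnote) of writing $-\log p_{\theta,\phi}(x)$ for its $p_\mathcal{D}$-average at the optimal $q$. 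Chaining (ii) and (iii) gives the claimed ordering.

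I expect the main obstacle to be stage (i): rigorously adapting the Bousquet {\em et al.}\ POT$\to$WAE limiting argument to the importance-weighted objective, where the reconstruction term is $\log\frac{1}{k}\sum_i p_\theta(x|z_i)$ rather than a single expectation, and where $\sigma$ plays a dual role — as decoder scale (contributing both the $1/2\sigma^2$ weight and the divergent additive constant $C(\sigma)$) and as penalty weight $\lambda=2\sigma^2$. One must check that $C(\sigma)\to\infty$ is harmless because it is common to all three quantities, and that the $\lambda\to\infty$ limit interchanges with the infimum over $q$ as in~\cite{Bousquet2017}. The Jensen steps in (ii)–(iii) are routine.
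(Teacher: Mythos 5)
Your proposal is correct and takes essentially the same route as the paper's proof: the ordering $-\log p_{\theta,\phi}(x)\le W_c^\ddagger\le W_c^\dagger$ is obtained by exactly the two Jensen steps you describe (average-of-logs versus log-of-average over the $k$ samples, then $\mathbb{E}_{z_{1:k}}[\tfrac{1}{k}\sum_i p_\theta(x|z_i)]=\int q_\phi(z|x)p_\theta(x|z)\,dz=p_{\theta,\phi}(x)$), and the convergence claim is reduced to the Bousquet \emph{et al.} POT-to-WAE limit in which the GAN penalty enforces $q_\phi(z)=p(z)$. If anything, your stage (i) is more careful than the paper's one-sentence treatment of the $\lambda=2\sigma^2\to\infty$ limit, since you explicitly track the additive constant $C(\sigma)$ and the dual role of $\sigma$ as decoder scale and penalty weight.
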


        The bound is derived by applying Jensen's inequality (see the proof in the Appendix).
        We observe that $W_c^\dagger(p_\mathcal{D}, p_\theta)$  is the lower bound of $\log p_{\theta,\phi}(x)$ under 
        the condition that $p(z)=q_\phi(z)$.
        The tighter bound is achieve using $W_c^\ddagger(p_\mathcal{D}, p_\theta)$ compare to $W_c^\dagger(p_\mathcal{D}, p_\theta)$.
        Lastly, we observe that $D_{\text{AAE}}$ approximates 
        $W_c^\dagger(p_\mathcal{D}, p_\theta)$ and
        $D_{\text{IW-AAE}}$ approximates $W_c^\ddagger(p_\mathcal{D}, p_\theta)$.
        %As $\lambda$ goes to $\infty$, the solution of $D_{\text{AAE}}$
        %and $D_{\text{IW-AAE}}$ reaches the feasible region where $p(z)=q_\phi(z)$.
        %and minimizes the Wassertein-2 distance between $P_\mathcal{D}(x)$ and $G(z)$.

        %Next thereom presents the relationship between log-likelihood $\log p_\theta(x)$ and adversarial autoencoder. 
        The following theorem shows the relationship between AAE objective and $\log p_\theta(x)$.
       
        \begin{theorem}%[The variational upper bound]
        \label{aae:theorem1}
        Maximizing AAE objective is equivalent to jointly maximizing 
        $\log p_\theta(x)$, mutual information with respect to 
        $q_\phi(x,z)$, and the negative of KL divergence between 
        joint distribution $p_\theta(x,z)$ and $q_\phi(x,z)$,
        \begin{align} 
            \mathcal{L}_{\text{AAE}} = \log p_\theta(x) + \mathbb{I}_{q_\phi(x,z)}\left[x,z\right]-\mathbb{KL}(q_\phi(x,z)\|p(x,z)).
            \label{eqn:theorem1}
            %\underbrace{\mathbb{E}_{p_\mathcal{D}}\mathbb{E}_{q_\phi(z|x)}\left[\log\frac{p_\theta(x|z)p(z)}{q\phi(z)}\right]}_{\mathcal{L}_{\text{AAE}}}
            %= \log p_\theta(x) 
            %- \underbrace{\mathbb{E}_{p_\mathcal{D}}\mathbb{E}_{q(z|x)}\left[\log\frac{q(z)}{p(z|x)}\right]}_{-\mathbb{I}_q[x,z]+KL(q_\phi(x,z)\|p(x,z))}
        \end{align} 
        \end{theorem}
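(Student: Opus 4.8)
The plan is to turn the statement into an $x$-wise identity, substitute the optimal discriminator, and reorganize the resulting log-ratios; the mutual-information and joint-KL terms in Eq.~\ref{eqn:theorem1} then emerge after averaging over $p_{\mathcal{D}}(x)$. First I would observe that in the definition of $\mathcal{L}_{\text{AAE}}$ the $k$-sample average sits \emph{outside} the logarithm and the $z_i$ are i.i.d.\ from $q_\phi(z|x)$, so it collapses to the $k=1$ case, $\mathcal{L}_{\text{AAE}}=\mathbb{E}_{q_\phi(z|x)}[\log p_\theta(x|z)-T^*(z)]$. Substituting $T^*(z)=\log q_\phi(z)-\log p(z)$ from Eq.~\ref{eqnTstar_z} gives $\mathcal{L}_{\text{AAE}}=\mathbb{E}_{q_\phi(z|x)}[\log p_\theta(x|z)-\log q_\phi(z)+\log p(z)]$. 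Then apply Bayes' rule under the generative model, $\log p_\theta(x|z)=\log p_\theta(x)+\log p_\theta(z|x)-\log p(z)$; the $\pm\log p(z)$ cancel, leaving $\mathcal{L}_{\text{AAE}}=\log p_\theta(x)+\mathbb{E}_{q_\phi(z|x)}[\log p_\theta(z|x)-\log q_\phi(z)]$.

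The second step is a telescoping of the remaining expectation: inserting $\pm\log q_\phi(z|x)$ and splitting, $\mathbb{E}_{q_\phi(z|x)}[\log p_\theta(z|x)-\log q_\phi(z)]=\mathbb{E}_{q_\phi(z|x)}\big[\log\tfrac{q_\phi(z|x)}{q_\phi(z)}\big]-\mathbb{E}_{q_\phi(z|x)}\big[\log\tfrac{q_\phi(z|x)}{p_\theta(z|x)}\big]$, where the second piece is exactly $\mathbb{KL}(q_\phi(z|x)\|p_\theta(z|x))$. Now I would take $\mathbb{E}_{p_{\mathcal{D}}(x)}$ of the whole identity. Using $\int p_{\mathcal{D}}(x)q_\phi(z|x)\,dx=q_\phi(z)$, the first piece becomes $\mathbb{E}_{q_\phi(x,z)}\big[\log\tfrac{q_\phi(z|x)}{q_\phi(z)}\big]=\mathbb{I}_{q_\phi(x,z)}[x,z]$ under the encoder joint $q_\phi(x,z)=p_{\mathcal{D}}(x)q_\phi(z|x)$, and the averaged second piece becomes $\mathbb{E}_{p_{\mathcal{D}}(x)}\mathbb{KL}(q_\phi(z|x)\|p_\theta(z|x))=\mathbb{KL}\big(q_\phi(x,z)\,\|\,p(x,z)\big)$ with $p(x,z):=p_{\mathcal{D}}(x)p_\theta(z|x)$, which gives Eq.~\ref{eqn:theorem1}.

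The only genuinely non-mechanical point — the one I would flag explicitly — is what the joint $p(x,z)$ in the KL term denotes. The conversion above is an exact equality when $p(x,z)=p_{\mathcal{D}}(x)p_\theta(z|x)$; if one instead reads it as the model joint $p(z)p_\theta(x|z)=p_\theta(x,z)$, then $\mathbb{KL}(q_\phi(x,z)\|p_\theta(x,z))=\mathbb{E}_{p_{\mathcal{D}}(x)}\mathbb{KL}(q_\phi(z|x)\|p_\theta(z|x))+\mathbb{KL}(p_{\mathcal{D}}\|p_\theta)$, so the identity picks up an extra $\mathbb{KL}(p_{\mathcal{D}}\|p_\theta)$ term, which is constant in $\phi$ and can be folded into $\log p_\theta(x)$ by reading it as the averaged cross-entropy $\mathbb{E}_{p_{\mathcal{D}}}[\log p_\theta(x)]$. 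I would therefore open the proof by fixing the convention $p(x,z)=p_{\mathcal{D}}(x)p_\theta(z|x)$ and noting the implicit $\mathbb{E}_{p_{\mathcal{D}}(x)}$ on both sides (consistent with the notational conventions used elsewhere in the paper), so that the claimed decomposition is a clean equality; everything else is the bookkeeping above.
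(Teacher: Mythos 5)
Your proof is correct and follows essentially the same route as the paper's: substitute the optimal discriminator $T^*(z)=\log q_\phi(z)-\log p(z)$, peel off $\log p_\theta(x)$ via Bayes' rule, and split the residual $\mathbb{E}_{q_\phi(z|x)}\left[\log p_\theta(z|x)-\log q_\phi(z)\right]$ into the mutual-information and joint-KL terms. Your explicit fixing of the convention $p(x,z)=p_{\mathcal{D}}(x)\,p_\theta(z|x)$ (and the observation that the model-joint reading would introduce an extra $\mathbb{KL}(p_{\mathcal{D}}\|p_\theta)$ term) resolves an ambiguity the paper's proof leaves implicit, but does not change the substance of the argument.
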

        The proof is in the Appendix. 
        This illustrate the trade of between the mutual information and the relative information between $q_\phi(x,z)$ and $p(x,z)$.
        In order for the gap between $\log p_\theta(x)$ and $\mathcal{L}_{\text{AAE}}$ to be small, $q_\phi(z)p(x)$ need to become close to $p(x,z)$.

        \begin{theorem}%[The variational upper bound]
        \label{aae:theorem2}
        The difference between maximizing AAE versus VAE corresponds to
        \begin{align} 
            \mathcal{L}_{\text{AAE}} - \mathcal{L}_{\text{VAE}} = 
                \mathbb{E}_{p_\mathcal{D}(x)}\left[\mathbb{KL}(q_\phi(z|x)\|q_\phi(z))\right].
        \end{align} 
        \end{theorem}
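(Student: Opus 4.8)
The plan is to prove this by a direct expansion of the two objectives, cancelling the terms they share, and recognizing what remains as a Kullback--Leibler divergence. The one input I will use beyond the definitions is the value of the optimal AAE discriminator, $T^*(z) = \log q_\phi(z) - \log p(z)$ from Equation~\ref{eqnTstar_z}, where $q_\phi(z) = \int q_\phi(z|x) p_{\mathcal{D}}(x)\, dx$ is the aggregated posterior; I will also read both sides of the claimed identity as taken under $\mathbb{E}_{p_{\mathcal{D}}(x)}$, which is the standing convention throughout this section since $q_\phi(z)$ already averages over the data. Note that neither $\mathcal{L}_{\text{VAE}}$ nor $\mathcal{L}_{\text{AAE}}$ as defined depends on $k$ (unlike their importance-weighted variants), so no multi-sample considerations enter.

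First I would write, using $p_\theta(x,z)=p_\theta(x|z)p(z)$,
\[ \mathcal{L}_{\text{VAE}} = \mathbb{E}_{q_\phi(z|x)}\!\left[\log p_\theta(x|z) + \log p(z) - \log q_\phi(z|x)\right], \]
and, substituting $T^*(z)$ into the AAE objective,
\[ \mathcal{L}_{\text{AAE}} = \mathbb{E}_{q_\phi(z|x)}\!\left[\log p_\theta(x|z) + \log p(z) - \log q_\phi(z)\right]. \]
Subtracting, the reconstruction term $\log p_\theta(x|z)$ and the prior term $\log p(z)$ cancel inside the expectation, leaving $\mathcal{L}_{\text{AAE}} - \mathcal{L}_{\text{VAE}} = \mathbb{E}_{q_\phi(z|x)}[\log q_\phi(z|x) - \log q_\phi(z)]$, which is exactly $\mathbb{KL}(q_\phi(z|x)\|q_\phi(z))$. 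Taking the expectation over $x\sim p_{\mathcal{D}}$ then yields the stated identity.

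I do not expect a real obstacle here; the only points needing care are bookkeeping ones: tracking the $T^*$ substitution correctly, and being explicit that the equality is read after averaging over $p_{\mathcal{D}}(x)$ so that the two sides are type-consistent. As a sanity check, the right-hand side equals the mutual information $\mathbb{I}_{q_\phi(x,z)}[x,z]$ from Theorem~\ref{aae:theorem1}, since $\mathbb{I}_{q_\phi(x,z)}[x,z] = \mathbb{E}_{p_{\mathcal{D}}(x)}[\mathbb{KL}(q_\phi(z|x)\|q_\phi(z))]$; equivalently, the difference amounts to replacing the per-datapoint penalty $\mathbb{E}_{p_{\mathcal{D}}(x)}[\mathbb{KL}(q_\phi(z|x)\|p(z))]$ in $\mathcal{L}_{\text{VAE}}$ by the aggregated-posterior penalty $\mathbb{KL}(q_\phi(z)\|p(z))$ in $\mathcal{L}_{\text{AAE}}$, and the slack between these two penalties is precisely that mutual-information term.
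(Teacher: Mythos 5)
Your proof is correct, but it takes a genuinely more direct route than the paper's. The paper proves this theorem by combining the decomposition from Theorem~\ref{aae:theorem1}, $\mathcal{L}_{\text{AAE}} = \log p_\theta(x) + \mathbb{I}_{q_\phi(x,z)}[x,z]-\mathbb{KL}(q_\phi(x,z)\|p(x,z))$, with the standard identity $\mathcal{L}_{\text{VAE}} = \log p_\theta(x) - \mathbb{KL}(q_\phi(z|x)\|p(z|x))$, then re-expanding all three divergence/information terms as expectations of log-ratios and multiplying the ratios so that everything telescopes down to $\log\frac{q_\phi(z|x)}{q_\phi(z)}$. You instead substitute the optimal discriminator $T^*(z)=\log q_\phi(z)-\log p(z)$ into the raw AAE objective and subtract the raw VAE objective termwise, so the reconstruction and prior terms cancel immediately and only $\mathbb{E}_{q_\phi(z|x)}[\log q_\phi(z|x) - \log q_\phi(z)]$ survives. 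Your version is shorter, self-contained, and does not depend on Theorem~\ref{aae:theorem1} having been proved first; the paper's version makes the link to the mutual-information decomposition structurally explicit, which your final sanity-check remark (that the gap equals $\mathbb{I}_{q_\phi(x,z)}[x,z]$) recovers anyway. You are also right, and more careful than the paper, in flagging that the identity is only type-consistent after averaging over $p_{\mathcal{D}}(x)$, since $q_\phi(z)$ is the aggregated posterior. No gaps.
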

        The proof is in the Appendix. 
        It is interesting to observe that the gap between two objectives is the KL 
        divergence between approximate posterior distribution 
        $q_\phi(z|x)$ and the marginal distribution $q_\phi(z)$.

\subsection{Relationship of IW-AVB and IW-AAE to other objectives} 

%It is straightfoward to see that the adversarial objectives bound becomes a 
%tighter upper-bound as the number of samples increases
%\begin{align}
%    -\log p_\theta(x) \leq D_{\text{IW-AVB}}^k(p_{D}, p_\theta)  \leq D_{\text{IW-AVB}}^m(p_{D}, p_\theta).
%    \label{prop:iwaae}
%\end{align}
%The proof follows the steps from Theorem~1 in \cite{Burda2015}
%Similarly $D_{\text{IW-AAE}}^k(p_{D}, p_\theta) \leq D_{\text{IW-AAE}}^m(p_{D}, p_\theta)$,
%however, IW-AAE are not guarantee to be lower bounds on the log-likelihood, rather it approaches
%to the quantity we show in Thoreom~\ref{aae:theorem1} (see Section~\ref{sec:rwae}).

The two adversarial objectives bound becomes a tighter upper-bound as the number of samples increases:
\begin{proposition}
    For any distribution $p_{\mathcal{D}}(x)$ and $p_\theta(x)$, and for $k > m$ samples:
    \begin{align}
        -\log p_\theta(x) &\leq D_{\text{IW-AVB}}^k(p_{D}, p_\theta)  \leq D_{\text{IW-AVB}}^m(p_{D}, p_\theta) \text{ and } \label{eqn:iwavb_ine}\\
        -\log p_\theta(x) +  \mathbb{I}_{q_\phi(x,z)}\left[x,z\right] &\leq D_{\text{IW-AAE}}^k(p_{D}, p_\theta) \leq D_{\text{IW-AAE}}^m(p_{D}, p_\theta) \label{eqn:iwaae_ine}
    \end{align}
    \label{prop:iwaae}
    \vspace{-0.5cm}
\end{proposition}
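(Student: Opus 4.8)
The plan is to derive both inequality chains from three ingredients already in hand: (i) the monotone tightening of multi-sample importance-weighted bounds of Burda \emph{et al.}~\cite{Burda2015}, i.e.\ $\mathbb{E}_{z_{1:m}}[\log\frac1m\sum_i w_i]\le\mathbb{E}_{z_{1:k}}[\log\frac1k\sum_i w_i]\le\log\mathbb{E}[w]$ for any non-negative weights $w$ and $k>m$; (ii) the equilibrium characterizations of Propositions~\ref{prop1} and \ref{prop2}, which pin down the optimal discriminator and hence the exact form of the importance weights entering $\mathcal{L}_{\text{IW-AVB}}$ and $\mathcal{L}_{\text{IW-AAE}}$; and (iii) the decompositions of the (IW-)AAE objective in Theorems~\ref{aae:theorem1} and \ref{aae:theorem2}. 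The elementary fact that $f\le g$ pointwise implies $\inf f\le\inf g$ and $\sup f\le\sup g$ is what transports pointwise bounds through the $\min_{q_\phi}$ and $\max_\theta$ in the pseudo-divergences of Equations~\ref{eqn:aiwb3} and \ref{eqn:IW-AAE}.

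I would start with the IW-AVB chain, Equation~\ref{eqn:iwavb_ine}. For a fixed $q_\phi$, substituting the equilibrium discriminator $T^*(x,z)=\log q_\phi(z|x)-\log p(z)$ collapses the weights into $p_\theta(x|z_i)\exp(-T^*(x,z_i))=p_\theta(x,z_i)/q_\phi(z_i|x)$, so $\mathcal{L}^k_{\text{IW-AVB}}$ coincides pointwise with $\mathcal{L}^k_{\text{IWAE}}$. Ingredient (i) then gives, for every $q_\phi$, $\theta$ and every $k>m$, $\mathcal{L}^m_{\text{IWAE}}\le\mathcal{L}^k_{\text{IWAE}}\le\log p_\theta(x)$. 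Because Equation~\ref{eqn:aiwb3} identifies $D^k_{\text{IW-AVB}}(p_{\mathcal{D}},p_\theta)$, at the equilibrium discriminator, with the negative of $\sup_{q_\phi}\mathcal{L}^k_{\text{IWAE}}$ (with the $\max_\theta$ selecting the best model), negating the pointwise chain and applying $\sup$/$\inf$ monotonicity yields $D^k_{\text{IW-AVB}}\le D^m_{\text{IW-AVB}}$ and $D^k_{\text{IW-AVB}}\ge-\log p_\theta(x)$, which is exactly Equation~\ref{eqn:iwavb_ine}; saturation as $k\to\infty$ follows under the usual integrability conditions of \cite{Burda2015}.

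For IW-AAE, Equation~\ref{eqn:iwaae_ine}, the same template applies, with the equilibrium discriminator now $T^*(z)=\log q_\phi(z)-\log p(z)$ for the aggregated posterior $q_\phi(z)=\int q_\phi(z|x)p_{\mathcal{D}}(x)\,dx$, so the weights are $p_\theta(x,z_i)/q_\phi(z_i)\ge 0$ and ingredient (i) still delivers the monotone tightening in $k$; pushing this through the $\min_{q_\phi}$/$\max_\theta$ of Equation~\ref{eqn:IW-AAE} gives $D^k_{\text{IW-AAE}}\le D^m_{\text{IW-AAE}}$. For the floor I would bound the monotone sequence by its $k\to\infty$ limit and then combine the equilibrium form of $\mathcal{L}_{\text{IW-AAE}}$ (Proposition~\ref{prop2}, including the self-normalized weights $q^{IW}_\phi$) with the AAE decomposition of Theorems~\ref{aae:theorem1}--\ref{aae:theorem2} (note $\mathbb{E}_{p_{\mathcal{D}}}[\mathbb{KL}(q_\phi(z|x)\|q_\phi(z))]=\mathbb{I}_{q_\phi(x,z)}[x,z]$) to isolate the extra mutual-information term, so that after negation and optimization the floor becomes $-\log p_\theta(x)+\mathbb{I}_{q_\phi(x,z)}[x,z]$ rather than merely $-\log p_\theta(x)$.

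The main obstacle is precisely this last reconciliation. One must unwind $D_{\text{IW-AAE}}$ in Equation~\ref{eqn:IW-AAE} at the equilibrium discriminator so that the $D_{\text{GAN}}$ penalty, the two reconstruction-type terms, and the self-normalized weights $q^{IW}_\phi$ of Proposition~\ref{prop2} combine to leave exactly $-\log p_\theta(x)+\mathbb{I}_{q_\phi(x,z)}[x,z]$ as the asymptotic floor, and one must check that every Jensen step used in that reduction points in the direction compatible both with the monotone tightening in $k$ and with the \emph{sign} of the mutual-information correction. The IW-AVB half and the monotonicity-in-$k$ statement for IW-AAE are routine given the three ingredients; the mutual-information bookkeeping in the IW-AAE floor is where the real care is needed.
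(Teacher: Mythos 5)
Your proposal follows essentially the same route as the paper: monotonicity in $k$ via Theorem~1 of Burda et al., the IW-AVB floor via the optimal-discriminator substitution that reduces $\mathcal{L}_{\text{IW-AVB}}$ to the IWAE bound on $\log p_\theta(x)$, and the IW-AAE floor via the decomposition of Theorem~\ref{aae:theorem1} with the $-\mathbb{KL}(q_\phi(x,z)\|p(x,z))$ term dropped. The mutual-information sign bookkeeping you flag as the delicate step is precisely the point the paper itself dispatches in a single sentence, so your caution there is warranted but does not constitute a departure from the paper's argument.
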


Following the steps of the proof from Theorem~1 in \cite{Burda2015},
we can show that the bound becomes tigher for both 
$D_{\text{IW-AVB}}^k(p_{D}, p_\theta)$ and $D_{\text{IW-AAE}}^k(p_{D}, p_\theta)$
as you get more samples.
For left inequality of Equation~\ref{eqn:iwavb_ine}, IW-AVB appraoches to the 
log-likelihood since IWAE approaches $\log p(x)$.
For left inequality of Equation~\ref{eqn:iwaae_ine}, we know that 
$-D_{\text{IW-AAE}}^k(p_{D}, p_\theta)$ gets tigher as we increase the number of
 samples, and yet it's upper bounded by 
 $\log p_\theta(x) + \mathbb{I}_{q_\phi(x,z)}\left[x,z\right]$ from Theorem~\ref{aae:theorem1}
 (we omit the term $-\mathbb{KL}(q_\phi(x,z)\|p(x,z))$ for the upper bound).

%Both $D_{\text{IW-AVB}}$ and $D_{\text{IW-AAE}}$ approaches to log likelihood $\log p_\theta(x)$
%when the adversarial approximation $D_{\text{GAN}}$ is fully minimized.
%The proof is same as second proof of \cite{Burda2015}.

The relationships between 
$D_{\text{IW-AVB}}$ and $D_{\text{IW-AAE}}$, $D_{\text{AAE}}$, and $D_{\text{AVB}}$
are
\begin{proposition}
    For any distribution $p_{\mathcal{D}}(x)$ and $p_\theta(x)$:
    \begin{align*}
        D_{\text{IW-AAE}}(p_{D}, p_\theta) \leq D_{\text{AAE}}(p_{D}, p_\theta) \leq D_{\text{AVB}}(p_{D}, p_\theta)\\
        D_{\text{IW-AAE}}(p_{D}, p_\theta) \leq D_{\text{IW-AVB}}(p_{D}, p_\theta) \leq D_{\text{AVB}}(p_{D}, p_\theta)
        \label{prop3}
    \end{align*}
\end{proposition}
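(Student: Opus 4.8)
The proposition has four edges, which I would split into an ``importance-weighting'' pair and an ``aggregation'' pair. For $D_{\text{IW-AAE}}\le D_{\text{AAE}}$ and $D_{\text{IW-AVB}}\le D_{\text{AVB}}$ I would simply invoke Proposition~\ref{prop:iwaae}: setting $k=1$ in Equation~\ref{eqn:aiwb3} collapses every $\tfrac1k\sum_{i=1}^k$ to a single summand, so $D_{\text{IW-AVB}}^1=D_{\text{AVB}}$, and likewise $D_{\text{IW-AAE}}^1=D_{\text{AAE}}$ from Equation~\ref{eqn:IW-AAE}; Proposition~\ref{prop:iwaae} with $m=1$ then gives $D_{\text{IW-AVB}}=D_{\text{IW-AVB}}^k\le D_{\text{IW-AVB}}^1=D_{\text{AVB}}$ and the AAE analogue. (As a byproduct these bounds tighten monotonically in $k$.)

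For $D_{\text{IW-AAE}}\le D_{\text{IW-AVB}}$ and $D_{\text{AAE}}\le D_{\text{AVB}}$, note that the second is exactly the $k=1$ instance of the first, so it suffices to prove $D_{\text{IW-AAE}}^k\le D_{\text{IW-AVB}}^k$ for every $k\ge 1$. With the inference network $q_\phi$ held fixed, Equation~\ref{eqn:IW-AAE} and Equation~\ref{eqn:aiwb3} agree except in two places: (a) the adversarial regularizer is $D_{\text{GAN}}(q_\phi(z)\|p(z))$ rather than $\mathbb{E}_{p_{\mathcal{D}}(x)}[D_{\text{GAN}}(q_\phi(z|x)\|p(z))]$, and (b) the discriminator inside the generative term is $T_\psi(z_j)$ rather than $T_\psi(x,z_j)$; the reconstruction term $\mathbb{E}_{p_{\mathcal{D}}(x)}\mathbb{E}_{z_1,\ldots,z_k\sim q_\phi(z|x)}[\tfrac1k\sum_i\log p(x|z_i)]$ is identical and cancels. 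For (a) I would use convexity of the GAN divergence in its first argument together with $q_\phi(z)=\int q_\phi(z|x)p_{\mathcal{D}}(x)\,dx$, so Jensen's inequality gives $D_{\text{GAN}}(q_\phi(z)\|p(z))\le\mathbb{E}_{p_{\mathcal{D}}(x)}[D_{\text{GAN}}(q_\phi(z|x)\|p(z))]$ --- the same passage Bousquet \emph{et al.} \cite{Bousquet2017} use to move between AAE and AVB. For (b) I would plug in the optimal discriminators $-T^*(z)=\log\frac{p(z)}{q_\phi(z)}$ (Proposition~\ref{prop2}) and $-T^*(x,z)=\log\frac{p(z)}{q_\phi(z|x)}$ (Proposition~\ref{prop1}), so the generative terms become $\mathbb{E}_{p_{\mathcal{D}}}\mathbb{E}_{z_1\ldots z_k\sim q_\phi(z|x)}\big[\log\tfrac1k\sum_j p_\theta(x|z_j)\tfrac{p(z_j)}{q_\phi(z_j)}\big]$ and $\mathbb{E}_{p_{\mathcal{D}}}\mathbb{E}_{z_1\ldots z_k\sim q_\phi(z|x)}\big[\log\tfrac1k\sum_j p_\theta(x|z_j)\tfrac{p(z_j)}{q_\phi(z_j|x)}\big]$, and I would show the first is at least the second for every $\theta$. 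Since the regularizer is added and the generative term is subtracted (inside $\max_\theta$), both (a) and (b) push in the direction $D_{\text{IW-AAE}}^k(q_\phi)\le D_{\text{IW-AVB}}^k(q_\phi)$ for each fixed $q_\phi$; taking $\min_{q_\phi}$ of both sides finishes the argument. A consistency check: at $k=1$ the generative-term gap is exactly $\mathbb{E}_{p_{\mathcal{D}}(x)}[\mathbb{KL}(q_\phi(z|x)\|q_\phi(z))]=\mathbb{I}_{q_\phi}[x,z]\ge 0$, which matches Theorem~\ref{aae:theorem2} and the extra $\mathbb{I}_{q_\phi}[x,z]$ term appearing on the AAE side of Proposition~\ref{prop:iwaae}.

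The hard part is step (b) for $k>1$: proving $\mathbb{E}_{z_1\ldots z_k\sim q_\phi(z|x)}\big[\log\tfrac1k\sum_j p_\theta(x|z_j)\tfrac{p(z_j)}{q_\phi(z_j)}\big]\ge\mathbb{E}_{z_1\ldots z_k\sim q_\phi(z|x)}\big[\log\tfrac1k\sum_j p_\theta(x|z_j)\tfrac{p(z_j)}{q_\phi(z_j|x)}\big]$. Writing $w_j=p_\theta(x,z_j)/q_\phi(z_j|x)$ and $r_j=q_\phi(z_j|x)/q_\phi(z_j)$, the per-sample difference is $\log\frac{\sum_j w_j r_j}{\sum_j w_j}$, the log of a $w_j$-weighted convex combination of the $r_j$; unlike at $k=1$ this is not pointwise nonnegative, so a genuine argument is needed. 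My plan is to bound its expectation via Jensen, using that each $r_j$ satisfies $\mathbb{E}_{q_\phi(z|x)}[\log r_j]=\mathbb{KL}(q_\phi(z|x)\|q_\phi(z))\ge 0$ and $\mathbb{E}_{q_\phi(z|x)}[r_j]\ge 1$, and mimicking the monotonicity-in-$k$ computation of Theorem~1 of \cite{Burda2015} applied to the reweighting factors $r_j$ (and, if necessary, exploiting that $q_\phi$ is the minimizer rather than an arbitrary proposal). Everything else --- convexity of $D_{\text{GAN}}$, cancellation of the reconstruction terms, and the reduction of $D_{\text{AVB}}$, $D_{\text{AAE}}$ to the $k=1$ cases --- is bookkeeping.
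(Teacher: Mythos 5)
Your overall route coincides with the paper's: the two ``vertical'' inequalities $D_{\text{IW-AAE}}\le D_{\text{AAE}}$ and $D_{\text{IW-AVB}}\le D_{\text{AVB}}$ come from the $\log\frac1k\sum\ \ge\ \frac1k\sum\log$ monotonicity already packaged in Proposition~\ref{prop:iwaae} (the paper re-derives the first of these inline rather than citing the proposition, but it is the same Jensen step), and the ``horizontal'' inequality $D_{\text{IW-AAE}}\le D_{\text{IW-AVB}}$ rests on joint convexity of $D_{\text{GAN}}$ plus Jensen applied to $q_\phi(z)=\int q_\phi(z|x)p_{\mathcal{D}}(x)\,dx$; for $D_{\text{AAE}}\le D_{\text{AVB}}$ you take the $k=1$ specialization where the paper instead cites Proposition~4 of \cite{Bousquet2017}. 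Up to that point your argument matches the paper's and is fine.

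The genuine gap is your step (b), which you flag but do not close. You are right that Equations~\ref{eqn:aiwb3} and~\ref{eqn:IW-AAE} differ not only in the GAN regularizer but also in the generative term ($T_\psi(z_j)$ versus $T_\psi(x,z_j)$), and that after substituting the optimal discriminators one must show
$\mathbb{E}\bigl[\log\frac1k\sum_j w_j r_j\bigr]\ \ge\ \mathbb{E}\bigl[\log\frac1k\sum_j w_j\bigr]$
with $w_j=p_\theta(x,z_j)/q_\phi(z_j|x)$ and $r_j=q_\phi(z_j|x)/q_\phi(z_j)$. Your proposed repair does not go through as stated: concavity of $\log$ gives the lower bound $\mathbb{E}\bigl[\sum_j\tilde w_j\log r_j\bigr]=k\,\mathbb{E}\bigl[\tilde w_1\log r_1\bigr]$ with normalized weights $\tilde w_j$, and the facts $\mathbb{E}_{q_\phi(z|x)}[\log r_j]\ge 0$ and $\mathbb{E}_{q_\phi(z|x)}[r_j]\ge 1$ do not control the sign of this weighted expectation, because $\tilde w_j$ and $\log r_j$ are dependent and can be negatively correlated. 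So for $k>1$ the comparison of the two generative terms remains unproved in your write-up. It is worth noting that the paper's own proof does not confront this either: in its derivation of $D_{\text{IW-AAE}}\le D_{\text{IW-AVB}}$ it writes $\exp(-T(x,z_j))$ in \emph{both} generative terms, i.e.\ it silently treats them as identical and attributes the whole inequality to the GAN-regularizer step. You have therefore correctly isolated the one step that neither your argument nor the paper's actually establishes; everything else in your proposal is sound bookkeeping.
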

The proof is shown in the Appendix. 
%This illustrates that $D_{\text{IW-AAE}}$ gives the tightest bound among the four adversarial objectives. 
The $D_{\text{IW-AAE}}$ is tighter than $D_{\text{AAE}}$ (Proposition~\ref{prop:iwaae}), and the $D_{\text{IW-AAE}}$ is tighter than $D_{\text{IW-AVB}}$ due to tighter adversarial approximation
(i.e., $D_{\text{GAN}}(\int q_\phi(z|x)p_{\mathcal{D}}(x)dx, p(z)) \leq 
\mathbb{E}_{p_{\mathcal{D}}(x)}\left[D_{\text{GAN}}(q_\phi(z|x)p_\mathcal{D}(x)dx, p(z))\right]$
since $D_{\text{GAN}}$ is convex).
However, the relationship between $D_{\text{IW-AVB}}$ and $D_{\text{AAE}}$ 
is unknown, because the trade-off between importance 
weighting bound versus the more flexible adversarial objective is unclear.
%We speculate that such a bound is not always one-sided but depends on the number of importance weighted samples and expressibility of the inference network.

Remark that IW-AVB guarantees to be the lower bound of the log-likelihood.
However, both AAE and IW-AAE are not guarantee to be the lower bound of the 
log-likelihood due to the extra mutual information term $\mathbb{I}_{q_\phi(x,z)}\left[x,z\right]$ in Equation~\ref{eqn:theorem1}.
This illustrates that blindly increasing the number of samples for IW-AAE is not
 a good idea, rather we have to choose the number of samples such that it balances 
 between $\mathbb{I}_{q_\phi(x,z)}\left[x,z\right]$ and $\mathbb{KL}(q_\phi(x,z)\|p(x,z))$.
Nevertheless depending on the metric, you will see that IW-AAE can perform better
for some tasks in our experiment section.

\begin{figure*}[t]
    \centering
    \begin{minipage}{0.245\textwidth}
        \includegraphics[width=0.99\linewidth]{./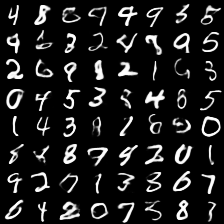}
        \subcaption{AVB}
    \end{minipage}
    \begin{minipage}{0.245\textwidth}
        \includegraphics[width=0.99\linewidth]{./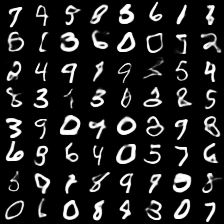}
        \subcaption{IW-AVB}
    \end{minipage}
    \begin{minipage}{0.245\textwidth}
        \includegraphics[width=0.99\linewidth]{./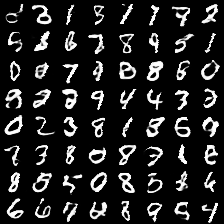}
        \subcaption{AAE}
    \end{minipage}
    \begin{minipage}{0.245\textwidth}
        \includegraphics[width=0.99\linewidth]{./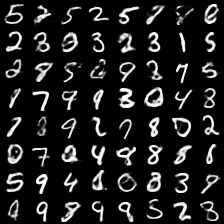}
        \subcaption{IW-AAE}
    \end{minipage}\\
    \caption{Samples of generative models from training MNIST dataset. }
    \label{fig:mnist_samples}
\end{figure*}

\begin{figure*}[t]
    \centering
    \hspace{-0.5cm}
    \begin{minipage}{0.37\textwidth}
        \includegraphics[width=0.99\linewidth]{./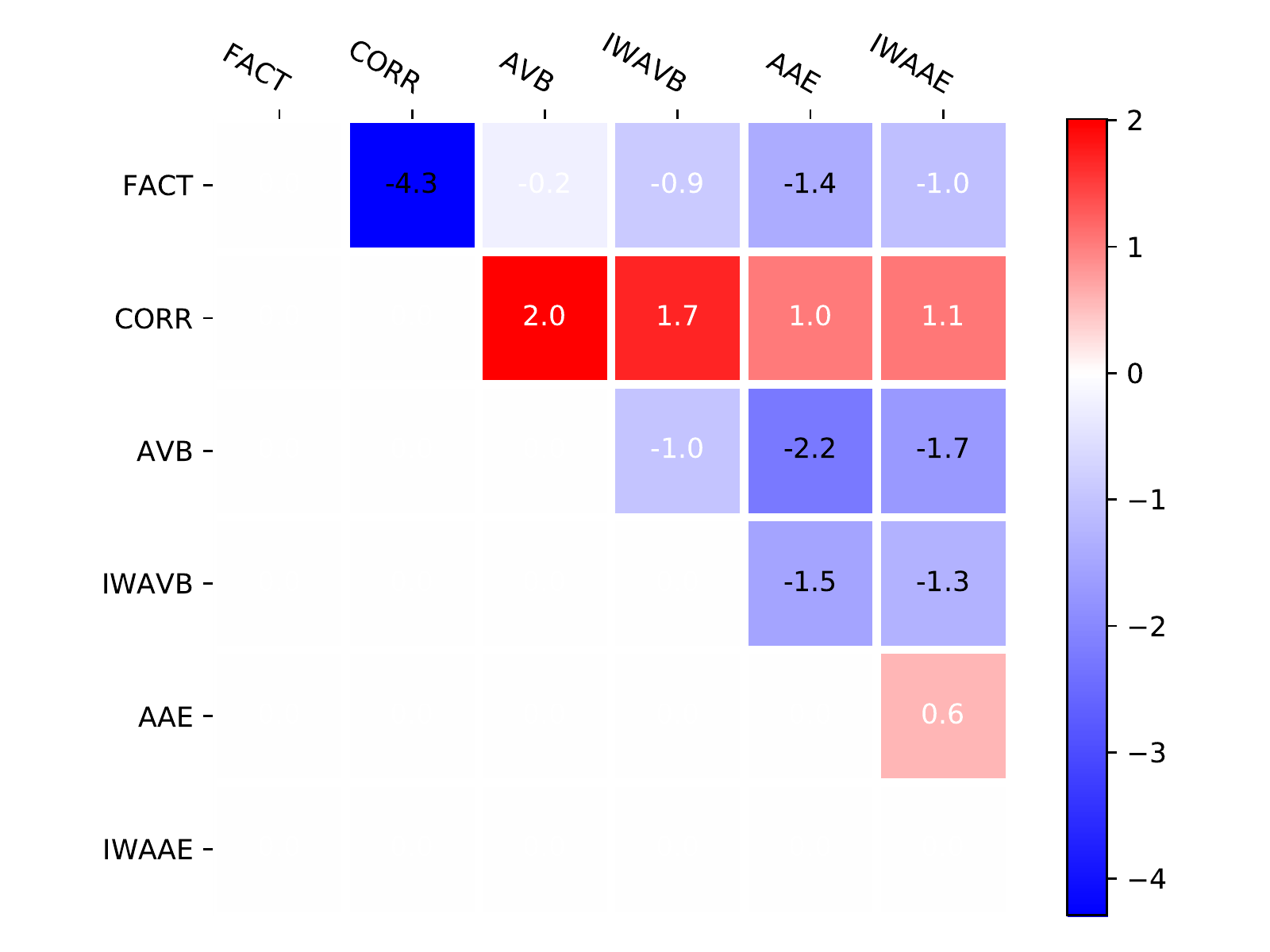}
        \subcaption{Non-amortized spike inference}
        \label{fig:ttest_a}
    \end{minipage}
    \begin{minipage}{0.37\textwidth}
        \includegraphics[width=0.99\linewidth]{./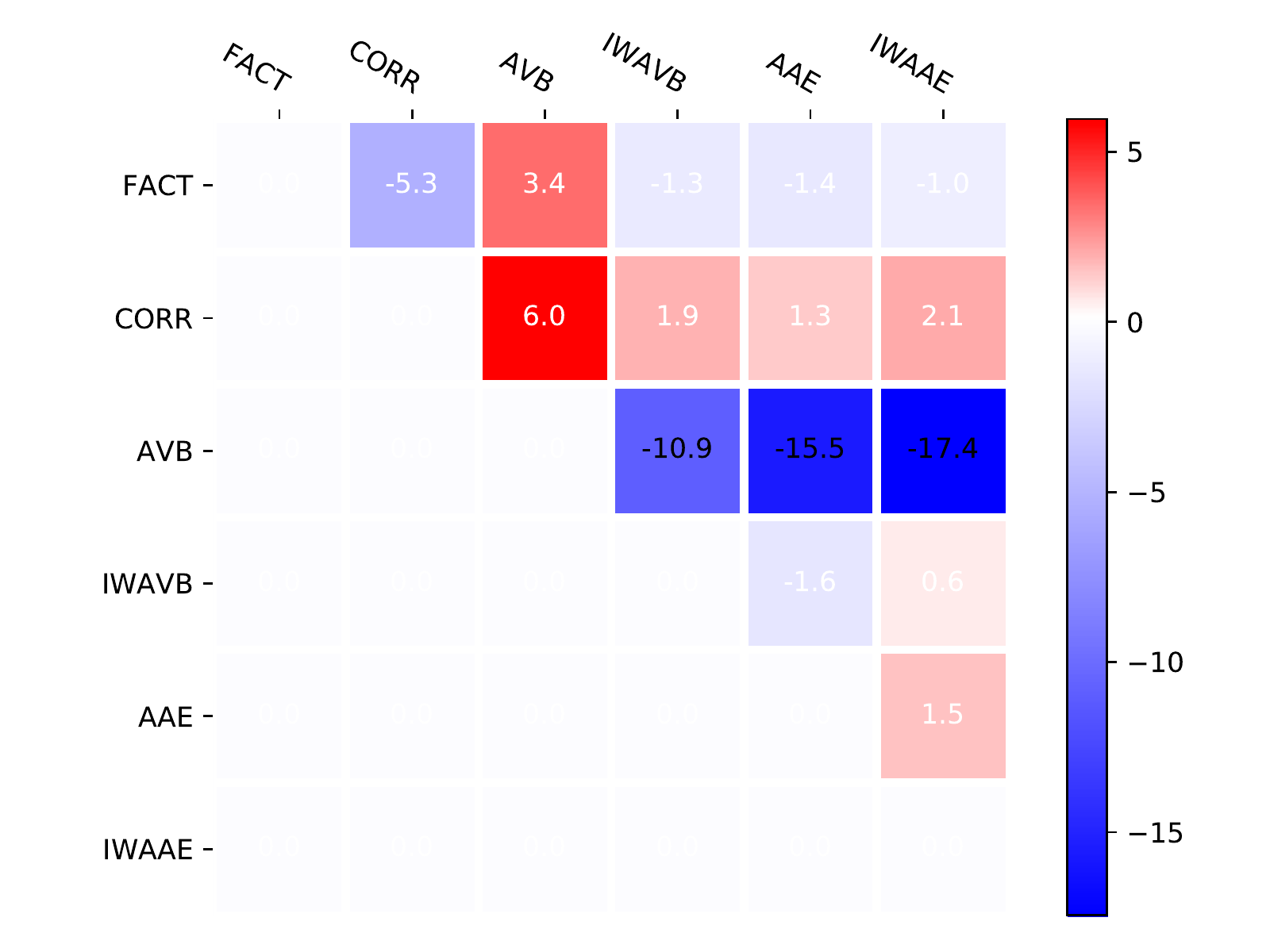}
        \subcaption{Amortized spike inference}
        \label{fig:ttest_b}
    \end{minipage}
    \begin{minipage}{0.275\textwidth}
        \includegraphics[width=\linewidth]{./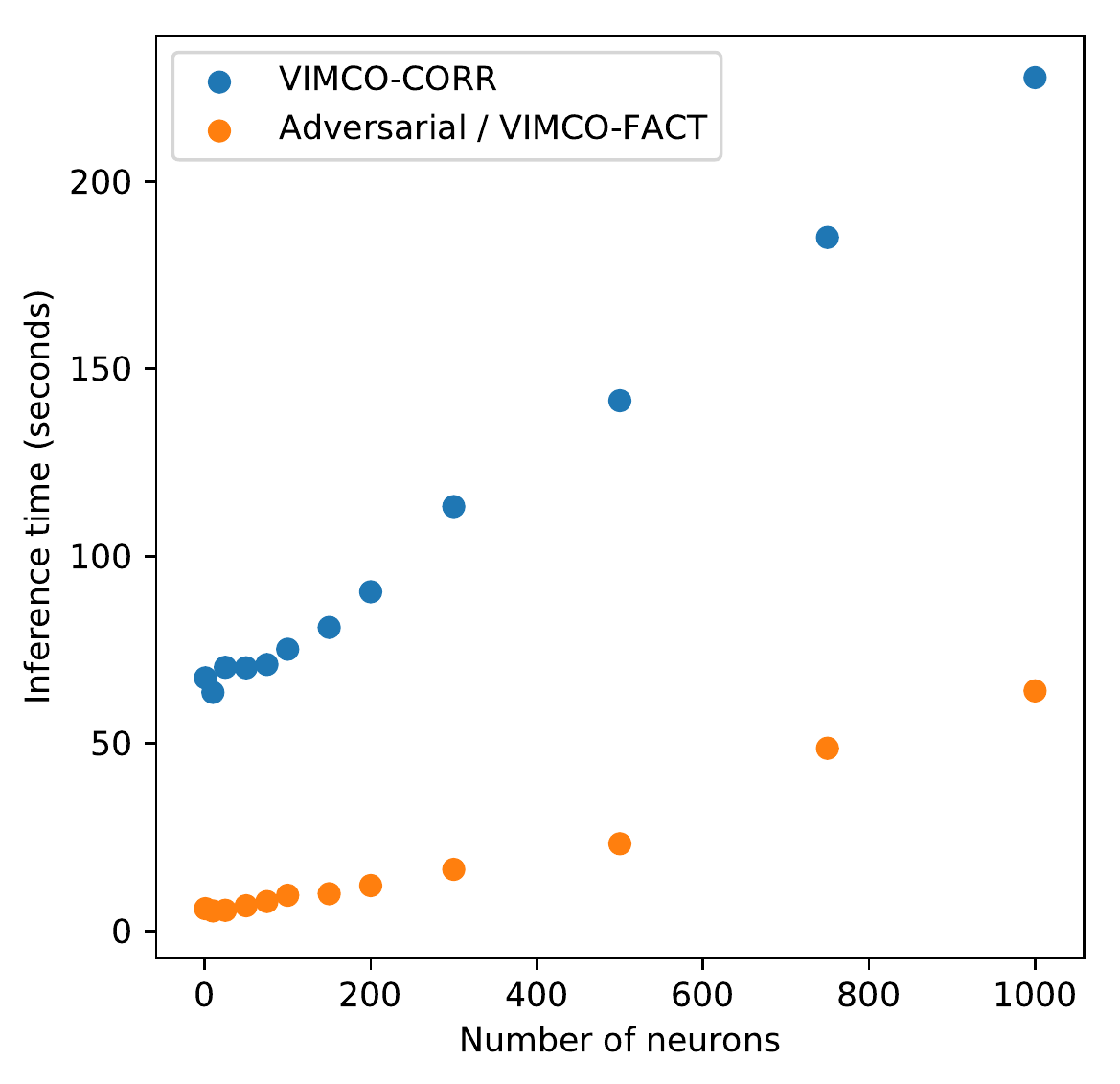}
        \subcaption{Inference time.}% between adversarial training and VIMCO-CORR.}
        \label{fig:time_analysis}
    \end{minipage}
    \caption{Pairwise performance comparison using t-test}
    \label{fig:ttest}
    %\vspace{-0.5cm}
\end{figure*}

\section{Experiments}
\label{sec:experiments}

We conducted our experiments with two main objectives where we want to
i. compare the performance between AVB, IW-AVB, AAE, and IW-AAE;
ii. check whether the adversarial training objectives can benefit neural spike inference in general.
%Throughout the experiments, our objective was to compare the performance between AVB, IW-AVB, AAE, and IW-AAE.
For such reasons, we measure their performance in two experimental setups.
First, we experiment on generative modeling task on MNIST dataset. 
Second, we apply adversarial training on neuron spike activity inference dataset with both amortized and non-amortized inference settings.

\subsection{Generative modeling}
We follow the same experimental procedure as 
    \cite{Mescheder2017} for learning generative models on binarized MNIST dataset.
    We trained AVB, IW-AVB, AAE, and IW-AAE on 50,000 train examples with 10,000 validation examples,
    and measured log-likelihood on 10,000 test examples.
    We applied the same architecture from \cite{Mescheder2017}\footnote{We followed the experiment and the code from  \texttt{https://github.com/LMescheder/AdversarialVariationalBayes}.}. 
    See the details of \cite{Mescheder2017} in Supplementary Materials.

%\begin{table*}[t]
%  \centering
%  \caption{Log-likelihood, FID, and LS metric on binarized MNIST and Fashion-MNIST.
%    IW-AVB performs best for log-likelihood and IW-AAE performs best under FID and LS metric.}
%  \label{tab:results}
%  \begin{tabular}{|l|ccc|ccc|}\hline
%                            & \multicolumn{3}{c}{MNIST} & \multicolumn{3}{|c|}{FMNIST} \\\hline 
%                            & Log-Likelihood  & FID & LS & Log-Likelihood  & FID & LS \\\hline\hline 
%      VAE                   & -92.01          & 259.87       & 3.8e-5 & -237.99 & 264.18 & 8.9e-5\\
%      IWAE                  & -92.57          & 255.513      & 3.6e-5 & -233.76 & 264.47 & 6.0e-5\\
%      AVB                   & -84.47          & 256.13       & 4.1e-5       & -233.14       & 264.18       & 8.7e-5       \\
%      IW-AVB                 & {\bf -83.28}   & 251.20       & 3.3e-5       & {\bf -217.57} & 261.33       & 6.7e-5       \\
%      AAE                   & -102.75         & 266.76       & 3.8e-5       & -230.40       & 258.83       & 3.2e-5       \\        
%      IW-AAE                 & -98.44         & {\bf 249.12} & {\bf 3.2e-5} & -233.52       & {\bf 258.33} & {\bf 2.5e-5} \\ \hline
%  \end{tabular}
%  \vspace{-0.5cm}
%\end{table*}
%\begin{table*}[t]

        We considered three following metrics. 
        The log-likelihood was computed using Annealed Importance Sampling 
        (AIS) \cite{Neal2001, Wu2016} with 1000 intermediate distribution 
        and 5 parallel chains.
        We also applied the Frechet Inception Distance (FID)
        \cite{Heusel2017}. 
        It compares the mean $m_Q$ and covariance $C_Q$ of the Inception-based 
        representation of samples generated by the GAN to the mean $m_P$ and 
        covariance $C_P$ of the same representation for training samples:
        \begin{align}
            D^2\left((m_P,C_P),(m_Q,C_Q)\right) = \| m_P - m_Q\|^2_2 
            + \text{Tr}\left(C_P+ C_Q - 2(C_PC_Q)^{\frac{1}{2}}\right),
        \end{align}
        %This method relies on the Inception-based representation of the samples 
        %capturing all important information and the first two moments of the 
        %distributions being descriptive of the distribution.
        Lastly, we also considered GAN metric proposed by \cite{Im2018}
        that measure the quality of generator by estimating divergence 
        between the true data distribution $P$ and $Q_\theta$ 
        for different choices of divergence measure. 
        In our setting we considered least-square measure (LS). 

    \begin{wraptable}{r}{0.5\textwidth}
        \vspace{-0.5cm}
  \centering
  \caption{Log-likelihood, FID, and LS metric on binarized MNIST.
    IW-AVB performs best for log-likelihood and IW-AAE performs best under FID and LS metric.}
  \label{tab:results}
  \begin{tabular}{|l|ccc|}\hline
                            & Log-Likelihood  & FID & LS \\\hline\hline 
      VAE                   & -90.69 $\pm$ 0.88 & 259.87       & 3.8e-5 \\
      IWAE                  & -91.64 $\pm$ 0.71 & 255.513      & 3.6e-5 \\
      AVB                   & -90.42 $\pm$ 0.78 & 256.13       & 4.1e-5 \\
      IW-AVB                & {\bf -85.12 $\pm$  0.20} & 251.20       & 3.3e-5 \\
      AAE                   & -101.78 $\pm$ 0.62       & 266.76       & 3.8e-5 \\ 
      IW-AAE                & -101.38 $\pm$ 0.19       & {\bf 249.12} & {\bf 3.2e-5}\\\hline
  \end{tabular}
\end{wraptable}

        Table~\ref{tab:results} presents the results. 
        We observe that IW-AVB gets the best test log-likelihood for
        both MNIST and FashionMNIST dataset\footnote{Note that our results are slightly lower than the reported results in \cite{Mescheder2017}. However, we used same codebase for all models}  (the results for FashionMNIST is shown in Appendix). 
        On the other hand, IW-AAE gets the best FID and LS metric. 
        We speculate that the reason is because AVB and IW-AVB directly 
        maximizes the lower bound of the log-likelihood $\log p_\theta(x)$, whereas
        AAE and IW-AAE does not. AAE and IW-AAE maximizes the 
        distance between data and model distribution directly. 
        The MNIST and FashionMNIST samples are shown in Figure~\ref{fig:mnist_samples} and ~\ref{fig:fmnist_samples} in Appendix.

\begin{table*}[t]
    \centering
    \caption{Non-amortized Spike Inference - The performance comparison. Values are correlation between predicted marginal probabilities and ground truth spikes at 25Hz}
    \label{tab:neuron_results_corr25}
     {\footnotesize
    \begin{tabular}{|l|ccccc|c|}\hline
                             & Neuron1 & Neuron2 & Neuron3 & Neuron4 & Neuron5 & Avg.  \\\hline\hline
        VIMCO-FACT                  & 0.653 $\pm$ 0.02 & 0.631 $\pm$ 0.017 & 0.613 $\pm$ 0.026 & 0.473 $\pm$ 0.03 &	0.585 $\pm$ 0.03 &	0.590 $\pm$ 0.063 \\
        VIMCO-CORR                 & 0.711 $\pm$ 0.008  &	0.665 $\pm$ 0.007 &	0.704 $\pm$ 0.01 &	0.50 $\pm$ 0.017 &	0.623 $\pm$ 0.01 &	{\bf 0.64 $\pm$ 0.077} \\\hline
        AVB                   & 0.631 $\pm$ 0.022 & 0.617 $\pm$ 0.003 &	0.617 $\pm$ 0.010 &	0.540 $\pm$ 0.012 &	0.570 $\pm$ 0.003 &	0.594 $\pm$ 0.034 \\
        IW-AVB                & 0.681 $\pm$ 0.005 &	0.616 $\pm$ 0.005 & 0.613 $\pm$ 0.006  &  0.577 $\pm$ 0.005 & 0.567 $\pm$ 0.002 & 0.611 $\pm$ 0.040	 \\\hline 
        AAE                   & 0.680 $\pm$ 0.002 &	0.617 $\pm$ 0.002 & 0.66 $\pm$ 0.01 & 0.563 $\pm$ 0.001  &	0.570 $\pm$ 0.005 &	0.618 $\pm$ 0.047 \\        
        IW-AAE                &  0.682 $\pm$ 0.002   &  0.622 $\pm$  0.002 & 0.614 $\pm$ 0.006 & 0.570 $\pm$ 0.002	 &	0.573 $\pm$ 0.003 &	0.612 $\pm$ 0.041 \\\hline
    \end{tabular}}

    % \centering
    % \caption{The performance comparison. Reconstruction error at 60Hz}
    % \label{tab:neuron_results_recon}
    %  {\scriptsize
    % \begin{tabular}{|l|ccccc|c|}\hline
    %                          & Neuron1 & Neuron2 & Neuron3 & Neuron4 & Neuron5 & Avg. \\\hline\hline
    %     VIMCO-FACT            & 0.043 $\pm$ 0.002 & 0.053 $\pm$ 0.001 & 0.11 $\pm$ 0.013 &	0.09 $\pm$ 0.01 & 0.056 $\pm$ 0.0008 & 0.07 $\pm$ 0.025\\
    %     VIMCO-CORR      & 0.039 $\pm$ 0.001 & 0.05 $\pm$ 0.0077 & 0.083 $\pm$ 2.0e-3 & 0.080 $\pm$ 0.009 & 	0.056 $\pm$ 5.5e-4  & {\bf 0.062 $\pm$ 0.017} \\\hline
    %     AVB                  & 0.051 $\pm$ 0.001 & 0.071 $\pm$ 0.003  & 0.142 $\pm$ 0.003 & 0.160 $\pm$ 0.004  & 0.164 $\pm$ 0.005 & 0.119 $\pm$ 0.046 \\
    %     IW-AVB               & 0.049 $\pm$ 0.002 & 0.074 $\pm$ 0.003 & 0.138 $\pm$ 0.003 &  0.165 $\pm$ 0.007 &  0.161 $\pm$ 0.004 & 0.117 $\pm$ 0.047 \\\hline 
    %     AAE                   & 0.054 $\pm$ 0.001 & 0.074 $\pm$ 0.002 & 0.139 $\pm$ 0.003 & 0.172 $\pm$ 0.002 & 0.169 $\pm$ 0.004 & 0.121 $\pm$ 0.049 \\        
    %     IW-AAE                & 0.048 $\pm$ 0.001 & 0.073 $\pm$ 0.002 & 0.141 $\pm$ 0.002 & 0.167 $\pm$ 0.006 & 0.164 $\pm$ 0.006 & 0.118 $\pm$ 0.047 \\\hline
    % \end{tabular}}
    \vspace{0.5cm}
    \centering
    \caption{Amortized Spike Inference - The performance comparison. Values are correlation between predicted marginal probabilities and ground truth spikes at 25Hz}
    \label{tab:neuron_results_corr60}
     {\footnotesize
    \begin{tabular}{|l|ccccc|c|}\hline
                             & Neuron1 & Neuron2 & Neuron3 & Neuron4 & Neuron5 & Avg. \\\hline\hline
        VIMCO-FACT            & 0.583 $\pm$ 0.02 & 0.594 $\pm$ 0.014 & 0.676 $\pm$ 0.026 & 0.613 $\pm$ 0.03 & 0.560 $\pm$ 0.014 & 0.606 $\pm$ 0.044\\
        VIMCO-CORR            & 0.620 $\pm$ 0.023  & 0.664 $\pm$ 0.014 & 0.708 $\pm$ 0.01	 & 0.642 $\pm$ 0.014   &  0.596 $\pm$ 0.01 &	0.646 $\pm$ 0.042 \\\hline
        AVB                   & 0.611 $\pm$ 0.016 &  0.554 $\pm$ 0.02  & 0.558 $\pm$ 0.002	 & 0.517 $\pm$ 0.017	 &	0.508 $\pm$ 0.024 & 0.550 $\pm$ 0.036	 \\
        IW-AVB                & 0.689 $\pm$ 0.002 &	0.621 $\pm$ 0.004 & 0.610 $\pm$ 0.004  & 0.587 $\pm$ 0.002 & 0.60 $\pm$ 0.008 & 0.621 $\pm$ 0.035 \\\hline 
        AAE                   & 0.691 $\pm$ 0.003 & 0.624 $\pm$ 0.002 & 0.624 $\pm$ 0.003 & 0.587 $\pm$ 0.004 & 0.601 $\pm$ 0.007 & {\bf 0.625 $\pm$ 0.036} \\        
        IW-AAE                & 0.688 $\pm$ 0.001 & 0.624 $\pm$ 0.003 & 0.612 $\pm$ 0.009 & 0.590 $\pm$ 0.004	 &	0.580 $\pm$ 0.004 & 0.619 $\pm$ 0.038	 \\\hline
    \end{tabular}}
\end{table*}

\subsection{Neural activity inference from calcium imaging data}
We consider a challenging and important problem in neuroscience -- spike inference from calcium imaging. Here, the unobserved binary spike train is the latent variable which is transformed by a generative model whose functional form is derived from biophysics into fluorescence measurements of the intracellular calcium concentration. 

%\begin{figure}
%    \centering
%    %\includegraphics{}
%    \label{fig:my_label}
%    \centering
%    \includegraphics[width=0.99\linewidth]{}
%    \caption{Performance of spike inference with respect to number of neurons used for supervision. The performance grows as the neuron size increases% for supervision. }
%    \label{fig:various_semisupervised_dataset}
%\end{figure}

%The most common approaches are unsupervised methods that trains a carefully designed generative models based on the rich biophysical principles that links the spikes to the fluorescence process~\cite{speiser2017fast, deneux2016accurate}. %The advantage of such methods is that it does not require labelled datasets. However, the performance of the spike inference largely depends on the generative model. %In recent, there are a small number of cell-attached electrophysiological recordings of neural spike activity (ground truth) with simultaneously measured fluorescence  calcium  traces  in  the  same  neurons~\cite{berens2018community}.
%The labelled datasets made it possible for training powerful classifiers which are not limited to the design of a generative process. But training such a powerful classifier often demands large labelled datasets which are expensive and sometime impossible to acquire. To use the datasets more effectively, we can use semi-supervised training to take advantage of supervised training and unsupervised training. 

We use a publicly available spike inference dataset, cai-1\footnote{The dataset is available at {\em https://crcns.org/data-sets/methods/cai-1/}}. We use the data from five layer 2/3 pyramidal neurons in mouse visual cortex \footnote{We excluded neurons that has clear artifacts and mislabels in the dataset.}. The neurons are imaged at 60 Hz using  GCaMP6f -- a genetically encoded calcium indicator \cite{chen2013ultrasensitive}. The ground truth spikes were measured electrophysiologically using cell-attached recordings.  %Following existing literature~\cite{chen2013ultrasensitive, speiser2017fast}, we preprocessed the data by removing a slow moving baseline $\Delta F/F_0$. $F$ is the total fluorescence collected (in arbitrary digital units). Baseline was estimated as the mode of a Gaussian kernel density estimator fit to the distribution of F values for a segment in a 1200 sample (2 minute) sliding window. The resulting data after preprocessing is five neurons that has both fluorescence as input and spike activity as label.% (included in the supplementary material). %Among the five neurons, we use one-out-of-sample neuron for unsupervised training. % Then the baseline estimate is used to calculate $\Delta$F/F. 

%\begin{wrapfigure}{l}{0.3\textwidth}
%        \vspace{-0.25cm}
%        \includegraphics[width=\linewidth]{./figs/time_analysis_square.pdf}
%        \caption{Inference time comparison between adversarial training and VIMCO-CORR.
%        The total florescence data duration was 1 hour at 60 Hz sampling rate, and ran on GTX-2080ti}
%        \label{fig:time_analysis}
%        \vspace{-0.75cm}
%\end{wrapfigure}   

When we train AVB, AAE, IW-AVB, IW-AAE to model fluorescence data, we use a biophysical generative model and a convolutional neural network as our inference network.  Thus, the process is to generate (reconstruct) the fluorescence traces with inferred spikes using encoders. We ran five folds on every experiments in neural spike inference dataset. The details of architectures, biophysical model, and datasets can be found in the Appendix~\ref{spike_inf}

We experimented under two settings: Non-amortized spike inference, and amortized spike inference settings. 
Non-amortized spike inference corresponds to training a new inference network for each neuron. This is expensive but it provides an estimate of the best possible performance achievable.
Amortized spike inference setup corresponds to the more useful setting where a ``training'' dataset of neurons is used to train an inference network (without ground truth), and the trained inference network is tested on a new ``test'' neuron. This is the more practically useful setting for spike inference -- once the inference network is trained, spike inference is extremely fast and only requires prediction by the inference network. 
%In the first experiment, we use total fluorescence data and compare the inferred spikes with ground truth spikes for each neuron. In the second experiment, we train on four neural fluorescence data and test on the unseened neural data.  

We use two variants of VIMCO \cite{Mnih2016} as a baseline, VIMCO-FACT and VIMCO-CORR\cite{speiser2017fast}. VIMCO-FACT uses a fast factorized posterior distribution which can be sampled in parallel over time, same as the adversarially trained networks. VIMCO-CORR uses an autoregressive posterior that produces correlated samples which must be sampled sequentially in time (see the details in the Appendix~\ref{spike_inf}).

%We compared our methods with the variational autoencoder method that has been explored for solving the spike inference task~\cite{speiser2017fast}. The baseline model, VIMCO-F and VIMCO-NF, uses a convolutional network    
%We compared our methods with the state-of-art spike inference method~\cite{speiser2017fast}. Speiser {\it et. al.} employs framework of variational autoencoders, where the generative model is a biophysical model constructed deliberately based on the knowledge of calcium florescence process. 
%As in the paper, we use a convolutional neural network with factorized posterior approximation (DS-F). The convolutional neural network has five hidden layers and 20 filters per layer with ReLU units. Both the generative model and the inference network is trained unsupervisedly using variational inference for monte carlo objectives (VIMCO) approach~\cite{mnih2016variational}. 

Following the neuroscience community, we evaluated the quality of our posterior inference networks by computing the correlation between predicted spikes and labels as the performance metric. 
We used a paired t-test \cite{goulden1956} compare the improvement of all pairs of inference networks across five neurons (see Figure~\ref{fig:ttest}).
The full table of correlations scores for all neurons and methods in both amortized and non-amortized settings are shown in Appendix Table~\ref{tab:neuron_results_corr25}. 
We observe that AVB, AAE, IW-AAE, and IW-AVB performances lie in between VIMCO-FACT and VIMCO-CORR.
Overall, we observe that IW-AVB, AAE, and IW-AAE performs similarly across given neuron datasets.
Figure~\ref{fig:reconstruction} illustrates the VIMCO-FACT and IW-AVB posterior approximation on {\em neuron 1 dataset}.
From the figure, we observe that VIMCO-FACT tend to have high false negatives while IW-AVB tend to have high false positives.
The results are similar for amortized experiments as shown in Table~\ref{tab:neuron_results_corr60}. 
Interestingly, the performance of IW-AVB, AAE, and IW-AAE were better than non-amortized experiments. 
This suggests that neural spike influencing can be generalized over multiple neurons. 
Note that this is the first time that adversarial training has been applied to neural spike inference.

\begin{figure*}[t]
    \centering
    \includegraphics[width=\linewidth]{./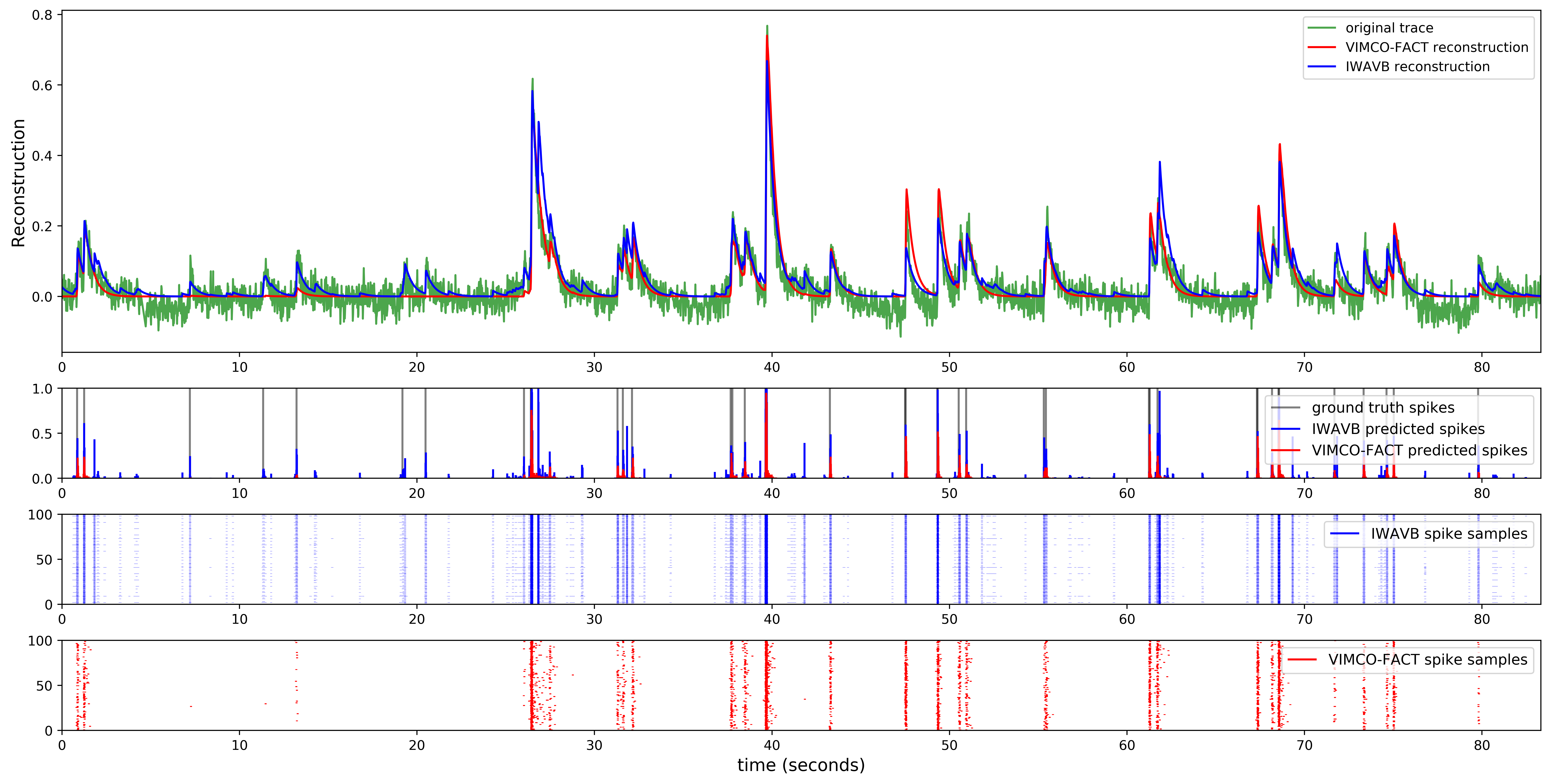}
    \caption{Trace reconstructions along with spike samples}
    \label{fig:reconstruction}
\end{figure*}

Moreover, VIMCO-CORR generates correlated posterior samples, whereas the samples from VIMCO-FACT are independent.
Nevertheless, the inference is slower at test time compared to VIMCO-FACT since the spike inferences 
are done sequentially rather than in parallel.  This is huge disadvantage to VIMCO-CORR, because spike records can be hourly long.
We emphasize the adversarial training, such as IW-AVB  and IW-AAE, because they generate correlated posterior samples in parallel. 
Figure~\ref{fig:time_analysis} demonstrates the time advantage of adversarial training over VIMCO-CORR.
The total florescence data duration was 1 hour at a 60 Hz sampling rate and ran on NVIDIA GeForce RTX 2080 Ti.

\section{Conclusions}
Motivated by two ways of improving the variational bound: importance weighting \cite{Burda2015} and better posterior approximation \cite{Rezende:2015vu,Mescheder2017,Makhzani2016}, we propose importance weighted adversarial variational Bayes (IW-AVB) and importance weighted adversarial autoencoder (IW-AAE). Our theoretical analysis provides better understanding of adversarial autoencoder objectives,
and bridges the gap between log-likelihood of an autoencoder and generator.  

%Training deep latent models with adversarial training quite sensitive to hyper-parameters and adversarial training still does not converge.
%Especially, we notice that any adversarial latent model training requires having a low latent dimension in order to make training stable.
%Therefore, we believe that there are further progresses to be made on how to make adversarial training work even for high dimensional latent spaces.

Adversarially trained inference networks are particularly effective at learning correlated posterior distributions over discrete latent variables which can be sampled efficiently in parallel. We exploit this finding to apply both standard and importance weighted variants of AVB and AAE to the important yet challenging problem of inferring spiking neural activity from calcium imaging data.
We have empirically shown that the correlated posteriors trained adversarially in general outperform existing VAEs with factorized posteriors.
Moreover, we get tremendous speed gain during the spike inference compare to existing VAEs work with autoregressive correlated posteriors \cite{speiser2017fast}.
\vfill

\pagebreak

\bibliography{main}
\bibliographystyle{plain}

\setcounter{proposition}{0} 
\setcounter{theorem}{0} 
\onecolumn
\section*{Appendix}

\subsection*{Proof of Things}

\begin{proposition}
    Assume that $T$ can represent any function of two variables.
    If $(\theta^*, \phi^*, T^*)$ is a Nash Equilibrium of two-player game,
    then $T^*(x,z) = \log \frac{p(z)}{q(z|x)}$
    and $(\theta^*, \phi^*)$ is a global optimum of the importance weighted lower bound.
\end{proposition}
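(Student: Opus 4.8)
The plan is to adapt the Nash-equilibrium argument of Mescheder \emph{et al.}~\cite{Mescheder2017} for AVB to the importance weighted objective: first identify the optimal discriminator as a best response, then show that substituting it back converts the IW-AVB objective into the IWAE objective, whose global optimum is the maximum-likelihood assignment. For the first step I would fix $(\theta,\phi)$ and analyze the discriminator payoff in Equation~\ref{eqnTstar_xz}. Rewriting it as the single integral $\int p_{\mathcal{D}}(x)\big[q_\phi(z|x)\log\sigma(T_\psi(x,z)) + p(z)\log(1-\sigma(T_\psi(x,z)))\big]\,dx\,dz$ and using that $T$ can represent any function of $(x,z)$, the maximization decouples pointwise. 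For fixed $(x,z)$ the map $t\mapsto a\log\sigma(t)+b\log(1-\sigma(t))$ with $a=q_\phi(z|x)$ and $b=p(z)$ is concave in $t$ and maximized at $\sigma(t)=a/(a+b)$, i.e. at $t^\star=\log\frac{q_\phi(z|x)}{p(z)}$. Hence at any Nash equilibrium the discriminator's best response is $T^*(x,z)=\log\frac{q_{\phi^*}(z|x)}{p(z)}$, so $-T^*(x,z)=\log\frac{p(z)}{q_{\phi^*}(z|x)}$, which is the first assertion.

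Next I would substitute $\exp(-T^*(x,z))=p(z)/q_{\phi^*}(z|x)$ into the $\theta$-objective of Equation~\ref{eqn:aiwb2b}. For each $x$ this yields
\[
\log\frac{1}{k}\sum_{i=1}^k p_\theta(x|z_i)\exp\!\big(-T^*(x,z_i)\big) = \log\frac{1}{k}\sum_{i=1}^k \frac{p_\theta(x,z_i)}{q_{\phi^*}(z_i|x)},
\]
so the IW-AVB objective collapses to $\mathcal{L}_{\text{IWAE}}$ evaluated with $q_{\phi^*}$. Since $(\theta^*,\phi^*)$ is a best response to $T^*$ in the two-player game, it maximizes this reduced objective and is therefore a global optimum of the importance weighted lower bound in Equation~\ref{eqn:aiwb2b}. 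Finally, invoking Burda \emph{et al.}~\cite{Burda2015}, $\mathcal{L}_{\text{IWAE}}\le\log p_\theta(x)$ with equality when $q_{\phi^*}(z|x)=p_{\theta^*}(z|x)$, so a joint maximizer over $(\theta,\phi)$ forces $\theta^*$ to be a maximum-likelihood assignment.

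The hard part will be handling the implicit dependence of $T^*$ on $\phi$: when I declare $(\theta^*,\phi^*)$ a best response ``with $T^*$ fixed'' I am really freezing a function that varies with $\phi$. The resolution is the identity $\mathbb{E}_{q_\phi(z|x)}[\nabla_\phi T^*(x,z)]=0$ at the optimal discriminator~\cite{Mescheder2017}, noted in the Properties subsection: the implicit $\phi$-dependence of $T^*$ contributes nothing to the $\phi$-gradient at equilibrium, so a fixed point of the alternating updates is a genuine stationary point of the intended objective --- importance weighted for $\theta$, variational for $\phi$. I would also phrase the $\phi^*$ conclusion in terms of the variational-bound form actually optimized in Equation~\ref{eqn:aiwb2b} rather than the IW form; with that caveat both conclusions of the proposition follow, and the remaining work (the pointwise maximization in the first step and the algebraic substitution in the second) is routine.
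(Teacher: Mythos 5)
Your first two steps match the paper's proof: the pointwise maximization giving $T^*(x,z)=\log\frac{q_{\phi^*}(z|x)}{p(z)}$ (the paper simply cites Goodfellow et al.\ for this), and the substitution that collapses the IW-AVB objective to an IWAE-type expression. The problem is the final step. After substitution the reduced objective is
\begin{align*}
R(\theta,\phi)\;=\;\mathbb{E}_{z_{1:k}\sim q_\phi(z|x)}\Big[\log\tfrac{1}{k}\textstyle\sum_i \tfrac{p_\theta(x,z_i)}{q_{\phi^*}(z_i|x)}\Big],
\end{align*}
in which the density in the denominator is \emph{frozen} at $\phi^*$ while the sampling distribution still varies with $\phi$. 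Being a best response to $T^*$ means $(\theta^*,\phi^*)$ maximizes $R$; it does not immediately follow that it maximizes $\mathcal{L}_{\text{IWAE}}(\theta,\phi)$, where the denominator tracks $\phi$. Your proposed resolution---the identity $\mathbb{E}_{q_\phi}[\nabla_\phi T^*(x,z)]=0$---addresses why the alternating gradient updates are consistent, but it only yields stationarity; the proposition asserts \emph{global} optimality, and a zero-gradient argument cannot deliver that.

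The paper closes this gap differently. Following Cremer et al., it introduces the implicit importance-weighted posterior $q_{IW}(z_i|x,z_{/i})$ built from the normalized importance weights $\tilde{w}_i$, rewrites $R$ as $J(\theta,\phi)=\mathcal{L}_{\text{VAE}}[q_{IW}](\theta,\phi)+\mathbb{E}_{p_{\mathcal{D}}}\big[\mathbb{KL}(q^{IW}_\phi\|q^{IW}_{\phi^*})\big]$, and notes that $\mathcal{L}_{\text{VAE}}[q_{IW}]$ equals $\mathcal{L}_{\text{IWAE}}$. Since the KL term is nonnegative and vanishes at $\phi=\phi^*$, one gets $\mathcal{L}_{\text{IWAE}}(\theta^*,\phi^*)=J(\theta^*,\phi^*)\geq J(\theta,\phi)\geq\mathcal{L}_{\text{IWAE}}(\theta,\phi)$ for all $(\theta,\phi)$ (the paper phrases this as a short contradiction argument), which is exactly the bridge your write-up is missing. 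To repair your proof you would need to add this decomposition---or some equivalent argument showing $R\geq\mathcal{L}_{\text{IWAE}}$ pointwise with equality at $\phi^*$---rather than rely on the gradient identity.
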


\begin{proof}
    Suppose that $(\theta^*, \phi^*, T^*)$ is a Nash Equilibrium.
    It was previously shown by \cite{Goodfellow2014} that 
    \begin{align*}
    T^*(x,z) 
    = \max_{T}
        \mathbb{E}_{p_{\mathcal{D}}(x)}\left[
            \mathbb{E}_{q_\phi(z|x)} \bigg[ \log \sigma (T_\psi(x,z)) \right]
        + \mathbb{E}_{p(z)} \log (1- \sigma(T_\psi(x,z)))\bigg]
    = \log p(z) - \log q_\phi(z|x).
    \end{align*}
    
    Now, we substitute
    $T^*(x,z)=\log q^*(z|x) - \log p(z)$ 
    into Equation~\ref{eqn:aiwbo} and show that $(\theta^*, \phi^*)$  
    maximizes the following formula as a function of $\phi$ and $\theta$:
    \begin{align*}
        & \mathbb{E}_{p_D(x)} \mathbb{E}_{q_\phi(z|x)} 
        \left[ \log \frac{1}{K}\sum^K_{i=1} p_\theta(x|z_i)\exp(-T^*(x,z_i)) \right]
        = 
        \mathbb{E}_{p_D(x)} \mathbb{E}_{z\sim q_\phi(z|x)} 
        \left[ \log \frac{1}{K}\sum^K_{i=1} p_\theta(x|z_i)\frac{p(z_i)}{q_{\phi^*}(z_i|x)} \right].
    \end{align*}

    Define the implicit distribution $q_{\text{IW}}$:
        \begin{align*}
            q_{IW}(z_i|x,z_{/\ i}) := K\tilde{w}_iq(z_i|x) = \frac{p(x,z_i)}{\frac{1}{k}\sum^K_{j=1} \frac{p(x,z_j)}{q(z_j|x)}},
        \end{align*}
        where 
        \begin{align*}
            \tilde{w}_i = \frac{\frac{p(x,z_i)}{q(z_i|x)}}{\sum^K_{j=1} \frac{p(x,z_j)}{q(z_j|x)}}
        \end{align*}
        is an importance weight.

    Now, following the steps of turning $\mathcal{L}_{\text{IWAE}}$ in terms of $\mathcal{L}_{\text{VAE}}$ with implicit $q_{\phi^*}^{IW}$   \cite{Cremer2017}, we have
    \begin{align*}
        \mathbb{E}_{p_D(x)} \mathbb{E}_{z\sim q_\phi(z|x)} \left[ \log \frac{1}{K}\sum^K_{i=1} \frac{p_\theta(x,z_i)}{q_{\phi^*}(z_i|x)} \right] 
        &= \mathbb{E}_{p_D(x)} \mathbb{E}_{z\sim q_\phi(z|x)} \left[ \sum^{K}_{l=1} \tilde{w_l} \left[ \log \frac{1}{K}\sum^K_{i=1}           \frac{p_\theta(x,z_i)}{q_{\phi^*}(z_i|x)} \right]\right] \\
        &= \mathbb{E}_{p_D(x)} \mathbb{E}_{z\sim q_\phi^{IW}(z|x)} \left[ \log \frac{1}{K}\sum^K_{i=1}           \frac{p_\theta(x,z_i)}{q_{\phi^*}(z_i|x)} \right] \\
        &= \mathbb{E}_{p_D(x)} \mathbb{E}_{z\sim q_\phi^{IW}(z|x)} \left[ \frac{1}{K}\sum^{K}_{j=1} \log \frac{p(x,z_j)}{\frac{p(x,z_j)}{\frac{1}{K}\sum^K_{i=1}\frac{p_\theta(x,z_i)}{q_{\phi^*}(z_i|x)}}} \right] \\
        &= \mathbb{E}_{p_D(x)}\mathbb{E}_{q_\phi^{IW}}(z|x) 
        \left[ \frac{1}{K}\sum^{K}_{i=1}\log \left(\frac{p_\theta(x,z_i)}{q_{\phi^*}^{IW}(z_i|x)}\right)\right]
    \end{align*} 
    
    Thus, $(\theta^*, \phi^*)$ maximizes
    \begin{align*}
        J(\theta, \phi) = \mathcal{L}_{VAE}[q_{\text{IW}}](\theta, \phi) + \mathbb{E}_{p_D(x)}\left[\mathbb{KL}(q_\phi^{IW}(z|x), q_\phi^{IW*}(z|x))\right].
    \end{align*}

    Proof by contradiction, suppose that $(\theta^*, \phi^*)$ does not
    maximize the variational lower bound in $\mathcal{L}_{VAE}[q_{\text{IW}}](\theta, \phi)$.
    So there exist $(\theta^\prime, \phi^\prime)$ such that 
    \begin{align*}
        \mathcal{L}_{VAE}[q_{\text{IW}}](\theta^\prime, \phi^\prime)
        > \mathcal{L}_{VAE}[q_{\text{IW}}](\theta^*, \phi^*).
    \end{align*}
    However, substituting $\mathcal{L}_{VAE}[q_{\text{IW}}](\theta^\prime, \phi^\prime)$in 
    $J(\theta, \phi)$ is greater than $J(\theta^*, \phi^*)$, which contradicts the assumption.
    Hence, $(\theta^*, \phi^*)$ is a global optimum of 
    $\mathcal{L}_{VAE}[q_{\text{IW}}](\theta, \phi)$.

    Since we can express $\mathcal{L}_{\text{IWAE}}$ in terms of 
        $\mathcal{L}_{\text{VAE}}$ with implicit distribution $Q_{\text{IW}}$ \cite{Cremer2017},
        \begin{align*}
            \mathcal{L}_{\text{VAE}}[\text{Q}_{\text{IW}}] \nonumber
            &= \mathbb{E}_{p_{\mathcal{D}(x)}}\mathbb{E}_{z_1\cdots z_k\sim q_{IW}(z|x)}\left[ \frac{1}{k}\sum^K_{i=1} \log\left(\frac{p(x,z_i)}{q_{IW}(z_i|x)}\right)\right]\nonumber\\
            &= \mathbb{E}_{p_{\mathcal{D}(x)}}\mathbb{E}_{z_1\cdots z_k\sim q(z|x)}\left[ \log\left(\frac{1}{k}\sum^K_{i=1} \frac{p(x,z_i)}{q(z_i|x)}\right)\right]
            = \mathcal{L}_{\text{IWAE}}
        \end{align*}

    ($\theta^*, \phi^*$) is also a global optimum of the importance weighted lower bound. 
    
\end{proof}

\setcounter{proposition}{2} 
\begin{proposition}
    Assume that $c(x,\tilde{x}) = \|x-\tilde{x}\|^2$,
    $p_\theta(\tilde{x}|z)=\mathcal{N}(\tilde{x};G_\theta(z), \sigma^2I)$.
    Then, 
    \begin{align} 
        \log p_{\theta,\phi}(x) \geq -W_c^\ddagger(p_\mathcal{D}, p_\theta) \geq -W_c^\dagger(p_\mathcal{D}, p_\theta)
    \end{align}
    where
    \begin{align*}
        W_c^\ddagger(p_\mathcal{D}, p_\theta) &= \inf_{q:q_\phi(z)=p(z)} \mathbb{E}_{p_\mathcal{D}}
            \mathbb{E}_{z_1\ldots z_k \sim q_\phi(z|x)} \left[ \log \frac{1}{K}\sum^{K}_{k=0}  p(x, G_\theta(z_k))\right] \\
        W_c^\dagger(p_\mathcal{D}, p_\theta) &= \inf_{q:q_\phi(z)=p(z)} \mathbb{E}_{p_\mathcal{D}}
            \mathbb{E}_{z_1\ldots z_k \sim q_\phi(z|x)} \left[ \frac{1}{K}\sum^{K}_{k=0} \log p(x, G_\theta(z_k))\right].
    \end{align*} 
    $D_{\text{AAE}}$ converges to $W_c^\dagger(p_\mathcal{D}, p_\theta)$
    and $D_{\text{IW-AAE}}$ converges to $W_c^\ddagger(p_\mathcal{D}, p_\theta)$
    as $\lambda=2\sigma^2\rightarrow \infty$.
\end{proposition}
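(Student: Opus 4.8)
The plan is to peel the statement into three independent pieces: the outer bound $-\log p_{\theta,\phi}(x)\le W_c^\ddagger$ via a first application of Jensen's inequality, the ordering $W_c^\ddagger\le W_c^\dagger$ via a second application of Jensen, and the $\lambda=2\sigma^2\to\infty$ limits by reusing the Bousquet \emph{et al.} argument that a diverging GAN penalty collapses onto the hard constraint $q_\phi(z)=p(z)$.

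\textbf{Outer Jensen bound.} By definition $p_{\theta,\phi}(x)=\int p_\theta(x|z)q_\phi(z|x)\,dz=\mathbb{E}_{z\sim q_\phi(z|x)}[p_\theta(x|z)]$, and since the arithmetic mean of $K$ i.i.d.\ draws has the same expectation as a single draw, $p_{\theta,\phi}(x)=\mathbb{E}_{z_1,\ldots,z_K\sim q_\phi(z|x)}\bigl[\tfrac1K\sum_{k=1}^{K}p_\theta(x|z_k)\bigr]$. Applying Jensen's inequality to the convex map $-\log$ gives, pointwise in $x$,
\[
  -\log p_{\theta,\phi}(x)\ \le\ \mathbb{E}_{z_1,\ldots,z_K\sim q_\phi(z|x)}\Bigl[-\log\tfrac1K\textstyle\sum_{k=1}^{K}p_\theta(x|z_k)\Bigr].
\]
Under $p_\theta(\tilde x|z)=\mathcal{N}(\tilde x;G_\theta(z),\sigma^2 I)$ and $c(x,\tilde x)=\|x-\tilde x\|^2$ we have $-\log p_\theta(x|z)=\tfrac1{2\sigma^2}c(x,G_\theta(z))+\tfrac d2\log(2\pi\sigma^2)$, so the right-hand side is, up to the $(\theta,\phi)$-independent normalizer, precisely the integrand defining $W_c^\ddagger$. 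Taking $\mathbb{E}_{p_\mathcal{D}(x)}$ of both sides and restricting to encoders with $q_\phi(z)=p(z)$ — in particular the infimizing one (equivalently, the bound holds for every feasible $q_\phi$), which is the reading behind ``$W_c^\ddagger$ lower-bounds $\log p_{\theta,\phi}(x)$ under $q_\phi(z)=p(z)$'' — yields $-\log p_{\theta,\phi}(x)\le W_c^\ddagger(p_\mathcal{D},p_\theta)$.

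\textbf{Inner Jensen bound.} A second application of Jensen, now to the concave $\log$ and the empirical mean over $z_1,\ldots,z_K$, gives $\log\tfrac1K\sum_{k}p_\theta(x|z_k)\ge\tfrac1K\sum_{k}\log p_\theta(x|z_k)$; equivalently the $W_c^\ddagger$-integrand $-\log\tfrac1K\sum_k p_\theta(x|z_k)$ is pointwise dominated by the $W_c^\dagger$-integrand $-\tfrac1K\sum_k\log p_\theta(x|z_k)$. Taking $\mathbb{E}_{p_\mathcal{D}}$ and then the infimum over the same feasible set $\{q:q_\phi(z)=p(z)\}$ preserves the inequality, so $W_c^\ddagger(p_\mathcal{D},p_\theta)\le W_c^\dagger(p_\mathcal{D},p_\theta)$. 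Together with the previous step this establishes the full chain $\log p_{\theta,\phi}(x)\ge -W_c^\ddagger(p_\mathcal{D},p_\theta)\ge -W_c^\dagger(p_\mathcal{D},p_\theta)$.

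\textbf{The $\lambda=2\sigma^2\to\infty$ limits and the main obstacle.} For the convergence statements I would mirror the Bousquet \emph{et al.} proof that $D_{\text{POT}}\to W_c^\dagger$. Both $D_{\text{AAE}}$ and $D_{\text{IW-AAE}}$ carry the term $D_{\text{GAN}}(q_\phi(z),p(z))$ weighted by $\lambda=2\sigma^2$; as $\lambda\to\infty$ any minimizing encoder is forced into the feasible set, on which $D_{\text{GAN}}(q_\phi(z),p(z))=0$ and the optimal discriminator satisfies $T^*(z)=\log\frac{p(z)}{q_\phi(z)}=0$. For $D_{\text{AAE}}$ the surviving term $\mathbb{E}_{p_\mathcal{D}}\mathbb{E}_{q_\phi(z|x)}[\tfrac1K\sum_k\log p_\theta(x|z_k)]$ equals $-\tfrac1{2\sigma^2}\mathbb{E}_{p_\mathcal{D}}\mathbb{E}_{q_\phi(z|x)}[\tfrac1K\sum_k c(x,G_\theta(z_k))]$ plus a constant, which after rescaling by $\lambda=2\sigma^2$ is exactly the $W_c^\dagger$ objective; for $D_{\text{IW-AAE}}$ the surviving term $\mathbb{E}_{p_\mathcal{D}}\mathbb{E}_{q_\phi(z|x)}[\log\tfrac1K\sum_k p_\theta(x|z_k)\exp(-T^*(z_k))]$ collapses, upon substituting $T^*(z_k)=0$, to the $W_c^\ddagger$ objective. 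The delicate point — where the real work lies — is making this limit rigorous: one needs a $\Gamma$-convergence/compactness argument over $\mathcal{Q}$ to show the diverging penalty genuinely enforces the hard constraint, and one must note that the additive Gaussian normalizer $\tfrac d2\log(2\pi\sigma^2)$, which itself diverges with $\sigma$, is harmless because it is independent of $(\theta,\phi)$ and hence affects neither the $\arg\inf$ nor the ordering; i.e.\ all these ``convergence'' claims are understood modulo $(\theta,\phi)$-independent additive constants, exactly as in \cite{Bousquet2017}. The two Jensen steps themselves are routine once the decoder identity $-\log p_\theta(x|z)=\tfrac1{2\sigma^2}c(x,G_\theta(z))+\text{const}$ is recorded.
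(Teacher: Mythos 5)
Your proposal is correct and follows essentially the same route as the paper's proof: the chain $-W_c^\dagger \le -W_c^\ddagger \le \log p_{\theta,\phi}(x)$ is obtained from the same two applications of Jensen's inequality (concavity of $\log$ over the $K$-sample average, then over the expectation under $q_\phi(z|x)$, using that the mean of i.i.d.\ draws has the single-draw expectation), and the $\lambda=2\sigma^2\to\infty$ limit is deferred to the Bousquet \emph{et al.} penalized-optimal-transport argument exactly as in the paper. Your additional remarks on the diverging Gaussian normalizer and the need for a compactness argument to make the hard-constraint limit rigorous go beyond the paper's one-sentence treatment of that step, but do not change the approach.
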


\begin{proof}
    \begin{align*}
        -W^{\dagger} &= \inf_{q:q_\phi(z)=p(z)} \mathbb{E}_{p_\mathcal{D}}
            \mathbb{E}_{z_1\ldots z_k \sim q_\phi(z|X^\prime=x)} \left[ \frac{1}{K}\sum^{K}_{k=0} \log p_\theta(X=x|z_k)\right]\\
        &\leq \inf_{q:q_\phi(z)=p(z)} \mathbb{E}_{p_\mathcal{D}}
            \mathbb{E}_{z_1\ldots z_k \sim q_\phi(z|X^\prime=x)} \left[ \log \frac{1}{K}\sum^{K}_{k=0} p_\theta(X=x|z_k)\right]\\
        &\leq \inf_{q:q_\phi(z)=p(z)} \mathbb{E}_{p_\mathcal{D}}
            \left[ \log \mathbb{E}_{z_1\ldots z_k \sim q_\phi(z|X^\prime=x)} \frac{1}{K}\sum^{K}_{k=0} p_\theta(X=x|z_k)\right]\\
        &= \inf_{q:q_\phi(z)=p(z)} \mathbb{E}_{p_\mathcal{D}} \left[ \log \int q_\phi (z|X^\prime=x) p_\theta(X=x|z) dz \right]\\
        &= \inf_{q:q_\phi(z)=p(z)} \mathbb{E}_{p_\mathcal{D}} \left[ \log p_{\theta,\phi} (X=x|X^\prime=x) \right]
        %&\leq \inf_{q:q_\phi(z)=p(z)}  \log \mathbb{E}_{p_\mathcal{D}} \left[ p_{\theta,\phi} (X=x|x) \right]\\
        %&= \inf_{q:q_\phi(z)=p(z)} \log p_{\theta,\phi} (x).
    \end{align*}
    
    As the $\lambda$ goes to $\infty$, the constraint $q_\phi(z)=p(z)$ is satisfied. This means that $\log \frac{q(z)}{p(z)}=1$. 
    Then, AAE objectives becomes $W_c^\dagger(p_\mathcal{D}, p_\theta)$ and IW-AAE becomes $W_c^\ddagger(p_\mathcal{D}, p_\theta$).
    \end{proof}

\begin{theorem}%[The variational upper bound]
Maximizing AAE objective is equivalent to jointly maximizing 
$\log p_\theta(x)$, mutual information with respect to 
$q_\phi(x,z)$, and the negative of KL divergence between 
joint distribution $p_\theta(x,z)$ and $q_\phi(x,z)$,
\begin{align} 
    \mathcal{L}_{\text{AAE}} = \log p_\theta(x) + \mathbb{I}_{q_\phi(x,z)}\left[x,z\right]-\mathbb{KL}(q_\phi(x,z)\|p(x,z)).
    %\underbrace{\mathbb{E}_{p_\mathcal{D}}\mathbb{E}_{q_\phi(z|x)}\left[\log\frac{p_\theta(x|z)p(z)}{q\phi(z)}\right]}_{\mathcal{L}_{\text{AAE}}}
    %= \log p_\theta(x) 
    %- \underbrace{\mathbb{E}_{p_\mathcal{D}}\mathbb{E}_{q(z|x)}\left[\log\frac{q(z)}{p(z|x)}\right]}_{-\mathbb{I}_q[x,z]+KL(q_\phi(x,z)\|p(x,z))}
\end{align}          
\label{theorem1}
\end{theorem}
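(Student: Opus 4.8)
The plan is to unfold $\mathcal{L}_{\text{AAE}}$ using the optimal discriminator and then regroup the resulting log-ratios, via two applications of Bayes' rule, into a mutual-information term and a KL term. Since the $k$ summands in $\mathcal{L}_{\text{AAE}}$ are i.i.d.\ draws from $q_\phi$, the $\tfrac1k\sum_i$ collapses in expectation to a single term, so it suffices to treat $k=1$. Substituting the optimal discriminator $T^*(z)=\log q_\phi(z)-\log p(z)$ from Equation~\ref{eqnTstar_z} and taking the (implicit) expectation over $p_{\mathcal{D}}(x)$ gives
\[
\mathcal{L}_{\text{AAE}} = \mathbb{E}_{q_\phi(x,z)}\bigl[\log p_\theta(x|z) + \log p(z) - \log q_\phi(z)\bigr],
\]
where $q_\phi(x,z):=p_{\mathcal{D}}(x)\,q_\phi(z|x)$ is the encoder joint and $q_\phi(z)$ its $z$-marginal (the aggregated posterior).

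Next I would apply Bayes' rule on the generative model, $\log p_\theta(x|z)+\log p(z)=\log p_\theta(x,z)=\log p_\theta(x)+\log p_\theta(z|x)$, and peel off $\mathbb{E}_{q_\phi(x,z)}[\log p_\theta(x)]=\mathbb{E}_{p_{\mathcal{D}}(x)}[\log p_\theta(x)]$ (valid because the $z$-marginal of $q_\phi(x,z)$ is exactly $p_{\mathcal{D}}$). The remainder, $\mathbb{E}_{q_\phi(x,z)}[\log p_\theta(z|x)-\log q_\phi(z)]$, I would split by adding and subtracting $\log q_\phi(z|x)$ inside the expectation: the piece $\mathbb{E}_{q_\phi(x,z)}[\log q_\phi(z|x)-\log q_\phi(z)]$ equals $\mathbb{I}_{q_\phi(x,z)}[x,z]$ by the standard identity that the expected KL from a conditional to its marginal is the mutual information, and the piece $\mathbb{E}_{q_\phi(x,z)}[\log p_\theta(z|x)-\log q_\phi(z|x)]=-\mathbb{E}_{p_{\mathcal{D}}(x)}[\mathbb{KL}(q_\phi(z|x)\|p_\theta(z|x))]=-\mathbb{KL}(q_\phi(x,z)\|p(x,z))$ once $p(x,z)$ is read as the joint $p_{\mathcal{D}}(x)\,p_\theta(z|x)$. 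Collecting the three pieces gives Equation~\ref{eqn:theorem1}. As a cross-check I could instead start from Theorem~\ref{aae:theorem2}, $\mathcal{L}_{\text{AAE}}=\mathcal{L}_{\text{VAE}}+\mathbb{E}_{p_{\mathcal{D}}}[\mathbb{KL}(q_\phi(z|x)\|q_\phi(z))]$, and combine it with the textbook ELBO decomposition $\mathcal{L}_{\text{VAE}}=\mathbb{E}_{p_{\mathcal{D}}}[\log p_\theta(x)]-\mathbb{E}_{p_{\mathcal{D}}}[\mathbb{KL}(q_\phi(z|x)\|p_\theta(z|x))]$ and the same mutual-information identity; both routes land on the same formula.

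The step I expect to cause the most friction is purely bookkeeping: pinning down which joint $p(x,z)$ is meant. If one instead reads $p(x,z)=p_\theta(x|z)\,p(z)$ (the ``forward'' generative joint), then $\mathbb{KL}\bigl(q_\phi(x,z)\,\|\,p_\theta(x|z)p(z)\bigr)=\mathbb{KL}\bigl(q_\phi(x,z)\,\|\,p_{\mathcal{D}}(x)p_\theta(z|x)\bigr)+\mathbb{KL}(p_{\mathcal{D}}\|p_\theta)$, and the stray $\mathbb{KL}(p_{\mathcal{D}}\|p_\theta)$ combines with $\mathbb{E}_{p_{\mathcal{D}}}[\log p_\theta(x)]$ into the constant $-H(p_{\mathcal{D}})$; so to obtain the identity exactly as displayed one must read the likelihood term as $\mathbb{E}_{p_{\mathcal{D}}}[\log p_\theta(x)]$ and the KL term against the ``reverse'' joint $p_{\mathcal{D}}(x)p_\theta(z|x)$ (equivalently, absorb $\mathbb{KL}(p_{\mathcal{D}}\|p_\theta)$ into the likelihood term). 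I would state this convention explicitly at the top of the proof; after that, everything reduces to two uses of Bayes' rule and the definition of mutual information.
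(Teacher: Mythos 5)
Your proposal is correct and follows essentially the same route as the paper's proof: substitute the optimal discriminator $T^*(z)=\log q_\phi(z)-\log p(z)$, use Bayes' rule $p_\theta(x|z)p(z)=p_\theta(x)p(z|x)$ to peel off $\log p_\theta(x)$, and split the residual $\mathbb{E}_{q_\phi(x,z)}\left[\log\frac{q_\phi(z)}{p(z|x)}\right]$ into $-\mathbb{I}_{q_\phi(x,z)}[x,z]+\mathbb{KL}(q_\phi(x,z)\|p(x,z))$ by inserting the encoder joint (you insert $\log q_\phi(z|x)$; the paper inserts $q_\phi(z,x)$ into the ratio --- the same manipulation). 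Your closing remark about which joint $p(x,z)$ must be meant (the ``reverse'' joint $p_{\mathcal{D}}(x)p_\theta(z|x)$, not $p_\theta(x|z)p(z)$) is a legitimate clarification of a convention the paper leaves implicit, and does not change the argument.
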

\begin{proof}
    \begin{align}
        \mathcal{L}_{\text{AAE}} = \mathbb{E}_{p_\mathcal{D}(x)}\mathbb{E}_{q_\phi(z|x)} \left[ \log \frac{p_\theta(x|z)p(z)}{q_\phi(z)}\right] 
        \label{eqn:aae_org2}
    \end{align}
    is the AAE objective with optimal $T^*(z) = \frac{p(z)}{q(z)}$
    (In practice, we do adversarial training to approximate 
    Equation~\ref{eqn:aae_org2}).

    It is straight forward to see that 
    $\mathbb{E}_{p_\mathcal{D}(x)}\log p_\theta(x) = \mathbb{E}_{p_\mathcal{D}(x)}\left[ \mathbb{E}_{q_\phi(z|x)} \left[ \log \frac{p_\theta(x|z)p(z)}{q_\phi(z)}\right]
                + \mathbb{E}_{q_\phi(z|x)} \left[ \log \frac{q_\phi(z)}{p(z|x)}\right]  \right]$:
    \begin{align*}
        \mathbb{E}_{q_\phi(z|x)} \left[ \log \frac{p_\theta(x|z)p(z)}{q_\phi(z)}\right] 
                + \mathbb{E}_{q_\phi(z|x)} \left[ \log \frac{q_\phi(z)}{p(z|x)}\right]
            &=\mathbb{E}_{q_\phi(z|x)} \left[ \log \frac{p_\theta(x|z)p(z)}{q_\phi(z)}\frac{q_\phi(z)}{p(z|x)}\right]  \\
            &=\mathbb{E}_{q_\phi(z|x)} \left[ \log \frac{p_\theta(x|z)p(z)}{p(z|x)}\right]  \\
            &=\mathbb{E}_{q_\phi(z|x)} \left[ \log p_\theta(x) \right]  \\
            &=\log p_\theta(x).
    \end{align*}
    
    Now, we pay attention to the second term $\mathbb{E}_{p_\mathcal{D}(x)} \mathbb{E}_{q_\phi(z|x)} \left[ \log \frac{q_\phi(z)}{p(z|x)}\right]$.
    Then, we see that this term is equivalent to $\mathbb{KL}(q_\phi(x,z)\|p(x,z)) - \mathbb{I}_{q_\phi(z,x)}(x,y) $.
    \begin{align*}
         \mathbb{E}_{p_\mathcal{D}(x)} \mathbb{E}_{q_\phi(z|x)} \left[ \log \frac{q_\phi(z)}{p(z|x)}\right]
            &= \mathbb{E}_{p_\mathcal{D}(x)} \mathbb{E}_{q_\phi(z|x)} \left[ \log \frac{q_\phi(z)p(x)}{p(z,x)}\right]\\
            &= \mathbb{E}_{p_\mathcal{D}(x)} \mathbb{E}_{q_\phi(z|x)} \left[ \log \frac{q_\phi(z)p(x)}{q_\phi(z,x)}\frac{q_\phi(x,z)}{p(z,x)}\right]\\
            &= \mathbb{E}_{p_\mathcal{D}(x)} \mathbb{E}_{q_\phi(z|x)} \left[ \log \frac{q_\phi(z)p(x)}{q_\phi(z,x)}\right] + \mathbb{E}_{p_\mathcal{D}(x)} \mathbb{E}_{q_\phi(z|x)} \left[\log\frac{q_\phi(x,z)}{p(z,x)}\right]\\
            &= -\mathbb{I}_{q_\phi(x,z)}(x,z) + \mathbb{KL}(q_\phi(x,z)\|p(x,z))\\
    \end{align*}
\end{proof}

Finally, the relationship between $\log p_\theta(x)$ and $\log p_{\theta, \phi}(x)$ becomes: 
        %$W^{\ddagger}(p_\mathcal{D}(x), p_\theta)$. 

        %Additionally, $\mathcal{L}_{\text{AAE}}$ approximates
        %$\mathcal{L}_{\text{AAE}}^{\dagger}$ by finding the gloal 
        %adversarial optimal is found, i.e., $T(z) \simeq \frac{p(z)}{q(z)}$.
        \begin{corollary}
            Assume that $c(x,\tilde{x}) = \|x-\tilde{x}\|^2$,
            $p_\theta(\tilde{x}|z)=\mathcal{N}(\tilde{x};G_\theta(z), \sigma^2I)$
            for $W^{\ddagger}(p_\mathcal{D}(x), p_\theta)$. Then,
            \begin{align}
                \log p_\theta(x) \leq  \log p_{\theta, \phi}(x) - \mathbb{I}_{q_\phi(x,z)}\left[x,z\right]+\mathbb{KL}(q_\phi(x,z)\|p(x,z))
            \end{align}
        \end{corollary}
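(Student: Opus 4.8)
The plan is to reduce the Corollary, by way of Theorem~\ref{theorem1}, to the single inequality $\mathcal{L}_{\text{AAE}} \le \log p_{\theta,\phi}(x)$, and then to close that gap with one application of Jensen's inequality. Theorem~\ref{theorem1} gives the exact identity $\mathcal{L}_{\text{AAE}} = \log p_\theta(x) + \mathbb{I}_{q_\phi(x,z)}[x,z] - \mathbb{KL}(q_\phi(x,z)\|p(x,z))$, which rearranges to $\log p_\theta(x) = \mathcal{L}_{\text{AAE}} - \mathbb{I}_{q_\phi(x,z)}[x,z] + \mathbb{KL}(q_\phi(x,z)\|p(x,z))$. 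Hence the Corollary is precisely the statement $\mathcal{L}_{\text{AAE}} \le \log p_{\theta,\phi}(x)$, with the implicit $\mathbb{E}_{p_\mathcal{D}(x)}$ read exactly as in the statement and proof of Theorem~\ref{theorem1} (and with $\log p_{\theta,\phi}(x)$ understood via the same notational abuse, $\log p_{\theta,\phi}(X=x\mid X'=x)$).

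To establish that inequality I would start from the closed form of the AAE objective at the optimal discriminator used in the proof of Theorem~\ref{theorem1}, namely $\mathcal{L}_{\text{AAE}} = \mathbb{E}_{p_\mathcal{D}(x)}\mathbb{E}_{q_\phi(z|x)}\big[\log \tfrac{p_\theta(x|z)p(z)}{q_\phi(z)}\big]$. Under the hypotheses of the Corollary --- $c(x,\tilde x)=\|x-\tilde x\|^2$, $p_\theta(\tilde x|z)=\mathcal{N}(\tilde x;G_\theta(z),\sigma^2 I)$, in the $W_c^\ddagger$ regime $\lambda = 2\sigma^2 \to \infty$ of Proposition~\ref{prop:wasser} --- the aggregated-posterior constraint $q_\phi(z)=p(z)$ holds, so the log-ratio term vanishes and $\mathcal{L}_{\text{AAE}} = \mathbb{E}_{p_\mathcal{D}(x)}\mathbb{E}_{q_\phi(z|x)}[\log p_\theta(x|z)]$. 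Applying Jensen's inequality to the concave map $\log$, $\mathbb{E}_{q_\phi(z|x)}[\log p_\theta(x|z)] \le \log \mathbb{E}_{q_\phi(z|x)}[p_\theta(x|z)] = \log \int q_\phi(z|x)p_\theta(x|z)\,dz = \log p_{\theta,\phi}(x)$, and taking $\mathbb{E}_{p_\mathcal{D}(x)}$ gives $\mathcal{L}_{\text{AAE}} \le \log p_{\theta,\phi}(x)$; combining with the rearranged identity yields the claim. Equivalently, the first two inequalities in the proof of Proposition~\ref{prop:wasser} already carry out exactly this Jensen chain to get $-W_c^\ddagger(p_\mathcal{D},p_\theta) \le \log p_{\theta,\phi}(x)$, and in this regime $\mathcal{L}_{\text{AAE}}$ coincides with that reconstruction term, so one may simply quote Proposition~\ref{prop:wasser}; the direct route above is shorter and self-contained.

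I do not expect a substantive obstacle; the only point needing mild care is the bookkeeping of the log-ratio term. Note $\mathbb{E}_{p_\mathcal{D}(x)}\mathbb{E}_{q_\phi(z|x)}[\log\tfrac{p(z)}{q_\phi(z)}] = -\mathbb{KL}(q_\phi(z)\|p(z)) \le 0$, so in fact $\mathcal{L}_{\text{AAE}} \le \mathbb{E}_{p_\mathcal{D}(x)}\mathbb{E}_{q_\phi(z|x)}[\log p_\theta(x|z)] \le \log p_{\theta,\phi}(x)$ holds with no limiting assumption whatsoever; invoking the $W_c^\ddagger$ regime ($q_\phi(z)=p(z)$) is only what makes $\mathcal{L}_{\text{AAE}}$ equal the pure reconstruction term, which is what ties the bound cleanly back to Proposition~\ref{prop:wasser}. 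The remaining thing to keep straight is that $\log p_\theta(x)$ and $\log p_{\theta,\phi}(x)$ are both being read in expectation over $p_\mathcal{D}(x)$, consistent with the conventions already fixed in Theorem~\ref{theorem1} and its proof.
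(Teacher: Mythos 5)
Your proposal is correct and follows essentially the same route as the paper: the paper's proof is exactly ``apply Proposition~\ref{prop:wasser} into Theorem~\ref{aae:theorem1},'' i.e., rearrange the Theorem~\ref{aae:theorem1} identity so the claim reduces to $\mathcal{L}_{\text{AAE}} \le \log p_{\theta,\phi}(x)$, and then obtain that bound from the Jensen chain already carried out in the proof of Proposition~\ref{prop:wasser}. Your added observation that the log-ratio term equals $-\mathbb{KL}(q_\phi(z)\|p(z)) \le 0$, so the inequality holds even without the $q_\phi(z)=p(z)$ limiting regime, is a correct and worthwhile strengthening of the bookkeeping, but it does not change the underlying argument.
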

        The proof is simply applying Proposition~\ref{prop:wasser} 
        into Theorem~\ref{theorem1}.
        The gap between the autoencoder log-likelihood 
        and the generative model log-likelihood is 
        at least the difference between mutual information and the relative 
        information between $q_\phi(x,z)$ and $p(x,z)$.

\begin{theorem}%[The variational upper bound]
The difference between maximizing AAE versus VAE corresponds to
\begin{align} 
    \mathcal{L}_{\text{AAE}} - \mathcal{L}_{\text{VAE}} = 
        \mathbb{E}_{p_\mathcal{D}(x)} \left[\mathbb{KL}(q_\phi(z|x)\|q_\phi(z))\right]
\end{align} 
\end{theorem}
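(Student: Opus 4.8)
The plan is to reduce the statement to a one-line cancellation using the optimal-discriminator forms of the two objectives. First I would recall that, with $k=1$ and $p_\theta(x,z)=p_\theta(x|z)p(z)$, the VAE objective is $\mathcal{L}_{\text{VAE}} = \mathbb{E}_{p_\mathcal{D}(x)}\mathbb{E}_{q_\phi(z|x)}\big[\log\tfrac{p_\theta(x|z)p(z)}{q_\phi(z|x)}\big]$, and that at the optimal discriminator $T^*(z)=\log\tfrac{q_\phi(z)}{p(z)}$ the AAE objective takes the form $\mathcal{L}_{\text{AAE}} = \mathbb{E}_{p_\mathcal{D}(x)}\mathbb{E}_{q_\phi(z|x)}\big[\log\tfrac{p_\theta(x|z)p(z)}{q_\phi(z)}\big]$ --- this is exactly the expression derived at the start of the proof of the AAE decomposition theorem, namely Equation~\eqref{eqn:aae_org2}.

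Next I would subtract the two expressions term by term under the common expectation $\mathbb{E}_{p_\mathcal{D}(x)}\mathbb{E}_{q_\phi(z|x)}[\cdot]$: the $\log p_\theta(x|z)$ and $\log p(z)$ contributions cancel, leaving $\mathcal{L}_{\text{AAE}}-\mathcal{L}_{\text{VAE}} = \mathbb{E}_{p_\mathcal{D}(x)}\mathbb{E}_{q_\phi(z|x)}\big[\log\tfrac{q_\phi(z|x)}{q_\phi(z)}\big]$. Finally I would observe that for each fixed $x$ the inner expectation is precisely $\mathbb{KL}(q_\phi(z|x)\,\|\,q_\phi(z))$, and pulling the outer expectation over $p_\mathcal{D}$ back in yields $\mathbb{E}_{p_\mathcal{D}(x)}[\mathbb{KL}(q_\phi(z|x)\|q_\phi(z))]$, as claimed.

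There is essentially no hard step here; the only thing to be careful about is that the identity holds at the \emph{optimal} discriminator $T^*$, so in practice equality is up to the adversarial approximation error in estimating $T^*$ by $T_\psi$, consistent with how $\mathcal{L}_{\text{AAE}}$ is used elsewhere in the paper. If one preferred to avoid invoking the optimal-discriminator form of $\mathcal{L}_{\text{AAE}}$, one could instead start directly from the definitions $\mathcal{L}_{\text{AAE}} := \mathbb{E}_{q_\phi}[\log p_\theta(x|z) - T^*(z)]$ and $\mathcal{L}_{\text{VAE}} := \mathbb{E}_{q_\phi}[\log p_\theta(x|z) + \log p(z) - \log q_\phi(z|x)]$, subtract to get $\mathbb{E}_{q_\phi}[-T^*(z) - \log p(z) + \log q_\phi(z|x)]$, and substitute $T^*(z) = \log q_\phi(z) - \log p(z)$ to land on the same KL term --- a matter of a couple of lines of algebra.
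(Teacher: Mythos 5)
Your proof is correct, and it is a slightly more direct route than the one the paper takes. The paper's proof first invokes Theorem~1 to write $\mathcal{L}_{\text{AAE}} = \log p_\theta(x) + \mathbb{I}_{q_\phi(x,z)}[x,z] - \mathbb{KL}(q_\phi(x,z)\|p(x,z))$ and the standard ELBO identity $\mathcal{L}_{\text{VAE}} = \log p_\theta(x) - \mathbb{KL}(q_\phi(z|x)\|p(z|x))$, subtracts, re-expands each KL and mutual-information term back into expectations of log-ratios, and multiplies the ratios together so that everything telescopes down to $\mathbb{E}_{p_\mathcal{D}(x)}\mathbb{E}_{q_\phi(z|x)}\big[\log\tfrac{q_\phi(z|x)}{q_\phi(z)}\big]$. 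You reach the same integrand-level cancellation by subtracting the two optimal-discriminator forms $\log\tfrac{p_\theta(x|z)p(z)}{q_\phi(z)}$ and $\log\tfrac{p_\theta(x|z)p(z)}{q_\phi(z|x)}$ directly, which avoids the detour through Theorem~1 entirely and makes the argument self-contained (it only needs Equation~\eqref{eqn:aae_org2} and the definition of $\mathcal{L}_{\text{VAE}}$). What the paper's version buys in exchange is that it makes the relationship to Theorem~1's decomposition explicit, so the reader sees how the mutual-information and joint-KL terms conspire to produce the aggregated-posterior KL. Your closing caveat --- that the identity holds at the optimal $T^*$ and only approximately under adversarial estimation --- is also consistent with how the paper itself treats Equation~\eqref{eqn:aae_org2}.
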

\begin{proof}
    \begin{align*}
        \mathcal{L}_{\text{AAE}} - \mathcal{L}_{\text{VAE}} &= 
            \log p(x) + \mathbb{I}_{q_\phi(x,z)}(x,z) -\mathbb{KL}(q_\phi(z,x)\|p(x,z)) - \log p(x) + \mathbb{KL}(q_\phi(z|x)\|p(z|x))\\
            &= \log p(x) + \mathbb{I}_{q_\phi(x,z)}(x,z) -\mathbb{KL}(q_\phi(z,x)\|p(x,z)) + \mathbb{KL}(q_\phi(z|x)\|p(z|x))\\
            &= \mathbb{E}_{p_\mathcal{D}(x)}\mathbb{E}_{q_\phi(z|x)} \left[\log \frac{q_\phi(x,z)}{q_\phi(z)p(x)}\right]\\
                &\qquad\qquad\qquad- \mathbb{E}_{p_\mathcal{D}(x)}\mathbb{E}_{q_\phi(z|x)}\left[\log\frac{p(x,z)}{q_\phi(x,z)}\right] 
                + \mathbb{E}_{p_\mathcal{D}(x)}\mathbb{E}_{q_\phi(z|x)}\left[\log \frac{q_\phi(z|x)}{p(z|x)}\right]\\
            &= \mathbb{E}_{p_\mathcal{D}(x)}\mathbb{E}_{q_\phi(z|x)} \left[\log 
                \left(\frac{q_\phi(x,z)}{q_\phi(z)p(x)}\right) 
                \left(\frac{p(x,z)}{q_\phi(x,z)}\right)
                \left(\frac{q_\phi(z|x)}{p(z|x)}\right)\right]\\
            &= \mathbb{E}_{p_\mathcal{D}(x)}\mathbb{E}_{q_\phi(z|x)} \left[\log \frac{q_\phi(z|x)}{q_\phi(z)}\right]\\
            &= \mathbb{E}_{p_\mathcal{D}(x)} \left[\mathbb{KL}(q_\phi(z|x)\|q_\phi(z))\right]
    \end{align*}
\end{proof}

\begin{proposition}
    For any distribution $p_{\mathcal{D}}(x)$ and $p_\theta(x)$:
    \begin{align*}
        D_{\text{IW-AAE}}(p_{D}, p_\theta) \leq D_{\text{AAE}}(p_{D}, p_\theta) \leq D_{\text{AVB}}(p_{D}, p_\theta)\\
        D_{\text{IW-AAE}}(p_{D}, p_\theta) \leq D_{\text{IW-AVB}}(p_{D}, p_\theta) \leq D_{\text{AVB}}(p_{D}, p_\theta)
    \end{align*}
\end{proposition}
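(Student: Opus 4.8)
The plan is to split the four inequalities into two groups handled by two separate mechanisms, and then chain them. The pairs $D_{\text{IW-AAE}}\le D_{\text{AAE}}$ and $D_{\text{IW-AVB}}\le D_{\text{AVB}}$ come from increasing the number of importance samples, while the pairs $D_{\text{AAE}}\le D_{\text{AVB}}$ and $D_{\text{IW-AAE}}\le D_{\text{IW-AVB}}$ come from passing from the $x$-conditioned posterior $q_\phi(z\mid x)$ to the aggregated posterior $q_\phi(z)=\int q_\phi(z\mid x)p_{\mathcal D}(x)\,dx$.

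For the first group I would invoke Proposition~\ref{prop:iwaae} at $m=1$. When $k=1$ the average $\frac1k\sum_{i=1}^k(\cdot)$ appearing in Equation~\ref{eqn:aiwb3} and Equation~\ref{eqn:IW-AAE} collapses to a single term and $\log(p_\theta(x\mid z_1)\exp(-T_\psi(x,z_1)))=\log p_\theta(x\mid z_1)-T_\psi(x,z_1)$, so $D^1_{\text{IW-AVB}}=D_{\text{AVB}}$ and $D^1_{\text{IW-AAE}}=D_{\text{AAE}}$; the monotonicity in $k$ from Proposition~\ref{prop:iwaae} then gives $D_{\text{IW-AVB}}\le D_{\text{AVB}}$ and $D_{\text{IW-AAE}}\le D_{\text{AAE}}$. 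The mutual-information offset on the left of Equation~\ref{eqn:iwaae_ine} is irrelevant here, being the same additive constant for every $k$.

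For the second group I would use convexity of $D_{\text{GAN}}$. Since $D_{\text{GAN}}(\,\cdot\,\|\,p(z))$ is a pointwise supremum over the discriminator of functionals that are affine in their first argument, it is convex, so Jensen's inequality gives $D_{\text{GAN}}(q_\phi(z)\|p(z))\le\mathbb E_{p_{\mathcal D}(x)}[D_{\text{GAN}}(q_\phi(z\mid x)\|p(z))]$ --- exactly the convexity remark stated just before the proposition. Comparing Equation~\ref{eqn:IW-AAE} to Equation~\ref{eqn:aiwb3} termwise: the reconstruction term $\frac1k\sum_i\log p(x\mid z_i)$ is identical; the GAN term is smaller for the AAE-type objective by the Jensen bound; and, substituting the optimal discriminators $T^*(x,z)$ and $T^*(z)$ from Propositions~\ref{prop1} and~\ref{prop2}, the generative $\max_\theta$ term becomes $\mathbb E[\log\frac1k\sum_j p_\theta(x\mid z_j)\frac{p(z_j)}{q_\phi(z_j\mid x)}]$ for AVB versus $\mathbb E[\log\frac1k\sum_j p_\theta(x\mid z_j)\frac{p(z_j)}{q_\phi(z_j)}]$ for AAE, and since this term enters $D$ with a minus sign, what I need is that the AAE generative term is \emph{at least as large} as the AVB one. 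For $k=1$ this is the identity behind Theorem~\ref{aae:theorem2}: their difference is $\mathbb E_{p_{\mathcal D}(x)}[\mathbb{KL}(q_\phi(z\mid x)\|q_\phi(z))]\ge 0$. For general $k$ I would reroute through the implicit importance-weighted posterior $q^{IW}_\phi$ from the proof of Proposition~\ref{prop1} and extract the analogous nonnegative KL gap. With both the GAN term and the negated generative term pushing the same way, the AAE-type objective is pointwise $\le$ the AVB-type objective in $(q_\phi,\theta)$, hence so are the values after $\min_{q_\phi}$ and $\max_\theta$; chaining with the first group yields $D_{\text{IW-AAE}}\le D_{\text{AAE}}\le D_{\text{AVB}}$ and $D_{\text{IW-AAE}}\le D_{\text{IW-AVB}}\le D_{\text{AVB}}$. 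I expect the main obstacle to be exactly this last verification --- ruling out that replacing $T^*(x,z)$ by $T^*(z)$ secretly enlarges the subtracted generative term enough to cancel the GAN-term gain --- together with making the $q^{IW}_\phi$ bookkeeping rigorous for $k>1$.
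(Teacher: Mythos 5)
Your two-mechanism decomposition is the same one the paper uses --- Jensen-type tightening for the importance-weighted versus plain comparisons, and convexity of $D_{\text{GAN}}$ for the aggregated-posterior versus conditional-posterior comparisons --- but you route two of the four inequalities differently. For $D_{\text{IW-AAE}}\le D_{\text{AAE}}$ the paper does not invoke monotonicity in $k$ with $m=1$; it compares the two objectives at the \emph{same} $k$ and uses $\log\frac1k\sum_j(\cdot)\ge\frac1k\sum_j\log(\cdot)$ on the subtracted generative term, so the importance-weighted objective subtracts more and is therefore smaller. Your $k=1$ collapse plus Proposition~\ref{prop:iwaae} reaches the same conclusion and is arguably cleaner, since it reuses an already-stated result. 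For $D_{\text{AAE}}\le D_{\text{AVB}}$ the paper simply cites Proposition~4 of Bousquet et al.\ rather than re-deriving it through Theorem~\ref{aae:theorem2} as you propose; your KL-gap derivation is a legitimate alternative for $k=1$ but is extra work the paper avoids. Where you are actually \emph{more} careful than the paper is the step $D_{\text{IW-AAE}}\le D_{\text{IW-AVB}}$: the paper's proof applies joint convexity of $D_{\text{GAN}}$ to the regularizer and then writes the subtracted $\max_\theta$ term with the identical factor $\exp(-T(x,z_j))$ on both sides, i.e.\ it never confronts the fact that the IW-AAE generative term uses $T^*(z_j)$ while the IW-AVB one uses $T^*(x,z_j)$. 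You correctly identify this as the crux and sketch a repair via the implicit posterior $q^{IW}_\phi$ and a nonnegative KL gap, but you leave it unverified for $k>1$; that is the one genuine hole in your write-up, and it is a hole the paper's own proof papers over rather than fills. If you carry out the $q^{IW}_\phi$ bookkeeping (or restrict the claim to the form the paper actually proves), your argument is complete.
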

\begin{proof}
    First, we show $D_{\text{IW-AAE}}(p_{D}, p_\theta) \leq D_{\text{AAE}}(p_{D}, p_\theta)$.
    \begin{align*}
        D_{\text{IW-AAE}}(p_{\mathcal{D}}, p_\theta) 
        &= \inf_{q_\phi\in\mathcal{Q}} D_{\text{GAN}}(q_\phi(z), p(z)) 
            - \mathbb{E}_{p_{\mathcal{D}}(x)}
            \mathbb{E}_{z_1 \ldots z_k\sim q_\phi(z|x)}\left[  \frac{1}{k}\sum^{k}_{i=1} \log p_\theta(x|z_i)\right]\\
            &\qquad \qquad  \qquad- \max_\theta 
                    \mathbb{E}_{p_{\mathcal{D}}(x)}
                    \mathbb{E}_{z_1,\ldots, z_k\sim q_\phi(z|x)}
                    \left[\log \frac{1}{k}\sum^k_{j=0} p_\theta (x|z_i)
                    \exp(-T(x,z_j))\right]\\
        &\leq \inf_{q_\phi\in\mathcal{Q}} D_{\text{GAN}}(q_\phi(z), p(z)) 
            - \mathbb{E}_{p_{\mathcal{D}}(x)}
            \mathbb{E}_{z_1 \ldots z_k\sim q_\phi(z|x)}\left[ \frac{1}{k}\sum^{k}_{i=1} \log p_\theta(x|z_i)\right] \\
            &\qquad \qquad  \qquad- \max_\theta 
                    \mathbb{E}_{p_{\mathcal{D}}(x)}
                    \mathbb{E}_{z_1,\ldots, z_k\sim q_\phi(z|x)}
                    \left[ \frac{1}{k}\sum^k_{j=0} \log p_\theta (x|z_i)\right] \\
            &= D_{\text{AAE}}.
    \end{align*}
    Proposition 4 in \cite{Bousquet2017} tells us that $D_{\text{AAE}}(p_{\mathcal{D}}, p_\theta) \leq D_{\text{AVB}}(p_{\mathcal{D}}, p_\theta)$.
    Hence, the first inequality holds true. 
    Now, we show $D_{\text{IW-AAE}}(p_{D}, p_\theta) \leq D_{\text{AIWB}}(p_{D}, p_\theta) \leq D_{\text{AVB}}(p_{D}, p_\theta)$ (**).
    \begin{align*}
        D_{\text{IW-AAE}}(p_{\mathcal{D}}, p_\theta) 
        &= \inf_{q_\phi\in\mathcal{Q}} D_{\text{GAN}}\left(\int_{\mathcal{X}} q_\phi(z|x)p_{\mathcal{D}}(x)dx, p(z)\right) 
            - \mathbb{E}_{p_{\mathcal{D}}(x)}\
            \mathbb{E}_{z_1 \ldots z_k\sim q_\phi(z|x)}\left[\frac{1}{k}\sum^{k}_{i=1} \log  p_\theta(x|z_i)\right]\\
            &\qquad \qquad  \qquad - \max_\theta 
                    \mathbb{E}_{p_{\mathcal{D}}(x)}
                    \mathbb{E}_{z_1,\ldots, z_k\sim q_\phi(z|x)}
                    \left[\log \frac{1}{k}\sum^k_{j=0} p_\theta (x|z_i) \exp(-T(x,z_j))\right]\\
        &\leq \inf_{q_\phi\in\mathcal{Q}} \mathbb{E}_{p_{\mathcal{D}}(x)} \left[ D_{\text{GAN}}(q_\phi(z|x), p(z)) 
            - \mathbb{E}_{z_1 \ldots z_k\sim q_\phi(z|x)}\left[ \frac{1}{k}\sum^{k}_{i=1} \log p_\theta(x|z_i)\right]\right]\\
            &\qquad \qquad  \qquad - \max_\theta 
                    \mathbb{E}_{p_{\mathcal{D}}(x)}
                    \mathbb{E}_{z_1,\ldots, z_k\sim q_\phi(z|x)}
                    \left[\log \frac{1}{k}\sum^k_{j=0} p_\theta (x|z_i) \exp(-T(x,z_j))\right]\\
                    &= D_{\text{IW-AVB}}.
    \end{align*}
    The last inequality is due to the joint convexity of $D_{\text{GAN}}$
    and Jensen's inequality.
    Hence, (**) inequality holds true. 
\end{proof}

\pagebreak

   \begin{algorithm}[H]
    \caption{Pseudocode for training IW-AAE}
    \label{algo:IW-AAE}
    \begin{algorithmic}[1]
        \State Initialize $\theta$, $\phi$, and $\psi$. 

        \While {$\phi$ has not converged}
            \State Sample $\lbrace \epsilon^{(1)}, \ldots, \epsilon^{(NK)} \rbrace \sim \mathcal{N}(0,1)$.
            \State Sample $\lbrace x^{(1)}, \ldots, x^{(N)} \rbrace \sim p_{\mathcal{D}}(x)$.
            \State Sample $\lbrace z^{(1)}, \ldots, z^{(NK)} \rbrace \sim p(z)$.
            \State Sample $\lbrace \tilde{z}^{(k|n)} \rbrace \sim q(z|x^{(n)}, \epsilon^(m))$.

            \State Compute gradient w.r.t $\theta$ in Eq.~\ref{eqn:IW-AAE_org} : 
            \State $\frac{1}{N}\sum^N_{n=1} \nabla_\theta \log \left[\frac{1}{K}\sum^{K}_{k=1} p_\theta \left(x^{(n)}|\tilde{z}^{(k|n)}\right)\right]$
            \State $\qquad\qquad\qquad \exp\left(- T_{\psi}\left(
            \tilde{z}^{(k|n)}\right)\right)\big]$
            \State Compute gradient w.r.t $\phi$ in Eq.~\ref{eqn:IW-AAE_org}.% with $K=1$. 
            \State Compute gradient w.r.t $\psi$ in Eq.~\ref{eqnTstar_z}: 
            \State $\frac{1}{N}\sum^N_{n=1} \nabla_\psi \big[ 
                \log \left( \sigma(T_\psi(\tilde{z}^{(1|n)})))\right)$
            \State $\qquad \qquad + \log \left( 1 -  \sigma(T_\psi(z^{(n)}) \right) \big]$
        \EndWhile
    \end{algorithmic}
    \end{algorithm}

\subsection*{Wasserstein Autoencoder and AAE}
We provide brief background on Wasserestein autoencoder and relation to AAE.

%In order to understand the relationship between Wasserstein autoencoder (AAE and IW-AAE) and log-likelihood, we need to first see the relationship between Wasserstein-2 distance and IW-AAE.

%\begin{itemize}
    Let $c(x,\tilde{x}): \mathcal{X} \times \mathcal{X} \to \mathcal{R}_+$ be any measurable cost function and
        $p(x\sim p_{\mathcal{D}}, \tilde{x} \sim p_\theta)$ is a set of all 
        joint distributions over random variables $(x,\tilde{x})$ with marginals 
        $p_\mathcal{D}$ and $p_\theta$.%\footnote{$\mathcal{X}$ and $X$ denote set and random variable}.
        The Kantorovich's formulation of optimal transport problem \cite{Kantorovich1942} is defined as 
        \begin{align}
            W_c(p_\mathcal{D}, p_\theta) := \inf_{\Gamma \in p(x\sim p_\mathcal{D}, \tilde{x}\sim p_\theta)}
                    \mathbb{E}_{(x,\tilde{x})\sim \Gamma}\left[ c(x,\tilde{x})\right].
        \end{align}
        Case when $c(x,\tilde{x}) = D^p(x,\tilde{x})$ with $(\mathcal{X}, D)$ 
        is a metric space is called {\em p-Waserstein distance}. 

        {\em Bousquet et al.} \cite{Bousquet2017} models the joint distribution class with latent generative modelling
        by formulating the joint distribution as $p(x,\tilde{x}) = \int p_\theta(\tilde{x}|z)q(z|x)p_{\mathcal{D}}(x)dz$\footnote{Note that $p(x,\tilde{x})$ is different from $p(x\sim p_\mathcal{D}, \tilde{x}\sim p_\theta)$
        where $p(x,\tilde{x})$ depends on $p_\theta(x,\tilde{x})$.}. Then, we have 
        \begin{align}
            W_c^\dagger(p_\mathcal{D}, p_\theta) := %\inf_{P \in p(x,\tilde{x})}\mathbb{E}_{(x,\tilde{x})\sim P}\left[ c(x,\tilde{x})\right]
                \inf_{q_\phi: q_\phi(z)=p(z)}\mathbb{E}_{p_\mathcal{D}(x)}\mathbb{E}_{q_\phi(z|x)}\left[ c(x,G_\theta(z))\right]
        \end{align}
        where $\tilde{x}=G_\theta(z)$.
        Given that the generative network $p_\theta$ is probabilistic function, we have
        $W_c(p_\mathcal{D}, p_\theta) \leq W_c^\dagger(p_\mathcal{D}, p_\theta)$.

        In practice, the constraint $q_\phi(z) =p(z)$ has been relaxed by using convex penalty function,  
        \begin{align}
            D_{\text{POT}}(p_\mathcal{D}, p_\theta) := \inf_{q(z|x)\in\mathcal{Q}} \mathbb{E}_{p_\mathcal{D}(x)}\mathbb{E}_{q(z|x)} \left[ c(x,g(z))\right] + \lambda D_{\text{GAN}}(q(z),p(z)).
        \end{align}
        Here, GAN is used for the choice of convex divergence between the prior $p(z)$ and 
        the aggregated posterior $q(z)$ \cite{Bousquet2017}.
        As $\lambda \to \infty$, $D_{\text{POT}}$ converges to $W_c^\dagger(p_\mathcal{D}, p_\theta)$.  
        It turns out that $D_{\text{AAE}}$ is a special case of $D_{\text{POT}}$.
        This happens when the cost function $c$ is squared Euclidean distance and
        $P_G(\tilde{x}|z)$ is Gaussian $\mathcal{N}(\tilde{x}; G_\theta(z),\sigma^2 I)$.

We refer to \cite{Bousquet2017} for the detailed descriptions.

\subsection*{Semi-Supervised Learning Experiments} \label{semi_supervised_learning}
\begin{figure}[htp]
    \includegraphics[width=\linewidth]{./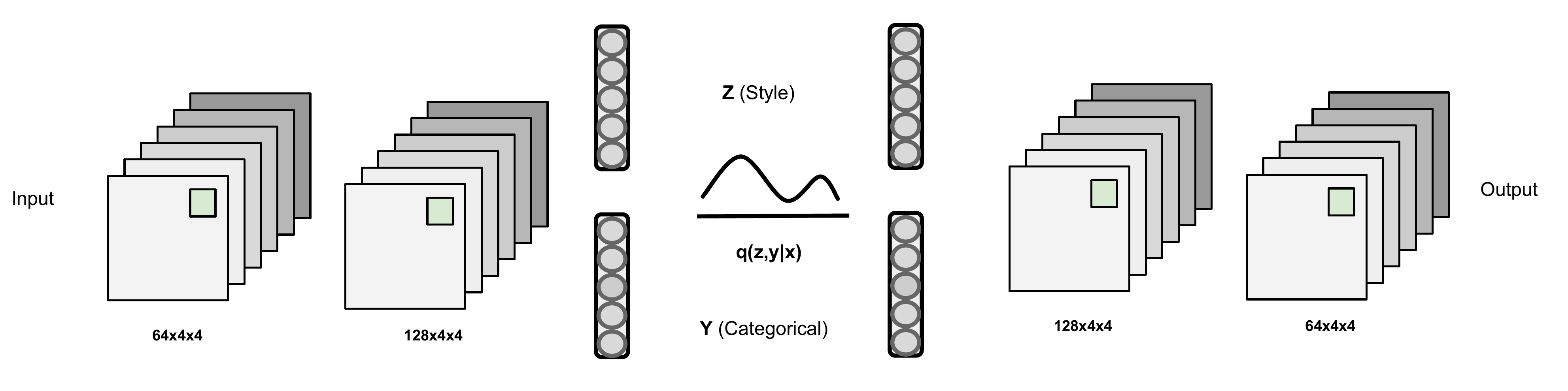}
    \caption{The encoder and decoder network architecture}
\end{figure}             

\begin{figure}[htp]
    \begin{minipage}{0.495\textwidth}
    \includegraphics[width=\linewidth]{./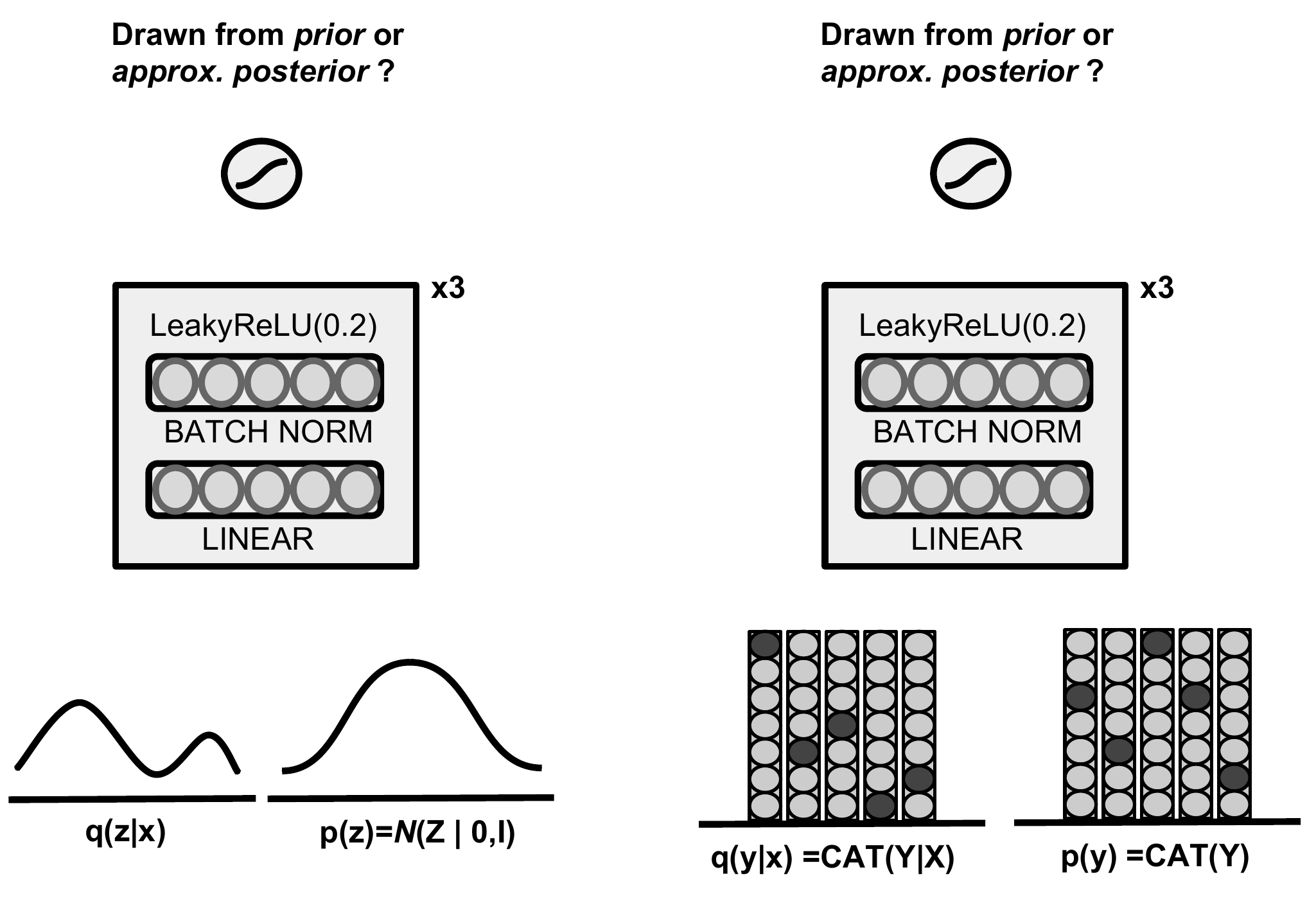}
    \subcaption{IW-AVB}
    \end{minipage}   
    \begin{minipage}{0.495\textwidth}
    \includegraphics[width=\linewidth]{./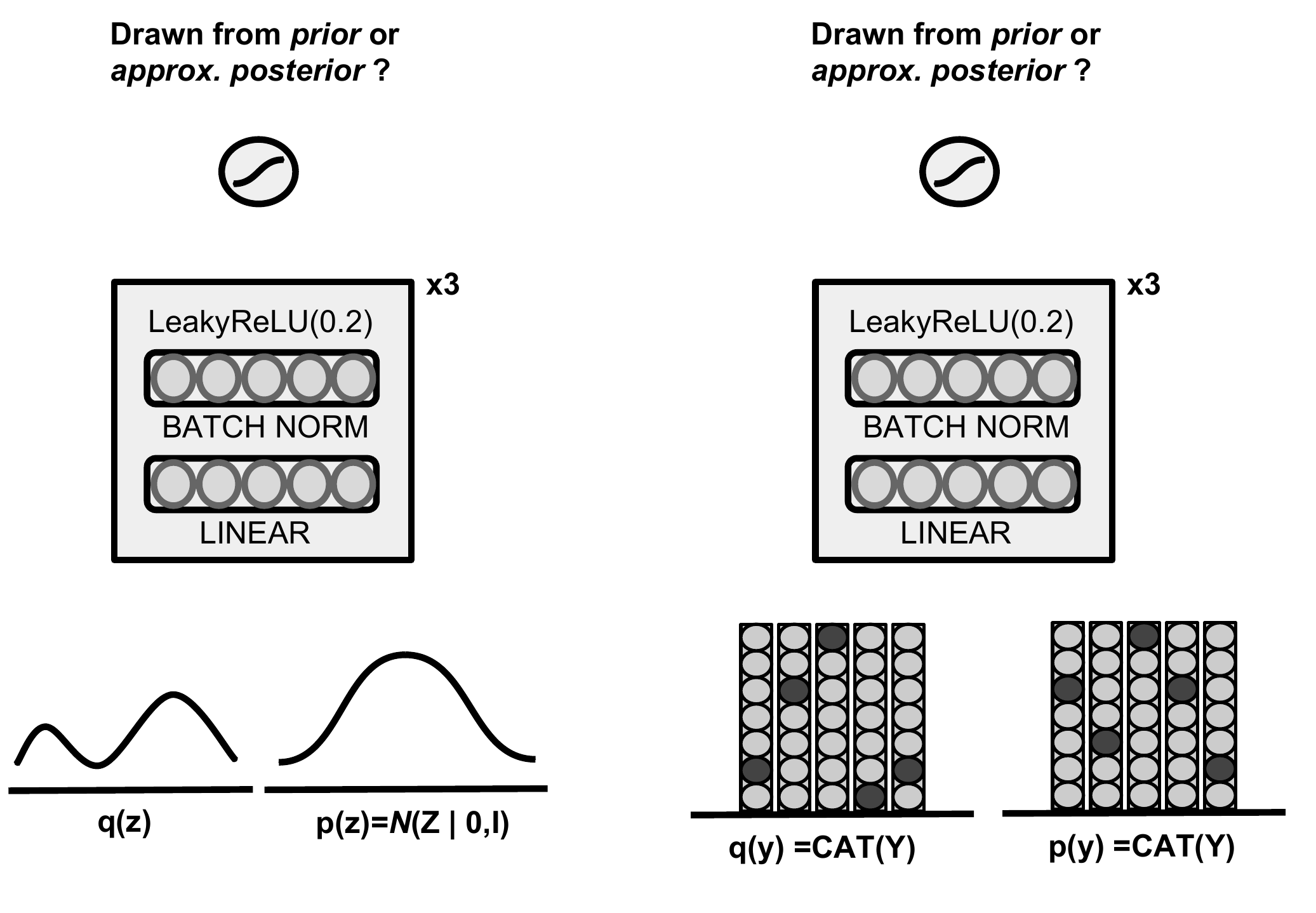}
    \subcaption{IW-AAE}
    \end{minipage}   
    \caption{The adversarial network architecture}
    \label{fig:semi_sup_arch}
    \vspace{-0.2cm}
\end{figure}             

Semi-supervised learning objective for IW-AVB is defined as following:
\begin{align}
    \log p(x) &\geq
       \mathbb{E}_{z_1,\cdots, z_k\sim q_\phi(\bf{z}|\bf{x})}
            \left[ \log \frac{1}{k}\sum^{k}_{i=1} p(x|z_i,y_i)  
            \exp (- T^*(x,z_i,y_i))  \right]\\
    T^*(x,z,y) &= \log q_\phi(z,y|x) - \log p(z,y) \nonumber \\
    &= \max_{T}\left[ 
        \mathbb{E}_{p_{\mathcal{D}}(x)} 
        \left[\mathbb{E}_{q_\phi(z,y|x)}
        \log \sigma (T(x,z,y)) + \mathbb{E}_{p(z,y)}
        \log \left(1- \sigma(T(x,z,y))\right)\right] \right] 
\end{align}

Assume that $p(z,y)=p(z)p(y)$ and $q(z,y|x)=q(z|x)q(y|x)$.
Then, we have 
\begin{multline}
    \log p(x) \geq
       \mathbb{E}_{z_1,\cdots, z_k\sim q_\phi(\bf{z}|\bf{x})}
            \bigg[ \log \frac{1}{k}\sum^{k}_{i=1} p_\theta(x|z_i,y_i)  
            \exp (-T1^*(x,z_i)) 
            \exp (-T2^*(x,y_i)) 
            \bigg]\label{eqn:ss_aiwbo}
\end{multline}
\begin{align}
    T1^*(x,z) &= \log q_\phi(z|x) - \log p(z) \nonumber \\
             &= \max_{T}\left[ 
        \mathbb{E}_{p_{\mathcal{D}}(x)} 
        \left[\mathbb{E}_{q_\phi(z|x)}
        \log \sigma (T1(x,z)) + \mathbb{E}_{p(z)}
        \log \left(1- \sigma(T1(x,z))\right)\right] \right]
        \label{eqn:ss_tstar_xz}\\
    T2^*(x,y) &= \log q_\psi(y|x) - \log p(y) \nonumber \\
             &= \max_{T}\left[ 
        \mathbb{E}_{p_{\mathcal{D}}(x)} 
        \left[\mathbb{E}_{q_\psi(y|x)}
        \log \sigma (T2(x,y)) + \mathbb{E}_{p(y)}
        \log \left(1- \sigma(T2(x,y))\right)\right] \right] 
        \label{eqn:ss_tstar_xy}
\end{align}
The factorization of prior and approximate posterior distributions $p(z,y)=p(z)p(y)$ and $q(z,y|x)=q(z|x)q(y|x)$
were design choice. However, note that it does not need to be factorized such a way. 
Furthermore, we take the same approach for IW-AAE.

\begin{algorithm}[htp]
    \caption{Semi-Supervised IW-AVB Training for MNIST}
    \label{algo:semi_IW-AVB}
    \begin{algorithmic}[1]
        \State Initialize $\theta$, $\psi$, $\phi$, and $\varphi$. 

        \While {$\phi$ has not converged}
            \State Sample $\lbrace \epsilon^{(1)}, \ldots, \epsilon^{(N)} \rbrace \sim \mathcal{N}(0,1)$.
            \State Sample $\lbrace x^{(1)}, \ldots, x^{(N)} \rbrace \sim p_{\mathcal{D}}(x)$.
            \State Sample $\lbrace z^{(1)}, \ldots, z^{(NK)} \rbrace \sim p(z)$.
            \State Sample $\lbrace y^{(1)}, \ldots, y^{(NK)} \rbrace \sim p(y)$.
            \State Sample $\lbrace \tilde{z}^{(k|n)} \rbrace \sim q_\varphi(z|x^{(n)},\epsilon^{m})$.
            \State Sample $\lbrace \tilde{y}^{(k|n)} \rbrace \sim q_\psi(y|x^{(n)})$.

            \State Compute gradient w.r.t $\theta$ in Eq.~\ref{eqn:ss_aiwbo}: 
            \State $\frac{1}{N}\sum^N_{n=1} \nabla_\theta \log \left[\frac{1}{K}\sum^{K}_{k=1} p_\theta \left(x^{(n)}|\tilde{z}^{(k|n)}, \tilde{y}^{(k|n)}\right)\exp\left(-T1_\varphi(x,\tilde{z}^{(k|n)})-T2_\psi(x,\tilde{y}^{(k|n)})\right)\right]$
            \State Compute gradient w.r.t $\phi$ in Eq.~\ref{eqn:ss_aiwbo}: 
            \State $\frac{1}{N}\sum^N_{n=1}\frac{1}{K}\sum^K_{k=1} \nabla_\phi \big[\log p_\theta \left(x^{(n)}|\tilde{z}^{(1|n)}, \tilde{y}^{(1|n)}\right) - T1_{\varphi}\left(x^{(n)}, \tilde{z}^{(1|n)}\right) - T2_{\psi}\left(x^{(n)}, \tilde{y}^{(1|n)}\right)\big]$
            \State Compute gradient w.r.t $\varphi$ in Eq.~\ref{eqn:ss_tstar_xz}: 
            \State $\frac{1}{N}\sum^N_{n=1} \frac{1}{K}\sum^K_{k=1} \nabla_\varphi \big[ 
                \log \left( \sigma(T1_\varphi(x^{(n)}, \tilde{z}^{(1|n)}))\right) + \log \left( 1 -  \sigma(T1_\varphi(x^{(n)}, z^{(n)})) \right) \big]$
            \State Compute gradient w.r.t $\psi$ in Eq.~\ref{eqn:ss_tstar_xy}: 
            \State $\frac{1}{N}\sum^N_{n=1} \nabla_\psi \big[ 
                \log \left( \sigma(T2_\psi(x^{(n)}, \tilde{y}^{(1|n)}))\right) + \log \left( 1 -  \sigma(T2_\psi(x^{(n)}, y^{(n)})) \right) \big]$
            \State Compute gradient w.r.t $\psi$ on Cross-Entropy Loss $\mathcal{L}(x,t)$ where $t$ is ground truth label.
        \EndWhile
    \end{algorithmic}
\end{algorithm}

\begin{algorithm}[htp]
    \caption{Semi-Supervised IW-AVB Training for Modeling Neuron Spikes}
    \label{algo:semi_IW-AVB_neuron}
    \begin{algorithmic}[1]
        \State Initialize $\theta$, $\psi$, and $\phi$. 

        \While {$\phi$ has not converged}
            \State Sample $\lbrace \epsilon^{(1)}, \ldots, \epsilon^{(N)} \rbrace \sim \mathcal{N}(0,1)$.
            \State Sample $\lbrace x^{(1)}, \ldots, x^{(N)} \rbrace \sim p_{\mathcal{D}}(x)$.
            \State Sample $\lbrace y^{(1)}, \ldots, y^{(NK)} \rbrace \sim p(y)$.
            \State Sample $\lbrace \tilde{y}^{(k|n)} \rbrace \sim q_\psi(y|x^{(n)})$.

            \State Compute gradient w.r.t $\theta$ in Eq.~\ref{eqn:ss_aiwbo}: 
            \State $\frac{1}{N}\sum^N_{n=1} \nabla_\theta \log \left[\frac{1}{K}\sum^{K}_{k=1} p_\theta \left(x^{(n)}|\tilde{z}^{(k|n)}, \tilde{y}^{(k|n)}\right)\exp\left(-T_\psi(x,\tilde{y}^{(k|n)})\right)\right]$
            \State Compute gradient w.r.t $\phi$ in Eq.~\ref{eqn:ss_aiwbo}: 
            \State $\frac{1}{N}\sum^N_{n=1}\frac{1}{K}\sum^K_{k=1} \nabla_\phi \big[\log p_\theta \left(x^{(n)}|\tilde{z}^{(1|n)}, \tilde{y}^{(1|n)}\right) - T_{\psi}\left(x^{(n)}, \tilde{y}^{(1|n)}\right)\big]$
            \State Compute gradient w.r.t $\psi$ in Eq.~\ref{eqn:ss_tstar_xy}: 
            \State $\frac{1}{N}\sum^N_{n=1} \nabla_\psi \big[ 
                \log \left( \sigma(T2_\psi(x^{(n)}, \tilde{y}^{(1|n)}))\right) + \log \left( 1 -  \sigma(T_\psi(x^{(n)}, y^{(n)})) \right) \big]$
            \State Compute gradient w.r.t $\psi$ on Cross-Entropy Loss $\mathcal{L}(x,t)$ where $t$ is ground truth label.
        \EndWhile
    \end{algorithmic}
\end{algorithm}

\subsection{Experiments}
 We follow the same experimental procedure as 
\cite{Mescheder2017} for learning generative models on binarized MNIST dataset.
We trained AVB, IW-AVB, AAE, and IW-AAE on 50,000 train examples with 10,000 validation examples,
and measured log-likelihood on 10,000 test examples.
We used 5-layer deep convolutional neural network for the 
        decoder network and we used fully connected 4-layer neural 
        network with 1024 units in each hidden layer for the adversary 
        network. For encoder network is defined such that we can efficiently computes 
        the moments of $q_\phi(z|x)$ by linearly combining learned noise 
        basis vector with the coefficients depend on the data points,
        $z_k = \sum^m_{i=1} v_{i,k}(\epsilon_i)a_{i,k}(x)$
        where $z_k$ are the $k^{th}$ latent variable, $v_{i,k}(\epsilon_i)$
        is the learned noised basis are parameterized with full-connected
        neural network, and $a_{i,k}(x)$ are the coefficient that is
        parameterized using the deep convolutional neural network.
        We set our latent dimension $z$ to be $32$.
        We used adaptive contrast method during the training for all models,
        since it has been shown that adaptive contrast method gives superior performance in \cite{Mescheder2017}

Here is the samples generated on MNIST and FashionMNIST dataset.
\begin{figure*}[t]
    \centering
    \begin{minipage}{0.245\textwidth}
        \includegraphics[width=0.99\linewidth]{./figs/avb_mnist.png}
        \subcaption{AVB}
    \end{minipage}
    \begin{minipage}{0.245\textwidth}
        \includegraphics[width=0.99\linewidth]{./figs/iwavb_mnist.png}
        \subcaption{IW-AVB}
    \end{minipage}
    \begin{minipage}{0.245\textwidth}
        \includegraphics[width=0.99\linewidth]{./figs/aae_mnist_final.png}
        \subcaption{AAE}
    \end{minipage}
    \begin{minipage}{0.245\textwidth}
        \includegraphics[width=0.99\linewidth]{./figs/iwaae_mnist_final.png}
        \subcaption{IW-AAE}
    \end{minipage}\\
    \caption{Samples of generative models from training MNIST dataset. }
\end{figure*}
\begin{figure*}[t]
    \centering
    \begin{minipage}{0.245\textwidth}
        \includegraphics[width=0.99\linewidth]{./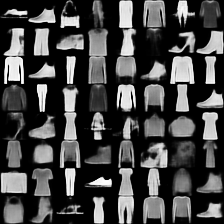}
        \subcaption{AVB}
    \end{minipage}
    \begin{minipage}{0.245\textwidth}
        \includegraphics[width=0.99\linewidth]{./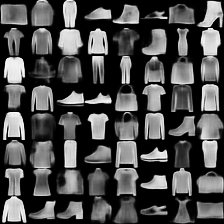}
        \subcaption{IW-AVB}
    \end{minipage}
    \begin{minipage}{0.245\textwidth}
        \includegraphics[width=0.99\linewidth]{./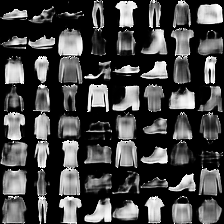}
        \subcaption{AAE}
    \end{minipage}
    \begin{minipage}{0.245\textwidth}
        \includegraphics[width=0.99\linewidth]{./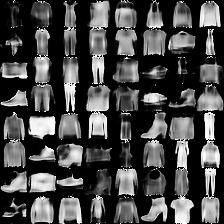}
        \subcaption{IW-AAE}
    \end{minipage}\\
    \caption{Samples of generative models from training FashionMNIST dataset. }
    \label{fig:fmnist_samples}
\end{figure*}

\subsection*{Semi-supervised learning}
     Next, we considered testing IW-AVB and IW-AAE on semi-supervised setting. 
        We used 100 and 1000 labels of MNIST and FahsionMNIST training data.
        We followed the same experimental setup that was used for
        testing semi-supervised learning in \cite{Makhzani2016}.

        In semi-supervised settings, we define the IW-AVB objective as follows: 
        \begin{align}
            \log p(x) \geq
            \mathbb{E}_{z_1,\cdots, z_k\sim q_\phi(\bf{z}|\bf{x})}
                \bigg[ \log \frac{1}{k}\sum^{k}_{i=1} p_\theta(x|z_i,y_i)  
                \exp \left(-T1^*(x,z_i)-T2^*(x,y_i)\right) 
            \bigg]
        \end{align}    
\begin{align}
    T_1^*(x,z) &= \log q_\phi(z|x) - \log p(z) \nonumber \\
    \begin{split}
             = \max_{T_1}\big[ 
        \mathbb{E}_{p_{\mathcal{D}}(x)} 
        \big[\mathbb{E}_{q_\phi(z|x)}
        \log \sigma (T_1(x,z)) \\ + \mathbb{E}_{p(z)}
        \log \left(1- \sigma(T_1(x,z))\right)\big] \big]
    \end{split}
\end{align}
\begin{align}
    T_2^*(x,y) &= \log q_\psi(y|x) - \log p(y)\nonumber\\
    \begin{split}
        = \max_{T_2}\big[ 
        \mathbb{E}_{p_{\mathcal{D}}(x)} 
        \big[\mathbb{E}_{q_\psi(y|x)}
        \log \sigma (T_2(x,y))\\ + \mathbb{E}_{p(y)}
        \log \left(1- \sigma(T_2(x,y))\right)\big] \big] 
    \end{split}
\end{align}
        where we have two adversarial networks $T_1$ and $T_2$.
        The first adversarial network distinguishes between style samples $z$ from 
        $p(z)$ versus $q_\phi(z|x)$, and the second adversarial network 
        distinguishes between label samples $y$ from $p(y)$ versus $q_\psi(y|x)$.
        We take the same approach to define IW-AAE objective as well.
        The depiction of semi-supervised learning framework for IW-AVB and IW-AAE,
        and the derivation of objectives are shown in Supplementary Materials Section~\ref{semi_supervised_learning}.
        
        For our datasets, we assume the data is generated by two types of
        latent variables $z$ and $y$ that comes from Gaussian and Categorical
        distribution, $p(y)=\text{Cat}(y)$ and $p(z)=\mathcal{N}(z|0,I)$.
        The encoder network $q(z,y|x)=q_\phi(z|x)q_\psi(y|x)$ outputs both standard latent variable $z$
        which is responsible for style representation, 
        and one-hot variable $y$ which is responsible for label representation.
        We impose a Categorical distribution for label representation.
        
        For encoder network, we used 2-layer convolutional neural network, followed by 
        fully connected hidden layer that outputs latent variable $z$ and $y$.
        For decoder network, we used single layer fully connected layer that takes $z$ and $y$
        and propagate them to 2-layer convolutional neural network that
        generates the samples. 
        For adversarial network, we used 4-layer fully connected neural 
        network that takes $z$ as inputs for IW-AAE and takes $x$ and 
        $z$ as inputs for IW-AVB.
        The architecture for AVB and IW-AVB has been shared thorough out 
        the experiments, and similarly for AAE and IW-AAE.
        See Figure~\ref{fig:semi_sup_arch} for architecture layout and 
        Algorithm~\ref{algo:semi_IW-AVB} for pseudo-code in the supplementary materials.

\begin{wraptable}{l}{0.5\textwidth}
  \begin{minipage}{0.5\textwidth}
  \centering
  \caption{Semi-Supervised Learning Performance on MNIST}
  \label{tab:semi_supervised_results}
  \begin{tabular}{|l|cc|cc|}\hline
                            & \multicolumn{2}{c}{MNIST} & \multicolumn{2}{|c|}{FMNIST} \\\hline 
                            & 100           & 1000          & 100   & 1000 \\\hline\hline 
      NN                    & 70.55         & 90.16         & 71.34 & 80.13 \\
      AVB                   & 87.14         & 96.70         & 77.08 & 83.49\\
      IW-AVB                & 87.99        & {\bf 97.33}   & 77.03 & {\bf 83.94}\\
      AAE                   & 88.13         & 96.73         & {\bf 77.68} & 83.41 \\        
      IW-AAE                & {\bf 91.08}  & 97.11         & 77.67 & 83.93 \\      \hline
  \end{tabular}
  \end{minipage}
\end{wraptable}

        We measure the accuracy of the classifier $q_\psi(y|x)$ to measure the 
        semi-supervised learning performance.
        We also include supervised trained two-layer fully connected neural 
        network (NN) with ReLU units as a baseline. 
        The results are shown in Table~\ref{tab:semi_supervised_results}.
        We observe that IW-AAE performs the best and followed by IW-AVB for
        100 label settings and IW-AVB performs the best and followed by IW-AAE
        for 100 label settings.

%\subsection{Fly Motion Prediction (Time Series) Experiments}

%\begin{figure}[htp]
%    \includegraphics[width=\linewidth]{./figs/IW-AAE_FLY_NET.pdf}
%    \caption{The encoder decoder architecture for fly motion network.}
%    \label{fig:fly_arch}
%    \vspace{-0.2cm}
%\end{figure}             

%Given that there are spike labels only for a small number of neurons, our objective is to study whether modeling spike activities (using spike labels) from one neuron can be transferred to predicting spike activities of other neurons (without spike labels). This is an important question to investigate since we want to know the method's ability to generalize cross different cells, cell-types, calcium indicators or even microscopes.  

\subsection*{Neural Spike Modeling}\label{spike_inf}

{\bf Model Architecture}
During the experiments, we used {\em biophyiscal model} as a generative models for spike-inference experiments. In particular, we use is a linear model where the calcium process is modeled as an exponential decay, and the fluorescent measurement is a scaled readout of the calcium process, 
\begin{eqnarray}
    \frac{dc}{dt} = -\frac{1}{\tau} c+ s(t) \label{eqn:calcium_dyn}\\ 
    f(t) = \alpha c(t) + \beta + \epsilon(t)
\end{eqnarray}
where $s(t) = \sum_{i} \delta (t - t_i)$ is the spike train, $\tau$ is the decay constant, $\alpha$ is the amplitude of the fluorescence for one spike, and $\beta$ is the baseline. $\epsilon \sim \mbox{Normal}(0, \sigma)$ is noise drawn from Gaussian distribution. Thus our generative parameters are $\theta=\{\tau, \alpha, \beta, \sigma\}$. We discretize the dynamic equation (\ref{eqn:calcium_dyn}) using Euler method with discretization step $~0.16 \mbox{ms}$ (60Hz). 

The four layer convolutional neural network was used as encoder with ReLU activations. The filter size of convolutional neural network were 31, 21, 21, 11 and number of filters on each layers were 20, 20, 20, 20 respectively. For AVB, AAE, IW-AAE, and IW-AVB, we injected extra noise channel to the input on first two layers of convolutional layer. 

{\bf Baseline Models}
VIMCO-F and VIMCO-NF are the baseline models that we use for the spike inference problem. Both the model use a multi-layer convolutional neural network as the encoder model (4 hidden layers, each layer has 20 one-dimensional filters. The sizes are 31, 31, 21, 21 for first hidden layers to last hidden layers.). The VIMCO-F model uses factorized posterior distribution (independent samples are drawn from the Bernoulli distribution dictated by the output of convolutional neural network. On the other hand, the VIMCO-NF model has an autoregressive layer that add correlations among samples, hence non-factorized posterior distribution. The autoregressive layer draw a new sample at time $t$ conditioned on the past ten samples $t-1, \dots, t-10$. Both the generative model and the inference network is trained unsupervisedly using variational inference for monte carlo objectives (VIMCO) approach~\cite{mnih2016variational}.

\begin{figure*}[t]
    \centering
    \includegraphics[width=0.99\linewidth]{./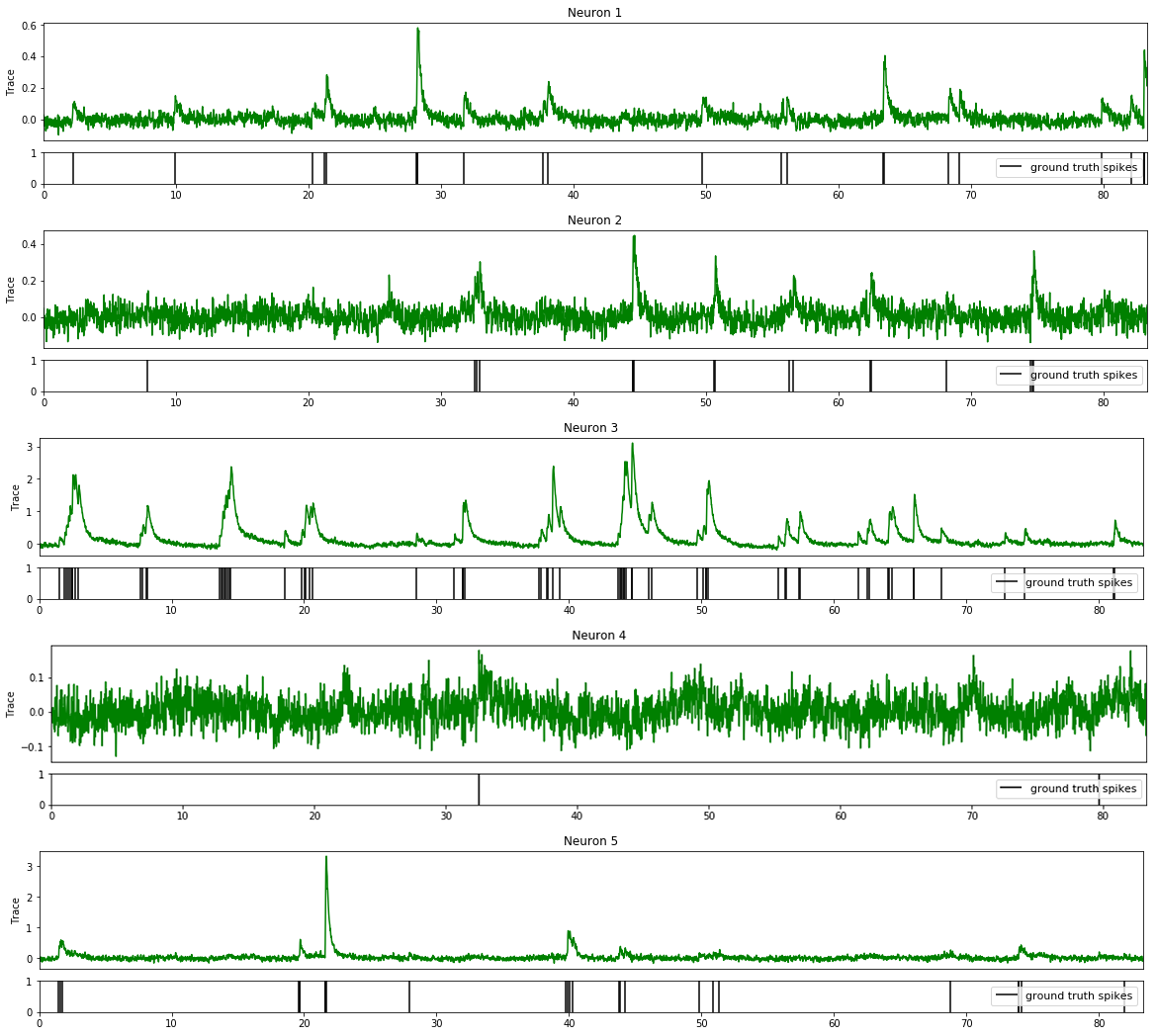}
    \caption{Neuron Fluorescence Traces and Spikes}
    \label{fig:neuron_trace}
\end{figure*}

\end{document}